\theoremstyle{plain}
\newtheorem{theorem}{Theorem}[section]
\newtheorem{proposition}[theorem]{Proposition}
\theoremstyle{definition}
\theoremstyle{remark}
\def\x{{\mathbf x}}
\def\u{{\mathbf u}}
\def\y{{\mathbf y}}
\def\f{{\mathbf f}}
\def\w{{\mathbf w}}
\def\m{{\mathbf m}}
\def\km{{K}}
\def\kv{{\mathbf k}}
\def\R{{\mathbb{R}}}
\def\N{{\mathbb{N}}}
\def\E{{\mathbb{E}}}
\def\X{{\mathcal{X}}}
\def\GP{{\mathcal{GP}}}
\DeclareMathOperator{\Tr}{Tr}
\DeclareMathOperator{\diag}{diag}
\DeclareMathOperator*{\argmax}{arg\,max}
\renewcommand{\epsilon}{\varepsilon}
\def\vepsilon{{\bm{\epsilon}}}
\definecolor{blue_plot}{RGB}{55,  126, 184}
\definecolor{orange_plot}{RGB}{255, 127, 0}
\definecolor{green_plot}{RGB}{77,  175, 74}
\definecolor{pink_plot}{RGB}{247, 129, 191}
\icmltitlerunning{Robust and Conjugate Gaussian Process Regression}
\begin{document}

\twocolumn[
\icmltitle{Robust and Conjugate Gaussian Process Regression}



\icmlsetsymbol{equal}{*}

\begin{icmlauthorlist}
\icmlauthor{Matias Altamirano}{ucl}
\icmlauthor{François-Xavier Briol}{ucl}
\icmlauthor{Jeremias Knoblauch}{ucl}
\end{icmlauthorlist}

\icmlaffiliation{ucl}{Department of Statistical Science, University College London, London, United Kingdom}

\icmlcorrespondingauthor{Matias Altamirano}{matias.altamirano.22@ucl.ac.uk}
\icmlcorrespondingauthor{François-Xavier Briol}{f.briol@ucl.ac.uk}
\icmlcorrespondingauthor{Jeremias Knoblauch}{j.knoblauch@ucl.ac.uk}

\icmlkeywords{Gaussian processes, generalised Bayes, score-matching, robust Bayes}

\vskip 0.3in
]



\printAffiliationsAndNotice{\icmlEqualContribution} 

\begin{abstract}
To enable closed form conditioning, a common assumption in Gaussian process (GP) regression is independent and identically distributed  Gaussian observation noise. 
This strong and simplistic assumption is often violated in practice, which leads to unreliable inferences and uncertainty quantification. 
Unfortunately, existing methods for robustifying GPs break closed-form conditioning, which makes them less attractive to practitioners and significantly more computationally expensive. 
In this paper, we demonstrate how to perform provably robust and conjugate Gaussian process (RCGP) regression at virtually no additional cost using generalised Bayesian inference. 
RCGP is particularly versatile as it enables exact conjugate closed form updates in all settings where standard GPs admit them.
To demonstrate its strong empirical performance, we deploy RCGP for problems ranging from Bayesian optimisation to sparse variational Gaussian processes.
\end{abstract}
\section{Introduction}

GPs \citep{williams2006gaussian} are one of the most widely used methods for Bayesian inference on latent functions, especially when uncertainty is required. 
They have numerous appealing properties, including that the prior is relatively interpretable and can be elicited through a choice of mean and covariance functions, as well as the fact that they have closed form posteriors under Gaussian likelihoods. 
Their convergence is also well understood, even under prior misspecification  \citep{Wynne2020}.
Thanks to these advantages, GPs have found applications in diverse problems including single- and multi-output regression \citep{bonilla2007multi, moreno2018heterogeneous}, emulation of expensive simulators \citep{Santner2018}, Bayesian optimisation \citep{shahriari2015taking,Garnett2021} and Bayesian deep learning \citep{damianou2013deep, salimbeni2019deep, dutordoir2020bayesian}. Their use is enabled by a plethora of packages including \texttt{GPflow} \citep{GPflow2017} \texttt{GPyTorch} \citep{Gardner2018},  \texttt{BoTorch} \citep{balandat2020botorch}, \texttt{ProbNum} \citep{Wenger2021} and \texttt{emukit} \citep{emukit2023}.

 \begin{figure}[t!]
\centering
\includegraphics[width=\columnwidth]{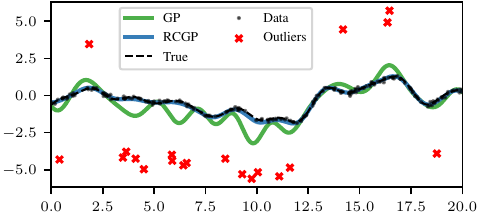}
 \vspace*{-0.7cm}
\caption{The posterior predictive mean of a GP (\textcolor{green_plot}{\textbf{green}}) and the RCGP (\textcolor{blue_plot}{\textbf{blue}}) on a synthetic dataset where 10\% of the data are uniformly generated outliers. 
Unlike the RCGP, the GP is adversely affected.
}

\label{fig:synthetic}
\vspace{-3mm}
\end{figure}

By far the most common use of GPs is in regression. 
Here, the observations correspond to noisy realisations from an unknown
latent function that is assumed to be drawn from a GP prior. 
To obtain a conjugate GP posterior distribution on the latent function, the observation noise is usually assumed to be Gaussian.
While assuming Gaussian observation noise makes the posterior tractable, it also makes inferences non-robust. 
In particular, Gaussian noise makes GPs highly susceptible to extreme values, heterogeneities, and outliers.
 This is illustrated in \cref{fig:synthetic} on a synthetic dataset corrupted with outliers: The standard GP is adversely affected, leading to considerable deviations between the inferred function and the ground truth.
In many real-world applications and data sets, the presence of outliers is almost inevitable.
They can occur for a variety of different reasons, including due to faulty measurements, broken sensors, extreme weather events, stock market sell-offs, or genetic mutations.

\paragraph{Existing Work}
The lack of robustness in GPs is a well-known fundamental challenge for their widespread application, and a number of methods have been proposed to address this.
 Broadly, these fall into two categories.
 The first replaces the Gaussian measurement error with more heavy-tailed error distributions such as Student's $t$  \citep{jylanki2011robust,ranjan2016robust}, Laplace  \citep{kuss2006gaussian}, Huber densities \citep{algikar2023robust}, data-dependent noise \citep{goldberg1997regression}, or mixture distributions  \citep{naish2007robust,stegle2008gaussian,daemi2019identification,Lu2023}. 
 Heavy tails allow these distributions to better accommodate outliers, rendering them more robust to  corruptions.
Their main limitation lies in their computational cost, as
abandoning Gaussian noise nullifies one key advantage of GPs: conjugacy. 
As a consequence, these techniques rely on approximations via variational methods or Markov chain Monte Carlo.
This decreases their accuracy while increasing computational costs.  
The second set of approaches consists in  removing outlying observations before using a standard GP with Gaussian noise \citep{li2021robust,park2022robust,andrade2023robust}.
While such approaches use conjugacy, it can be challenging to detect outliers in irregularly spaced data or higher dimensions. Outlier detection also tends to be computationally costly, and often requires estimating large numbers of parameters.

In this paper, we propose a new and third way to achieve robustness that uses generalised Bayesian inference \citep[see e.g.][]{bissiri2016general, Jewson2018,knoblauch2019generalized}.
In doing so, we significantly improve upon an earlier attempt in this direction due to  \citet{knoblauch2019robust} that was applicable only for variational deep GPs, lacked closed form solutions, and was based on hyperparameters that were difficult to choose. 
In line with the ideas of generalised Bayesian methods, we will not modify the Gaussian noise model. 
Instead, we change how information is assimilated, and leverage robust loss functions instead of  robust error models.

\paragraph{Contributions} This paper proposes a novel robust and conjugate Gaussian process (RCGP) inspired by a generalised Bayesian inference scheme proposed in \citet{altamirano23robust}. 
The posteriors rely on a generalised form of score matching \citep{Hyvarinen2006, barp2019minimum}, which effectively down-weights outlying observations. 
The resulting inference resolves the trade-off between robustness and computation inherent in existing methods:  it is robust in the sense of \citet{huber2011robust} [\cref{prop:robustness}] while retaining closed form solutions for both its posterior and posterior predictive [\cref{prop:RCGP_derivation}].
Additionally---and unlike other robust GPs---RCGPs can easily be plugged into various GP techniques such as sparse variational GPs \citep{titsias2009variational,hensman2013gaussian} [\cref{prop:ELBO}], deep GPs \citep{damianou2013deep}, multi-output GPs \citep{bonilla2007multi}, and Bayesian optimisation \citep{shahriari2015taking} [\cref{prop:acquisition_functions}]. 
Finally, even in settings where robustness is not required, our experiments show that  RCGPs performs as well as standard GPs---raising the possibility that RCGPs may  become a  preferred default choice over GPs in the future.

The remainder of the paper reviews GPs and generalised Bayesian inference (\cref{sec:background}), introduces RCGPs and proves their robustness (\cref{sec:methodology}), and investigates their empirical performance and versatility for a range of experiments  (\cref{sec:experiments}).

\section{Background}
\label{sec:background}
Our method applies the logic of generalised Bayesian posteriors to GPs. Here, we briefly explain the concepts relevant to understanding this interface.

\paragraph{Gaussian Processes}
\label{sec:GP}
Let $\y=(y_1,\ldots,y_n)^\top$ denote $n$  observations with  covariates $\x=(x_1,\ldots,x_n)^\top$, where $y_i\in\mathcal{Y} \subseteq \R$ and $x_i\in\mathcal{X} \subseteq \mathbb{R}^d$. 
While we take $\mathcal{Y}\subseteq \R$ for simplicity, our method can be straightforwardly generalised to  multi-output regression (i.e., $\mathcal{Y} \subseteq \mathbb{R}^T$). 
We consider a regression setting where the noisy observations $\y$ come from a latent function $f:\mathcal{X} \rightarrow \mathbb{R}$: 
\vspace{-2mm}
\begin{IEEEeqnarray}{rCl}
y_i = f(x_i) + \epsilon_i.\nonumber
\end{IEEEeqnarray}
Here, $\vepsilon=(\epsilon_1, \ldots,\epsilon_n)^\top \in \mathbb{R}^n$ are independent observation errors. 
We place a GP prior on $f$, so that $f\sim \GP(m,k)$ with $m:\X \rightarrow \mathbb{R}$ and $k:\X \times \X \rightarrow \mathbb{R}$ being mean and kernel functions.  These functions determine key properties in the draws from the GP such as differentiability, periodicity, long-range correlation or stationarity, and are parameterised by $\theta \in \Theta \subseteq \mathbb{R}^p$.
Throughout, we write 
$\f=(f(x_1),\ldots,f(x_n))^{\top}$, and use the fact that the GP prior implies the Gaussian prior
$p(\f|\x)=\mathcal{N}(\f; \m,\km)$, where $\km$ is the matrix with $K_{ij} = k(x_i,x_j)$  and $\m = (m(x_1),\ldots,m(x_n))^{\top}$.

Finally, while there are various options for modelling observation error, almost all of them break conjugacy. 
The main exception is the choice $\vepsilon \sim \mathcal{N}(0, \sigma^2 I_{n})$ where $I_n$ is an $n \times n$ identity matrix (or equivalently $p(\y|\f,\x)=\mathcal{N}(\y; \f, \sigma^2 I_{n})$).
This leads to the posterior
\begin{IEEEeqnarray}{rCl}
p(\f|\y,\x) &=& \mathcal{N}(\f; \mu, \Sigma),\nonumber\\
\mu &=& \m +\km (\km+\sigma^{2} \mathcolor{green_plot}{I_{n}})^{-1}(\y-\m)  \nonumber\nonumber,\\
\Sigma &=&  \km (\km+ \sigma^{2} \mathcolor{green_plot}{I_{n}})^{-1} \sigma^{2} \mathcolor{green_plot}{I_{n}} \nonumber.
\end{IEEEeqnarray}
A key quantity of interest is then the posterior predictive over $f_\star$, the value $f(x_\star)$ at a new point $x_\star \in \mathcal{X}$:
\vspace{-2mm}
\begin{IEEEeqnarray}{rCl}    p(f_{\star}|x_{\star},\x,\y) & = & \int_{\mathbb{R}^n} p(f_{\star}|x_{\star},\f,\x,\y)p(\f|\y,\x)d\f \nonumber\\
    &=&\mathcal{N}(f_{\star};\mu_{\star},\Sigma_{\star}),\nonumber\\ 
    \mu_{\star}
    &=&m_\star + \kv_{\star}^{\top}(\km+\sigma^{2}\mathcolor{green_plot}{I_{n}})^{-1}(\y-\mathcolor{green_plot}{\m}),\nonumber\\
    \Sigma_{\star}
    &=& k_{\star\star} - \kv_{\star}^{\top}(\km+\sigma^{2} \mathcolor{green_plot}{I_{n}})^{-1}\kv_{\star},\nonumber
\end{IEEEeqnarray}
for  $\kv_{\star} = (k(x_\star, x_1),\ldots,k(x_\star, x_n))^{\top}$, $k_{\star\star} = k(x_\star,x_\star)$, $m_\star=m(x_\star)$. 
Though the posterior also depends on  $\theta$ and $\sigma^2$, we omit this for brevity.
Commonly, they are chosen to maximise the marginal likelihood
\begin{IEEEeqnarray}{rCl}
    p(\y| \x,\theta,\sigma^2) &=&  \int_{\mathbb{R}^n} p(\f | \x, \theta,\sigma^2) p(\y| \f, \x, \theta,\sigma^2) d\f\nonumber \\
    &=&  \mathcal{N}(\y ; \m,\km+\sigma^{2}I_{n}).\nonumber
\end{IEEEeqnarray}
Alternatively, there are computationally efficient approaches for leave-one-out cross-validation, and computationally demanding Markov chain Monte Carlo methods 
for hierarchical Bayes  
\citep[see][Section 5]{williams2006gaussian}.

If $\f$ and $\vepsilon$ are both modelled as Gaussians, GP regression is conjugate, so that
posterior, posterior predictive, and marginal likelihood can all be obtained in closed forms. 
 These operations have $O(n^3)$ computational and $O(n^2)$ storage cost, but are exact.
 For this reason, practitioners often model  data using Gaussian errors---even 
 when this assumption is wholly inappropriate and yields severe misspecification. 
 Note that conjugacy also holds for GP interpolation---when $\vepsilon$ is a Dirac measure at $(0,\ldots,0)^\top$---in which case all formulae above remain correct for $\sigma=0$. 
 However, this assumes that the observations $\y$ are noise-free, which is  even more susceptible to model misspecification than Gaussianity.

\paragraph{Generalised Bayesian Inference}
\label{sec:GBI}

If a statistical model is misspecified so that the model cannot correctly describe the true data-generating mechanism, standard Bayesian updating is not the optimal way of integrating prior information with data \citep{zellner1988optimal}.
Indeed, standard Bayes results in miscalibrated uncertainties and misleading inferences.
In the parametric setting,  a line of research has tackled this through generalised Bayesian methodology \citep[see e.g.][]{grunwald2012safe, Hooker2014,  bissiri2016general, Ghosh2016, Jewson2018, miller2018robust, knoblauch2019generalized, Miller2019, fong2021martingale,Jewson2021, matsubara2021robust}.
Recently, generalised Bayesian posteriors have also been proposed for the non-parametric case \citep[see e.g.][]{knoblauch2019robust, wild2022generalized}. 
For regression, they take the form
\begin{IEEEeqnarray}{rCl}
    p_{\beta}^{L}(\f | \y,\x)\propto p(\f|\x) \exp\big(-\beta n L_n(\f,\y,\x)\big),
    \label{eq:gen-bayes}
\end{IEEEeqnarray}
where $\propto$ denotes equality up to a multiplicative constant not depending on $\f$. 
The \emph{learning rate} $\beta>0$ is a scaling parameter that determines how quickly the posterior learns from data, and $L_n:\mathcal{Y}^n \times \mathcal{Y}^n \times \mathcal{X}^n \rightarrow \mathbb{R}$ is a loss connecting the data and the posited statistical model. 
Most choices for $L_n$ are estimators of statistical divergences between the true data-generating process, $p_0$, and the model, such as the kernel Stein discrepancy \citep{matsubara2021robust}, $\beta$-divergences \citep{knoblauch2018doubly}, maximum mean discrepancies \citep{cherief2020mmd}, and Fisher divergences \citep{altamirano23robust,matsubara2022generalised}. 

The distributions in \eqref{eq:gen-bayes} are called \textit{generalised} posteriors since they recover the standard Bayes posterior for $\beta=1$ and $L_n(\f,\y,\x) = -\log p(\y|\f,\x)$.
By instead choosing $L_n$ to be a robust loss, generalised Bayesian inference has enhanced  applications including filtering \citep{boustati2020generalised,Duran-Martin2024}, changepoint detection  \citep{knoblauch2018doubly, altamirano23robust}, deep Gaussian processes \citep{knoblauch2019robust}, doubly-intractable problems \citep{matsubara2022generalised, matsubara2021robust} and Bayesian neural networks \citep{futami2018variational}. 
In addition, generalised posteriors have been leveraged for computational efficiency. 
For instance, \citet{matsubara2021robust,matsubara2022generalised} used generalised posteriors for accelerated computation with unnormalised models in both continuous and discrete domains. 
Similarly, \citet{schmon2020generalized}, \citet{Dellaporta2022}, \citet{Pacchiardi2021},  \citet{Legramanti2022}, and \citet{frazier2024impact} deployed them for simulation-based inference.
%

\section{Methodology}
\label{sec:methodology}
We now present RCGPs in three steps.
First we introduce our loss and explain how it ensures conjugacy.
Second, we provide formal robustness guarantees.
Finally, we show how to select hyperparameters.

\paragraph{The Loss Function}
\citet{altamirano23robust} present a posterior based on a generalised score matching loss \citep{Hyvarinen2006,Lyu2009,Yu2022} due to \citet{barp2019minimum}. 
The resulting posterior is provably robust, and conjugate for exponential family models.
Importantly however, it is not applicable for regression settings and covariate-dependent models. 
To rectify this, we follow \citet{xu2022generalized} and leverage the tower property of expectations. Using this and denoting by $p_{0,x}$ the marginal distribution of the covariates, score matching losses in our setting lead to 
\begin{IEEEeqnarray}{rCl}
\label{eq:divergence}
    \E_{X\sim p_{0,x}}\left[ 
        \E_{Y \sim p_{0}(\cdot|X)}\left[
        \|(s_{\text{model}} - s_{\text{truth}})(X,Y)\|_{2}^{2}
        \right]
    \right], \nonumber
\end{IEEEeqnarray}
where $s_{\text{truth}}(x,y) = \nabla_{y} \log p_0(y|x)$ is the score function of the true data-generating conditional density, $s_{\text{model}}$ the score function of our model. In the case of GP regression, this is $s_{\text{model}}(x,y) = \sigma^{-2}(f(x)-y)$.
As in \citet{altamirano23robust}, we  instead use a weighted generalisation due to \citet{barp2019minimum}:
\begin{equation}
\label{eq:weighted-fd}
\resizebox{\minof{\width}{.9\hsize}}{!}{$\E_{X\sim p_{0,x}}\left[ \E_{Y \sim p_{0}(\cdot|X)}\left[\|\big(w(s_{\text{model}} - s_{\text{truth}})\big)(X,Y))\|_{2}^{2}\right]\right].$}
\end{equation}
Here, $w:\X\times\mathcal{Y}\to\mathbb{R}\setminus\{0\}$ is a weighting function depending on both $x$ and $y$, and we discuss how it should be chosen at the end of this section. 
Evaluating \eqref{eq:weighted-fd} would require the unknown score $s_{\text{truth}}$.
Luckily, under mild smoothness and boundary conditions \citep{liu2022estimating}, 
 we can use integration by parts to rewrite it
 up to a constant not depending on $f$ as
 \begin{IEEEeqnarray}{rCl}
 \resizebox{\minof{\width}{\hsize}}{!}{%
    $\E_{X\sim p_{0,x}}\left[\E_{Y \sim p_{0}(\cdot|X)}\left[\big((w s_{\text{model}})^{2}  +2\nabla_{y}(w^{2} s_{\text{model}})\big)(X, Y)\right]\right].$ \nonumber}
\end{IEEEeqnarray}
 Importantly, this expression no longer depends on $s_{\text{truth}}$, and only features $p_0$ through an expectation. 
This leads to a natural estimator---and the proposed loss function---which is given by 
\begin{IEEEeqnarray}{rCl}
    L^{w}_n(\f,\y,\x) & = &\dfrac{1}{n}\sum_{i=1}^{n} \big((w s_{\text{model}})^{2}  +2\nabla_{y}(w^{2} s_{\text{model}})\big)(x_i, y_i). \nonumber 
\end{IEEEeqnarray}
While we have motivated the loss using the marginal $p_{0,x}$ for simplicity, $p_{0,x}$ can be replaced with any other measure over $\x$.
The active learning setting is a relevant example for this, and we use $L_n^w$ in a Bayesian optimisation experiment in \cref{sec:experiments} to showcase this.

Finally, when the model is Gaussian, i.e. $p(\y|\f,\x)=\mathcal{N}(\y; \f, \sigma^2 I_{n})$, this loss function becomes quadratic in $f$ as follows:
\begin{IEEEeqnarray}{rCl}
    L^{w}_n(\f,\y,\x) & = & \f^\top A_n \f +b_n^\top \f +c_n \nonumber 
\end{IEEEeqnarray}
for some $A_n,b_n,c_n$ in \cref{app:proof_RCGP_derivation}. This follows the same idea as in  \citet{matsubara2021robust,altamirano23robust}, and will be crucial to obtain closed form solution for both the posterior and predictive.

 \paragraph{Robust and Conjugate Gaussian Processes}
 Based on $L_n^w$, we propose the RCGP posterior
\begin{IEEEeqnarray}{rCl}
    p^{w}(\f | \y, \x) &\propto& p(\f) \exp\{ -n L_n^{w}(\f, \y, \x)\}, \nonumber
\end{IEEEeqnarray}
where we absorb $\beta$ into $w$. 
Considering the same setting as for the standard GP in \cref{sec:GP}, the RCGP posterior and its posterior predictive have closed forms. To state them, take $\diag(\mathbf{v})$ as the $d \times d$ diagonal matrix $D$ so that $D_{ij} = 0$ if $i\neq j$ and $D_{ii} = v_i$. 
\begin{proposition}\label{prop:RCGP_derivation}
Suppose $f \sim \mathcal{GP}(m,k)$ and  $\vepsilon \sim \mathcal{N}(0, I_n \sigma^2)$. Then, the RCGP posterior is 
\begin{IEEEeqnarray}{rCl}
    & & p^{w}(\f | \y, \x)  = \mathcal{N}(\f;\mu^R,\Sigma^R), \nonumber \\
\mu^R &=& \m+ \km\left(\km+\sigma^{2}\mathcolor{blue_plot}{J_{\w}}\right)^{-1}\left(\y-\mathcolor{blue_plot}{\m_\w}\right)  \nonumber\nonumber,\\
\Sigma^R &=&  \km\left(\km+\sigma^{2}\mathcolor{blue_plot}{J_{\w}}\right)^{-1}\sigma^{2}\mathcolor{blue_plot}{J_{\w}} \nonumber,
\end{IEEEeqnarray}
for $\w = (w(x_1,y_1),\ldots,w(x_n,y_n))^\top$, $\m_\w = \m +\sigma^{2}\nabla_{y}\log(\w^{2})$ and $J_{\w} =\diag(\frac{\sigma^{2}}{2}\w^{-2})$. 
The RCGP's posterior predictive over $f_{\star} = f(x_{\star})$ at $x^{\star} \in \mathcal{X}$ is 
\begin{IEEEeqnarray}{rCl}
    p^{w}(f_{\star}|x_{\star},\x,\y) & = & \int_{\mathbb{R}^n} p(f_{\star}|x_{\star},\f,\x,\y)p^{w}(\f|\y,\x)d\f \nonumber\\
    &=&\mathcal{N}(f_{\star};\mu_{\star}^{R},\Sigma_{\star}^{R}),\nonumber
\end{IEEEeqnarray}
\vspace{-8mm}
\begin{IEEEeqnarray}{rCl}
    \mu_{\star}^{R}
    &=&m_\star + \kv_{\star}^{\top}\left(\km+\sigma^{2}\mathcolor{blue_plot}{J_{\w}}\right)^{-1}\left(\y-\mathcolor{blue_plot}{\m_\w}\right),\nonumber\\
    \Sigma_{\star}^{R}
    &=& k_{\star\star}-\kv_{\star}^{\top}(\km+\sigma^{2}\mathcolor{blue_plot}{J_{\w}})^{-1}\kv_{\star}.\nonumber
\end{IEEEeqnarray}
\end{proposition}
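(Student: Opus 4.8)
The plan is to exploit that, under a Gaussian noise model, the exponent $-nL_n^w$ is quadratic in $\f$, so the RCGP posterior is Gaussian and is obtained by completing the square, exactly as in the conjugate GP case but with a reweighted noise structure. First I would write $nL_n^w$ out explicitly. Using $s_{\mathrm{model}}(x,y)=\sigma^{-2}(f(x)-y)$ and the product rule, the integrand $(w\,s_{\mathrm{model}})^{2}+2\nabla_y(w^{2}s_{\mathrm{model}})$ evaluated at $(x_i,y_i)$ splits into a term quadratic in $f(x_i)$, a term linear in $f(x_i)$ involving both $y_i$ and $\partial_y w(x_i,y_i)^2$, and an $\f$-independent remainder. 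Collecting these over $i=1,\dots,n$ and writing $W=\diag(\w)$ gives $nL_n^w=\sigma^{-4}\f^\top W^2\f+\bigl(-2\sigma^{-4}W^2\y+2\sigma^{-2}\nabla_y(\w^2)\bigr)^\top\f+\mathrm{const}$, which identifies the $A_n,b_n,c_n$ promised in the text.

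Next I would add the log-prior $-\tfrac12(\f-\m)^\top\km^{-1}(\f-\m)$ and complete the square. Matching the quadratic coefficient gives $(\Sigma^R)^{-1}=\km^{-1}+2\sigma^{-4}W^2$, and the key observation is that $2\sigma^{-4}W^2=(\sigma^2 J_{\w})^{-1}$ by definition of $J_{\w}=\diag(\tfrac{\sigma^2}{2}\w^{-2})$. Matching the linear coefficient and using $2\sigma^{-2}\nabla_y(\w^2)=(\sigma^2 J_{\w})^{-1}(\m_{\w}-\m)$ — which follows from $\m_{\w}=\m+\sigma^2\nabla_y\log(\w^2)$ together with the componentwise identity $\nabla_y\log(\w^2)=\nabla_y(\w^2)/\w^2$ — yields $(\Sigma^R)^{-1}\mu^R=(\Sigma^R)^{-1}\m+(\sigma^2 J_{\w})^{-1}(\y-\m_{\w})$.

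The remaining work is purely linear-algebraic. Setting $B=\sigma^2 J_{\w}$, the identity $(\km^{-1}+B^{-1})^{-1}=\km(\km+B)^{-1}B$ (a push-through/Woodbury step, via $\km^{-1}+B^{-1}=\km^{-1}(\km+B)B^{-1}$) turns $(\Sigma^R)^{-1}=\km^{-1}+B^{-1}$ into the claimed $\Sigma^R=\km(\km+\sigma^2 J_{\w})^{-1}\sigma^2 J_{\w}$; substituting into $\mu^R=\m+\Sigma^R B^{-1}(\y-\m_{\w})$ and using $\Sigma^R B^{-1}=\km(\km+B)^{-1}$ gives the stated $\mu^R$. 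For the predictive, I would use that $f_\star$ is conditionally independent of $\y$ given $\f$, so the prior supplies the linear-Gaussian conditional $f_\star\mid\f\sim\mathcal{N}\bigl(m_\star+\kv_\star^\top\km^{-1}(\f-\m),\,k_{\star\star}-\kv_\star^\top\km^{-1}\kv_\star\bigr)$. Marginalising the Gaussian posterior over $\f$ through the standard composition rule produces $\mu_\star^R=m_\star+\kv_\star^\top(\km+B)^{-1}(\y-\m_{\w})$ after cancelling $\km^{-1}\km$, and $\Sigma_\star^R=k_{\star\star}-\kv_\star^\top\km^{-1}\kv_\star+\kv_\star^\top(\km+B)^{-1}B\km^{-1}\kv_\star$.

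The main obstacle — really the only step beyond bookkeeping — is collapsing this predictive variance to $k_{\star\star}-\kv_\star^\top(\km+\sigma^2 J_{\w})^{-1}\kv_\star$. I would do this by showing the operator $(\km+B)^{-1}-\km^{-1}+(\km+B)^{-1}B\km^{-1}$ vanishes, using $I+B\km^{-1}=(\km+B)\km^{-1}$ so that $(\km+B)^{-1}(I+B\km^{-1})=\km^{-1}$. The other points requiring care are the correct $y$-differentiation of $w^{2}s_{\mathrm{model}}$ and the bookkeeping that converts $\nabla_y(\w^2)$ into the $\m_{\w}$ and $J_{\w}$ of the statement; everything else is routine Gaussian algebra.
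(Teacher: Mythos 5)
Your proposal is correct and follows essentially the same route as the paper: expand $nL_n^{w}$ as a quadratic in $\f$, identify the precision contribution $2\sigma^{-4}\diag(\w^{2}) = (\sigma^{2}J_{\w})^{-1}$ and the shrinkage term via $\nabla_{y}\log(\w^{2})$, complete the square against the Gaussian prior, and convert $(\km^{-1}+(\sigma^{2}J_{\w})^{-1})^{-1}$ to the stated form with the identity $(\km^{-1}+B^{-1})^{-1}=\km(\km+B)^{-1}B$. The only difference is one of bookkeeping: for the predictive you invoke the standard linear-Gaussian marginalisation rule plus a push-through identity, whereas the paper computes the integral over $\f$ explicitly and simplifies with Woodbury (and handles a nonzero prior mean by reduction to the zero-mean case) --- same decomposition, shorter algebra.
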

Throughout, exponents are applied entry-wise, and the proofs can be found   in \Cref{app:proof_RCGP_derivation}. 
The distributions derived in the result have the same structure as their  standard GP counterparts, but replace $\sigma^2 I_n$ with the ``noise term'' $\sigma^2 J_{\w}= \sigma^2 \diag(\frac{\sigma^{2}}{2}\w^{-2})$, and $\m$ with the ``shrinkage term''  $\m_\w = \m+\sigma^{2}\nabla_{y}\log(\w^{2})$.
We interpret both terms after discussing how $w$ should be chosen.

\begin{table}[t]
\caption{Existing methods as special cases of RCGPs}
\centering
{
\label{tab:w}
\begin{tabular}{@{}lc@{}}
\toprule
Method             & $w(x,y)$                \\ \midrule
Standard GP        & $\frac{\sigma}{\sqrt{2}}$                                  \\
Heteroskedastic GP \hspace*{1.2cm}& $\frac{\sigma^2}{\sqrt{2}} \cdot r(x)^{-1}$ \\
Robust GP & $\beta\cdot \big(1+\frac{(y-m(x))^2}{c^2}\big)^{-1/2}$\\ \bottomrule
\end{tabular}
}
\vspace{-2mm}
\end{table}

Similar to standard GP regression, RCGP regression is conjugate, has a computational cost of $O(n^3)$, and storage cost of $O(n^2)$. In addition, a variety of GP schemes fall into the proposed framework through a specific choice of $w$.
For example, $w(x,y) = \frac{\sigma}{\sqrt{2}}$ recovers the standard GP, and $w(x,y) = \frac{\sigma^2}{\sqrt{2}} \cdot r(x)^{-1}$ a heteroskedastic GP with noise rate $r(x)$; see  \cref{tab:w}. Importantly, it would be flawed to simply interpret RCGPs as GPs with a different noise model: $\w$ depends directly on  $\y$.

Finally, we note that RCGPs are in principle not suited to interpolation: the score $s_{\text{model}}$ and thus the loss $L_n^w$ are not defined if $\sigma^2=0$. 
However, it is common to add a ``nugget'' to the kernel for  regularisation \citep{Andrianakis2012}, which makes the problem equivalent to regression with very small  $\sigma^2>0$. 
Consequently, RCGP regression is still applicable for interpolation problems whenever this regularisation is used---and can in fact strongly improve robustness in this setting.
 
\paragraph{Hyperparameter Selection}
To make RCGPs practically viable, we need to estimate $\theta$ and $\sigma^2$. A first idea would be to 
maximise the pseudo marginal likelihood $p^{w}(\y| \theta,\sigma^2)$---the RCGP's equivalent to the marginal likelihood, whose closed form is given in \Cref{app:proof_RCGP_marginal_likelihood}. 
Unfortunately, this would be ill-posed: neither $\exp\{ -n L^{w}(f, \y, \x)\}$ nor $p^{w}(\y| \theta,\sigma^2)$ are probability densities over $\y$.
 Hence, maximising $p^{w}(\y| \theta,\sigma^2)$ is like maximum likelihood estimation for an un-normalised density whose normaliser depends on the estimated parameter, leading   to implausible hyperparameters and numerical issues---a well-known issue for generalised Bayesian methods \citep{Jewson2021}.
 Instead of $p^{w}(\y| \theta,\sigma^2)$, we thus maximise the leave-one-out cross validation (LOO-CV) predictive posteriors via
\begin{IEEEeqnarray}{rCl}
    \hat{\sigma}^2, \hat{\theta} & = &
    \argmax_{\sigma^2, \theta}\Big\{
    \sum_{i=1}^{n}\log p^{w}(y_i|\x,\y_{-i},\theta,\sigma^2)
    \Big\},
    \nonumber
\end{IEEEeqnarray}
where $\y_{-i}=(y_1,\ldots,y_{i-1},y_{i+1},\ldots,y_{n})$ \citep[for details, see Section 5.2][]{williams2006gaussian}. LOO-CV has been one of the primary methods for setting hyperparameters in Gaussian Processes \citep[see e.g.][]{sundararajan1999predictive,bachoc2013cross,vehtari2016bayesian,petit2020towards}. A naive implementation of this objective function would require fitting the model $n$ times, leading to a computational cost of $\mathcal{O}(n^4)$. Hence, we follow \citet{sundararajan1999predictive} to obtain an analytical formulation that allows us to compute the LOO objective in $\mathcal{O}(n^3)$.
By \cref{prop:RCGP_derivation}, $p^w(y_{i}|\x,\y_{-i},\theta,\sigma^2)=\mathcal{N}(\mu_{i}^{R}, \sigma_{i}^{R} +\sigma^2)$ with
\begin{IEEEeqnarray}{rCl}
    \mu_{i}^{R} &=& z_{i}+\m_{i}-[\left(\km+\sigma^{2}J_{\w}\right)^{-1}\mathbf{z}]_{i}  
    [(K+\sigma^2 J_\w)^{-1}]_{ii}^{-1},
    \nonumber\\
    \sigma_{i}^{R}&=&
    [(K+\sigma^2 J_\w)^{-1}]_{ii}^{-1} -  \frac{\sigma^4}{2} w(x_i, y_i)^{-2}, 
    \nonumber
\end{IEEEeqnarray}
for $\mathbf{z} = \y-\m_\w$ and $\mathbf{z} = (z_1, \dots z_n)$. The full derivation can be found in \cref{app:loo}. Crucially, we only need to compute $(\km+\sigma^{2}J_{\w})^{-1}$ once which leads to a computational cost of $\mathcal{O}(n^3)$, same as the standard marginal likelihood method.

\begin{figure}[t!]
\centering
\includegraphics[width=\columnwidth]{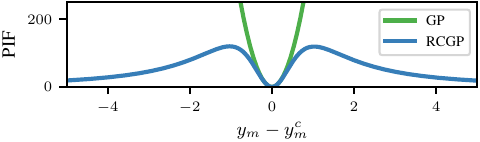}
 \vspace*{-0.7cm}
\caption{
    $\text{PIF}_{\text{GP}}$ (\textcolor{green_plot}{\textbf{green}}) and $\text{PIF}_\text{RCGP}$  (\textcolor{blue_plot}{\textbf{blue}}) for the dataset in \cref{fig:synthetic}. 
    $\text{PIF}_{\text{GP}} \rightarrow \infty $ as $|y_{m}-y^{c}_{m}|\to \infty$, so that standard GPs are not robust. 
    In contrast, $\text{PIF}_{\text{RCGP}}$ is bounded, showing robustness of the RCGP.
}
\vspace{-2mm}
\vspace{-3mm}
\label{fig:pif}
\end{figure}

\paragraph{Robustness}
RCGPs are  not only computationally attractive, but also robust to outliers and non-Gaussian errors.
While robustness for Bayesian methods can refer to a number of other aspects, including calibration \citep[][]{grunwald2012safe, huggins2019robust, Lyddon2019}, adversarial robustness \citep{Bogunovic2020,Kirschner2021}, and robustness to misspecified priors 
\citep{Vaart2011,Bachoc2013,Teckentrup2020,Wang2020,Karvonen2021,bogunovic2021misspecified,Stephenson2022,Naslidnyk2023},
we prove RCGP's robustness to misspecification in the error model.
We do so using the classical framework of \citet{huber2011robust}.
To this end, we first define the contamination of a dataset $D=\{(x_i, y_i)\}_{i=1}^{n}$ by the datum $y_m^c$ as  $D^{c}_{m}=(D \setminus \{(x_m, y_m)\}) \cup \{(x_m,y^{c}_{m})\}$ for some $m\in\{1,\ldots,n\}$.
We quantify the impact of  $y_m^c$ on inference through the divergence between the contaminated and uncontaminated posteriors.
As a function of $|y_m^c - y_m|$, this divergence is also sometimes called the \emph{posterior influence function} (PIF) and was studied for parametric models in \citet{Ghosh2016} and \citet{matsubara2021robust}.
To operationalise this, we consider  the Kullback-Leibler (KL) divergence:
\begin{IEEEeqnarray}{rCl}
    \operatorname{PIF}(y^{c}_{m}, D) = \operatorname{KL} \left(
        p^{L}_{\beta}(\f | D) 
        \| p^{L}_{\beta}(\f | D^{c}_{m})\right). \nonumber
\end{IEEEeqnarray}
We call a posterior robust if  $\sup_{y^{c}_{m}\in\mathcal{Y}}|\operatorname{PIF}(y^{c}_{m}, D)|<\infty$, as this implies  that even as  $|y_m^c-y_m| \to \infty$, the contamination's effect  on the posterior (as measured by the KL) is bounded. 
Choosing  the KL is convenient as it allows  closed form expressions for Gaussians, but we could in principle pick any other divergence with closed form expressions that is not uniformly bounded.
\begin{proposition}
\label{prop:robustness}
Suppose $\f \sim \mathcal{GP}(m,k)$,  $\vepsilon \sim \mathcal{N}(0, I_n \sigma^2)$, and let $C_k \in \R; k=1,2,3$ be constants independent of $y_m^c$. 
Then, GP regression has the PIF
\begin{IEEEeqnarray}{rCl}
       && \operatorname{PIF}_{\operatorname{GP}}(y^{c}_{m}, D) = C_1 (y_m-y^{c}_{m})^{2}, \nonumber
    \end{IEEEeqnarray}
    and is not robust: $\operatorname{PIF}_{\text{GP}}(y^{c}_{m}, f, D) \rightarrow   \infty$ as $|y^{c}_{m}| \to  \infty$.
In contrast, for RCGPs with  $\sup_{x,y} w(x,y) < \infty$, 
    \begin{IEEEeqnarray}{rCl}
            \operatorname{PIF}_{\operatorname{RCGP}}(y^{c}_{m}, D) & \leq &  C_2 (w(x_{m},y^{c}_{m})^2 y^{c}_{m})^2 + C_3.
       \nonumber
    \end{IEEEeqnarray}
    Thus, if $\sup_{x,y}\left\{y \cdot w(x,y)^2\right\} < \infty$,  RCGP is robust since $\sup_{y^{c}_{m}} |\operatorname{PIF}_{\operatorname{RCGP}}(y^{c}_{m},  D)| < \infty$.
\end{proposition}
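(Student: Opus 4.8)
The plan is to use \cref{prop:RCGP_derivation}, which makes both the uncontaminated posterior $p^{w}(\f\mid D)=\mathcal N(\mu_0,\Sigma_0)$ and the contaminated posterior $p^{w}(\f\mid D^c_m)=\mathcal N(\mu_1,\Sigma_1)$ Gaussian, so the PIF equals the closed-form Gaussian KL divergence
\[ \operatorname{KL}\big(\mathcal N(\mu_0,\Sigma_0)\,\|\,\mathcal N(\mu_1,\Sigma_1)\big)=\tfrac12\Big[\Tr(\Sigma_1^{-1}\Sigma_0)-n+(\mu_1-\mu_0)^\top\Sigma_1^{-1}(\mu_1-\mu_0)+\log\tfrac{\det\Sigma_1}{\det\Sigma_0}\Big]. \]
Replacing $y_m$ by $y_m^c$ alters only the $m$-th entries of $\y$, $\m_\w$ and $J_{\w}$, so the two posteriors differ through a single coordinate, and I would analyse each KL term via this rank-one structure. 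For the standard GP the argument is immediate: its covariance $\km(\km+\sigma^2 I_n)^{-1}\sigma^2 I_n$ does not depend on $\y$, so $\Sigma_0=\Sigma_1$, the trace and log-determinant terms cancel, and only the quadratic term survives. Since $\mu_1-\mu_0=(y_m^c-y_m)\,\km(\km+\sigma^2 I_n)^{-1}e_m$ is linear in $y_m^c-y_m$, this gives $\operatorname{PIF}_{\operatorname{GP}}=C_1(y_m-y_m^c)^2$ with $C_1$ independent of $y_m^c$, which diverges as $|y_m^c|\to\infty$.

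For the RCGP the clean tool is the precision identity $(\Sigma^R)^{-1}=\km^{-1}+\tfrac1{\sigma^2}J_{\w}^{-1}$, under which contamination perturbs the posterior precision by a single diagonal entry, $\Sigma_1^{-1}=\Sigma_0^{-1}+\delta\,e_me_m^\top$ with $\delta=\tfrac{2}{\sigma^4}\big(w(x_m,y_m^c)^2-w(x_m,y_m)^2\big)$. This collapses two of the four KL terms: $\Tr(\Sigma_1^{-1}\Sigma_0)-n=\delta\,[\Sigma^R]_{mm}$, and, by the matrix determinant lemma, $\log\tfrac{\det\Sigma_1}{\det\Sigma_0}=-\log(1+\delta\,[\Sigma^R]_{mm})$. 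Both are bounded functions of $w(x_m,y_m^c)^2$ and are therefore controlled by a constant as soon as $\sup_{x,y}w(x,y)<\infty$, since this keeps $\delta$ bounded; these contribute to $C_3$.

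The quadratic term is the crux. Because $\Sigma_1^{-1}=\km^{-1}+\tfrac1{\sigma^2}(J_{\w}^c)^{-1}$ has operator norm at most $\|\km^{-1}\|+\tfrac{2}{\sigma^4}\sup_{x,y}w(x,y)^2$, it suffices to bound $\|\mu_1-\mu_0\|$. Writing $\mu_1=\m+\km(M^c)^{-1}\mathbf z^c$ with $M^c=\km+\sigma^2 J_{\w}^c$ and $\mathbf z^c=\y^c-\m_\w^c$, a Sherman--Morrison expansion in the contaminated coordinate gives $(M^c)^{-1}e_m=(1+\sigma^2(j_m^c-j_m)\alpha)^{-1}M^{-1}e_m$ with $\alpha=e_m^\top M^{-1}e_m$ and $j_m^c=\tfrac{\sigma^2}{2}w(x_m,y_m^c)^{-2}$, so the prefactor of the only $y_m^c$-dependent driving term scales like $1/j_m^c\propto w(x_m,y_m^c)^2$. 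Since $z_m^c=y_m^c-m(x_m)-\sigma^2\nabla_y\log w(x_m,y_m^c)^2$ is dominated by $y_m^c$, the mean shift has magnitude of order $w(x_m,y_m^c)^2\,y_m^c$, yielding a quadratic term of at most $C_2\big(w(x_m,y_m^c)^2y_m^c\big)^2+(\text{const})$. Summing the three contributions gives the stated bound, and robustness follows since $\sup_{x,y}w<\infty$ keeps $C_3$ finite while $\sup_{x,y}\{y\,w(x,y)^2\}<\infty$ uniformly bounds the first term.

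The main obstacle is precisely that, unlike for the standard GP, the RCGP covariance depends on the contaminant, so all four KL terms are active and one must control a posterior precision whose $m$-th coordinate degenerates as $w(x_m,y_m^c)\to0$. The precision identity together with the single-coordinate (Sherman--Morrison / matrix determinant lemma) structure is what tames this, simultaneously showing the covariance terms stay bounded under $\sup_{x,y}w<\infty$ and isolating the $w^2 y^c$ scaling of the mean shift. A secondary point to verify is that the shrinkage contribution $\sigma^2\nabla_y\log w^2$ inside $z_m^c$ does not dominate $y_m^c$ for the admissible weight functions.
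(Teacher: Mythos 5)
Your argument is correct, and it reaches the paper's conclusion along the same skeleton but with genuinely different matrix-analytic tools. For the standard GP your proof coincides with the paper's: the covariance is free of $\y$, so only the quadratic KL term survives, and the mean shift is linear in $y_m^c-y_m$. For the RCGP, both you and the paper split the Gaussian KL into trace, quadratic, and log-determinant pieces, bound the first and last by constants under $\sup_{x,y}w<\infty$, and extract the $(w(x_m,y_m^c)^2y_m^c)^2$ growth from the quadratic piece; the difference is in how the single-coordinate contamination is exploited. The paper bounds the trace term via $\Tr(AB)\le\Tr(A)\Tr(B)$, the log-determinant via $\det(A+B)\ge\det(A)+\det(B)$, and the quadratic term via Weyl's inequality together with an $\ell_1$-norm estimate obtained from an explicit $2\times 2$ block inversion of $\km+\sigma^2 J_{\w^c}$, tracking roughly a dozen constants. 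You instead note that contamination is a rank-one perturbation of the posterior precision, $\Sigma_1^{-1}=\Sigma_0^{-1}+\delta e_m e_m^\top$, which gives \emph{exact} expressions $\delta[\Sigma^R]_{mm}$ and $-\log\bigl(1+\delta[\Sigma^R]_{mm}\bigr)$ for the trace and log-determinant terms (matrix determinant lemma), and a Sherman--Morrison formula for the mean shift; this is tighter, shorter, and makes the $w^2y^c$ scaling of the contaminated coordinate transparent. Two points to tighten when writing it up. First, boundedness of $-\log\bigl(1+\delta[\Sigma^R]_{mm}\bigr)$ does not follow from boundedness of $\delta$ alone (the argument of the logarithm must also stay away from zero); the clean fix is that $\Sigma_1^{-1}\succeq\km^{-1}$, so $1+\delta[\Sigma^R]_{mm}=\det\Sigma_1^{-1}/\det\Sigma_0^{-1}\ge\det(\km^{-1})/\det\bigl((\Sigma^R)^{-1}\bigr)>0$ uniformly in $y_m^c$. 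Similarly, for the Sherman--Morrison prefactor one should use the uniform inequality $1+\sigma^2(j_m^c-j_m)\alpha\ge\sigma^2 j_m^c\alpha$ (valid since $\sigma^2 j_m\alpha<1$, as $M\succ\sigma^2 J_\w$), which converts your asymptotic "$\propto w^2$" into a bound holding for all $y_m^c$. Second, the "secondary point" you flag---that $w^2\cdot\sigma^2\nabla_y\log w^2=\sigma^2\nabla_y w^2$ must not dominate $y_m^c$---is a genuine extra regularity requirement on $w$, but the paper's own proof carries the same implicit assumption (in its block computation it silently replaces $\y-\m_{\w^c}$ by the raw observation vector), so on this point you are, if anything, more careful than the source.
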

The proof is in \Cref{app:proof_robustness}.
The two conditions on $w$ in \cref{prop:robustness} have clear interpretations: $\sup_{x,y}w(x,y)<\infty$ ensures that no observation has infinite weight---which would be antithetical to robustness.
Further,  $\sup_{x,y}\left\{y \cdot w(x,y)^2\right\} < \infty$ demonstrates that for  robustness, $w$ must down-weight observations at least at rate $1/y$ for $|y|$  large enough. 
\cref{fig:pif} shows the PIF for an RCGP with a weighting function that satisfies these conditions.
Notably, the  conditions on $w$ in \cref{prop:robustness} can only hold if $w$ depends on $y$.
A consequence is that the $w$ associated with  heteroskedastic GPs in \cref{tab:w} does not lead to a bounded PIF---and is  \textit{not} robust in the sense defined above. 

\label{sec:choice_w}
\begin{figure}[t!]
\centering
\includegraphics[width=\columnwidth]{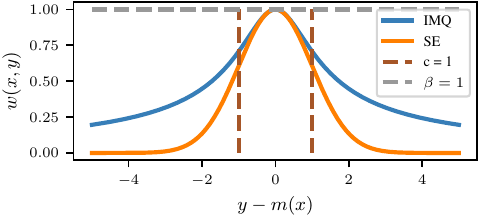}
 \vspace*{-0.7cm}
\caption{
Comparing kernel-based $w$ with the same hyperparameters: IMQ (\textcolor{blue_plot}{\textbf{blue}}) and Squared Exponential (SE) (\textcolor{orange_plot}{\textbf{orange}}).  
The dashed vertical lines indicate the soft threshold $c$ past which a point is increasingly treated as an outlier. 
The SE down-weights observations more rapidly as they exceed $c$ than the IMQ.
The maximum possible weight for any observation is $\beta = 1$.
}
\label{fig:w}
\end{figure}
 \begin{figure*}[h]
     \centering
     \includegraphics{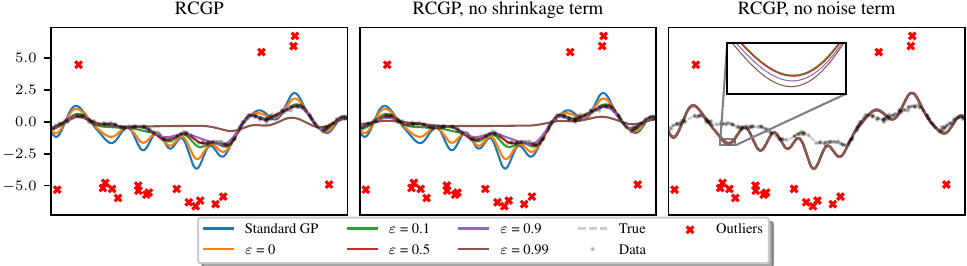}
         \caption{Posterior predictive mean for varying values of $c$ obtained by adjusting $\epsilon$ using the quantile absolute deviation method proposed in \cref{sec:choice_w}, applied to a synthetic dataset with 10\% uniformly generated outliers. \textbf{Left:} Full RCGP. \textbf{Center:} RCGP with no shrinkage term. \textbf{Right:} RCGP with no noise term.}
     \label{fig:varying_c_fixing_w}
 \end{figure*}
\paragraph{Choice of Weighting Function}
Many choices of $w$ satisfy the conditions of \cref{prop:robustness}.
An ideal choice of $w$ attains its maximum close to reasonable data points, and  decreases as outliers become more extreme.
In the remainder, our measure of how outlying an observation $y_i$ is will be $|y_i - m(x_i)|$. 
Additionally, $w$ should be smooth to ensure  its derivatives exist.
Conveniently, this makes weighting functions constructed via infinitely differentiable radial kernels $k$ as $w(x_i,y_i) = k(y_i,m(x_i))$ well-suited. 
We advocate for the Inverse Multi-Quadratic (IMQ) kernel: it has heavy tails, so that  extreme observations are not weighted down too much; see \cref{fig:w}. It is given by
\begin{IEEEeqnarray}{rCl}
    w_{\text{IMQ}}(x,y)&=&\beta\Big(1+\frac{(y-m(x))^2}{c^2}\Big)^{-\frac{1}{2}},\nonumber
\end{IEEEeqnarray}
with $\beta,c>0$. 
Weighting functions chosen this way have two hyperparameters: the soft threshold $c$, and the learning rate $\beta$ from \eqref{eq:gen-bayes} that we pulled into $w$.
While it is in principle possible to pick both $\beta$ and $c$ jointly, we fix $\beta=\frac{\sigma}{\sqrt{2}}$ and choose only $c$.
We do so since joint selection of $\beta$ and $c$ is numerically unstable due to near non-identifiability.
While we could fix $c$ and select $\beta$, this is difficult since learning rate selection is largely unresolved \citep[see e.g.][]{Lyddon2019,Syring2019,Bochkina2022,Wu2023,Frazier2023}.
Fixing $\beta =\frac{\sigma}{\sqrt{2}}$ and choosing $c$ is much easier as $c$ is  interpretable. 
In particular, as $c\to \infty$, $w \to \frac{\sigma}{\sqrt{2}}$ so that the limiting setting recovers the standard GP (cf. \cref{tab:w}). On the other hand,  finite values $c<\infty$ can be interpreted as a soft outlier threshold: Points $y_i$ for which $|y_i - f(x_i)| = c + \xi$ are increasingly treated as outliers the larger $\xi$ becomes.
This is illustrated in \cref{fig:w}, which depicts two kernels with  $c=1 = \beta$, and shows that the weights decrease rapidly to $0$ once the threshold is exceeded.

While, in principle, it is possible to choose $c$ by maximising the leave-one-out cross-validation predictive posterior, the performance is not optimal in practice. This is likely because maximising the predictive posterior for extreme observations tends to match/fit these outliers by \textit{increasing} $c$, leading to a less robust method. See \cref{app:experiment_benchmarking} for further discussion.
Therefore, we propose choosing $c$ via the quantile absolute deviation around the prior mean as $c = {Q}_n(1-\varepsilon)$, where ${Q}_n(1-\varepsilon)$ is the $(1-\varepsilon)$-th quantile of $\{|y_i - m(x_i)|\}_{i=1}^n$ for $\varepsilon\in[0, 1]$. 
As a default setting, we suggest $\varepsilon=0.05$, which implies that we expect at most $5\%$ of the data to be outliers. 

\cref{fig:varying_c_fixing_w} shows an example of the impact of the choice of $\varepsilon$ in the posterior where we observe that choosing $\epsilon=0.99$ will lead inferences that are \textit{too} robust and treat nearly all points as outliers. Conversely, setting $\epsilon=0$ nearly reproduces the non-robust standard GP. Intermediate values exhibit a well-balanced and effective performance.

Crucially, the proposed weighting function depends on the prior mean $m(x_i)$. Hence, if the prior mean is not well specified, the method may discard observations that are not outliers or vice versa. (see \cref{fig:flashcrash}). Therefore, we emphasise the importance of carefully selecting the prior mean, as this choice could impact the performance of our method. Note that this consideration is crucial not only for our approach, but also for the standard GP \citep[see e.g.][]{de2020you,hwang2023use}
\vspace{-2mm}
\paragraph{Interpretation of RCGP Terms}
Having discussed our choice of $w$, we are now ready to interpret the new terms arising in \cref{prop:RCGP_derivation}. 
For the RCGP, the noise term $\sigma^2 J_\w = \sigma^2 I_n (\mathbf{1} + (\y - \m)^2/c^2)$ replaces the standard GP term $\sigma^2 I_n$.  Here, the exponents have been applied element-wise and $\mathbf{1} = (1,\dots 1)^\top \in \R^n$.
Functionally,  $\sigma^2 J_\w$ treats outliers as though they were noisier than other observations. 
Note however that the weight for $y_i$ depends on $y_i$ itself---and thus \textit{cannot} be interpreted as coming from a different or heteroskedastic noise model on $\vepsilon$.
Conceptually, the term treats  information as unreliable whenever it comes from observations $y_i$ that deviate very extremely from the prior mean $m(x_i)$.
This not only ensures robustness, but also has computational benefits: in particular, the term tends to improve  conditioning, making it more numerically stable to invert $K+\sigma^2 J_\w$ than $K+\sigma^2 I_n$. 
This is especially relevant when $\sigma$ is small (e.g. for interpolation) and $n$ large, in which case numerical conditioning is typically a significant issue for GPs \citep{Andrianakis2012}.

Next, $\y - \m_\w  = \y - \m - \sigma^2\left[ \frac{2(\y - \m)}{c^2  \mathbf{1} + (\y - \m)^2} \right]$  replaces the standard GP's $\y - \m$ for RCGPs.
Unlike the noise term, the $i$-th shrinkage term $[\m_{\w}]_i$ is \textit{not} monotonically increasing in $|y_i - m(x_i)|$. 
Rather, it is increasing as $|y_i - m(x_i)| \uparrow c$, peaks at $\sigma^2 / c$ for $|y_i - m(x_i)|=c$, and then decreases monotonically  as $|y_i - m(x_i)|$ increases further.
The effect is most obvious for $\m = 0$, for which $\y - \m_\w = \y(\mathbf{1} - 2\sigma^2[c^2  \mathbf{1} + \y^2]^{-1})$.
Generally, shrinkage is associated with trading off a slight bias for a reduction in variance \citep[e.g.][]{von2011statistical}.
This also applies here: 
relative to the GP, the RCGP's posterior mean will exhibit smaller variance at the expense of a slight bias towards $m$.

\cref{fig:varying_c_fixing_w} shows the impact of each term in the predictive posterior. It is clear that the main term providing robustness to our method is the noise term.

%
\section{Experiments}
\label{sec:experiments}
The code 
is available at \url{https://github.com/maltamiranomontero/RCGP}.
The extensive literature on robust GPs makes it impossible to compare RCGPs to all competitors.
Thus, we choose two representative and popular approaches: a GP with heavy tailed Student's $t$  errors  \citep[``t-GP'']{jylanki2011robust}, and a GP directly modelling outliers via mixture distributions \citep[``m-GP'']{Lu2023}. 
We do not compare against outlier-removal methods: 
they complement---rather than compete with---RCGPs. Throughout, we picked  $w$, $c$ and $\beta$ as proposed in \cref{sec:choice_w}. 
All hyperparameters are selected via L-BFGS.

\begin{table}[t]
\caption{Average test set mean absolute error and standard deviation (in brackets) for 50 train--test splits.}
{\small
\label{tab:benchmark}
\centering
\begin{tabular}{@{}lllll@{}}
\toprule
          & \multicolumn{1}{c}{\textcolor{green_plot}{\textbf{GP}}}       & \textcolor{blue_plot}{\textbf{RCGP}}      & \textcolor{orange}{\textbf{t-GP}}  & \textcolor{purple}{\textbf{m-GP}}    \\ \midrule
          \multicolumn{5}{c}{{{\footnotesize No Outliers}}} 
          \vspace{0.5mm} \\ 
Synthetic & \textbf{0.09 (0.00)}& \textbf{0.09 (0.00)} & \textbf{0.09 (0.00)} & 0.33 (0.00) \\
Boston    & \textbf{0.19 (0.01)}& \textbf{0.19 (0.01)} & \textbf{0.19 (0.01)} & 0.28 (0.00) \\
Energy    & 0.03 (0.00)         & \textbf{0.02 (0.00)} & 0.03 (0.00)          & 0.61 (0.00) \\
Yacht     & 0.02 (0.01)         & 0.02 (0.01)          & \textbf{0.01 (0.00)} & 0.33 (0.00)\\ \midrule
\multicolumn{5}{c}{{{\footnotesize Focused Outliers}}}  \vspace{0.5mm}                          \\
Synthetic & 0.19 (0.00)   & \textbf{0.15 (0.00)} & 0.18 (0.00) & 0.23 (0.00)\\
Boston    & 0.23 (0.06)   & \textbf{0.22 (0.01)} & 0.27 (0.00)  & 0.27 (0.00)\\
Energy    & 0.03 (0.04)   & \textbf{0.02 (0.00)} & 0.03 (0.05) & 0.24 (0.00)\\
Yacht     & 0.26 (0.15)   & \textbf{0.10 (0.14)} & 0.20 (0.04) & 0.24 (0.00)\\ \midrule
\multicolumn{5}{c}{{{\footnotesize Asymmetric Outliers}}} 
\vspace{0.5mm} \\
Synthetic & 1.14 (0.00)  & 0.63 (0.00)          & 1.06 (0.00) & \textbf{0.61 (0.00)} \\
Boston    & 0.63 (0.02)  & \textbf{0.49 (0.00)} & 0.52 (0.00) & 0.52 (0.00) \\
Energy    & 0.54 (0.02)  & 0.44 (0.04)          & 0.42 (0.02) & \textbf{0.41 (0.00)} \\
Yacht     & 0.54 (0.06)  & \textbf{0.35 (0.02)} & 0.41 (0.00) & 0.40 (0.00) \\
 \midrule
          \multicolumn{5}{c}{{{\footnotesize Uniform}}} 
          \vspace{0.5mm} \\ 
Synthetic & 0.34 (0.00)  & \textbf{\textbf{0.21 (0.00)}} & 0.30 (0.00) & 0.27 (0.00) \\
Boston    & 0.53 (0.13)  & 0.51 (0.12) & 0.30 (0.01) & \textbf{0.24 (0.00)} \\
Energy    & 0.25 (0.03)  & 0.24 (0.03) & \textbf{0.23 (0.05)} & 0.23 (0.00) \\
Yacht     & 0.36 (0.16)  & 0.29 (0.07) & \textbf{0.16 (0.00)} & 0.34 (0.00)\\
\bottomrule
\end{tabular}
}
\end{table}

\begin{table}
\caption{
    Average clock time in seconds and its standard deviation (in brackets) across 50 repetitions
}
\label{tab:benchmark_time}
\centering
{\small
\begin{tabular}{@{}lllll@{}}
\multicolumn{5}{c}{}                           \\ \midrule
          & \textcolor{green_plot}{\textbf{GP}}   & \textcolor{blue_plot}{\textbf{RCGP}} & \textcolor{orange}{\textbf{t-GP}} & \textcolor{purple}{\textbf{m-GP}}      \\ \midrule
Synthetic \hspace*{0.8cm} & 1.5 (0.1)    & 1.2 (0.0) & 2.2 (0.0) & 3.0 (0.0)\\
Boston    & 1.9 (0.5)    & 5.1 (0.9) & 30.7 (6.1) & 16.7 (1.7)\\
Energy    & 3.8 (0.9)    & 4.6 (2.0) & 34.0 (11) & 33.8 (0.3)\\
Yacht     & 1.6 (0.3)    & 2.1 (0.2) & 5.6 (0.7) & 4.5 (0.4)\\ \bottomrule
\end{tabular}
}
\end{table}

\paragraph{Benchmarking}

\begin{figure}[t]
    \centering
    \includegraphics{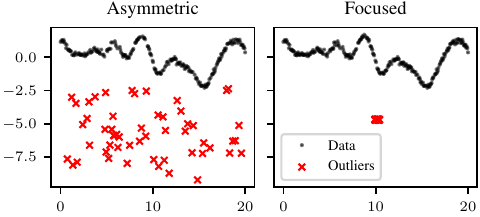}
    \vspace*{-0.25cm}
    \caption{Considered contamination regimes are
    asymmetric (left) and focused (right).
    }
    \vspace{-2mm}
    \label{fig:outliers}
\end{figure}

We assess our method on four benchmark datasets, including the synthetic problem in  \Cref{fig:synthetic} ($d=1, n=300$), the \href{https://www.cs.toronto.edu/~delve/data/boston/bostonDetail.html}{\texttt{Boston} dataset} ($d=13, n=506$), and two datasets from the \href{https://archive.ics.uci.edu}{UCI repository}: \texttt{Energy} ($d=8, n=768$) and \texttt{Yacht} ($d=6, n=308$). These allow us to compare the performances of robust GP methods in a range of dimensions and number of data points. 
For each dataset, we consider four settings: 
the original dataset (i.e. no outliers), focused outliers (i.e. outliers clustered in x- and y- space), asymmetric outliers (i.e. observations corrupted by strictly negative shifts) and uniform outliers (both positive and negative shifts at random); see Figures \ref{fig:synthetic} and \ref{fig:outliers} for illustrations. In each case, $10\%$ of observations are perturbed to become outliers.
Results are provided in Tables \ref{tab:benchmark} and \ref{tab:benchmark_time}, and a full description of the datasets and outlier generation process is provided in \Cref{app:experiment_benchmarking}.

As expected, standard GPs  outperform robust methods in absolute mean error when there are no outliers.
That being said, RCGPs can easily compete in this setting---indicating that there are no clear drawbacks to using RCGPs as a default. 
For example, on both the \texttt{Energy} and \texttt{Boston} datasets, the performance of RCGPs in the setting with no outliers is comparable to that of GPs. 
Further, RCGPs  tend to outperform their competitors in the presence of focused and asymmetric outliers. 
This happens as the competitors' posited noise models are symmetric, but the outlier generation is not.
This phenomenon demonstrates the advantage of using a generalised Bayesian approach as opposed to a more refined noised model. 
In the uniform outlier setting 
, t-GPs outperform all alternatives.
This is to be expected: Student's $t$ errors are a reasonable approximation to that error generating process. 
Though t-GPs perform even better, RCGPs still significantly outperform GPs. 
Further, while RCGPs and GPs have comparable computational cost,  t-GPs and m-GPs are significantly more computationally expensive.
To demonstrate this,  \Cref{tab:benchmark_time} compares the cost of training (including hyperparameter selection) of each method.
The slowdown of t-GPs and m-GPs is due to needing  variational approximations, and especially noticeable for datasets with larger $n$. 
Both GPs and RCGPs have runtimes of $O(n^3)$, though there are minor numerical differences due to the adaptive stopping rule of L-BFGS for hyperparameter optimisation. 

\paragraph{Sparse Variational Gaussian Processes (SVGPs)} The $O(n^3)$ cost of GPs is prohibitive for large $n$. 
A popular remedy are SVGPs \citep{titsias2009variational}, which reduce the cost to $O(nm^2)$ with $m \ll n$ inducing points $\u = (u_1, \dots, u_m)^\top$. 
RCGPs are amenable to this type of inference.
The resulting robust conjugate SVGP (RCSVGP) is derived below.

\begin{figure}[t]
    \centering
    \includegraphics{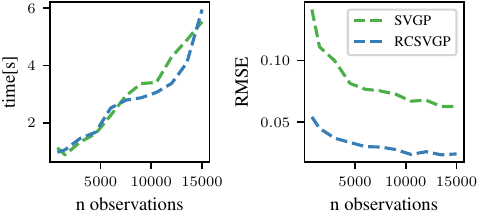}
     \vspace*{-0.25cm}
    \caption{
    Average clock time (left) and root mean square error (right) for
    SVGP (\textcolor{green_plot}{green}) and RCSVGP (\textcolor{blue_plot}{blue}) on the synthetic dataset with 10\% uniform outliers and $m=\sqrt{n}$ inducing points across 10 repetitions.
    }
    \label{fig:sparse}
    \vspace{-2mm}
\end{figure}

\begin{proposition}\label{prop:ELBO}
For $f \sim \mathcal{GP}(m,k)$, $\vepsilon \sim \mathcal{N}(0,\sigma^2)$, the RCSVGP posterior is $f \sim \mathcal{GP}(\widetilde{\mu}, \widetilde{\Sigma})$, where
\begin{IEEEeqnarray}{rCl}
   \widetilde{\mu}(x) 
   & = &
    \phi_{\u}(x)^{\top} 
    \mu_{\u}, \nonumber \\
    \widetilde{\Sigma}(x, x')
    & = &
    k(x, x') - 
    \phi_{\u}(x)^{\top}
    \left( K_{\u\u} - \Sigma_{\u} \right) \phi_u(x'),
    \nonumber \\
    \mu_{\u} & = &
    \m + K_{\u\u} P_{\u}^{-1} K_{\u}\sigma^{-2}J_{\w}^{-1}(\y-m_{\w}),
    \nonumber \\
    \Sigma_{\u} & = &
     K_{\u\u} P_{\u}^{-1} K_{\u\u},
    \nonumber
\end{IEEEeqnarray}
for  $P_{\u} = \left(K_{\u\u}+K_{\u}^{\top}\sigma^{-2}J_{\w}^{-1}K_{\u}\right)$, $[K_{\u\u}]_{ij} = k(u_i, u_j)$, $[K_{\u}]_{ij} = k(u_i, x_j)$, $[k_{\u}(x)]_i = k(u_i, x)$, and $\phi_{\u}(x) = K_{\u\u}^{-1}k_{\u}(x)$. 
\begin{IEEEeqnarray*}{rCl}
 J(\u,\theta,&& \sigma^2)=   \dfrac{1}{2}\nu^{\top}K_{\u}^{\top}Q_{\u}^{-1}K_{\u}\nu  + \dfrac{1}{2}\log{\left(\frac{
 \det\left(K_{\u\u}\right)^2}{\det\left(Q_{\u}\right)}\right)} \\&& + C(\sigma^2) - \Tr{\Big(\sigma^{-2}J_{\w}^{-\frac{1}{2}}(K-K_{\u}^{\top}K_{\u\u}^{-1}K_{\u})J_{\w}^{-\frac{1}{2}}\Big)}, 
\end{IEEEeqnarray*}
where
$Q_{\u} = K_{\u\u} + K_{\u}^{\top}\sigma^{-2}J_{\w}^{-1}K_{\u}$, $\nu = \sigma^{-2} J_{\w}^{-1}(\y-\m_{\w})$, and
$C(\sigma^2)$ is a function that only depends on $\sigma^2$.
\end{proposition}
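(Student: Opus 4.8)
The plan is to reduce the claim to the classical variational sparse GP construction of \citet{titsias2009variational} by exploiting the conjugacy already established in \cref{prop:RCGP_derivation}. Because the loss is quadratic in $\f$, the factor $\exp\{-nL_n^{w}(\f,\y,\x)\}$ is, as a function of $\f$, proportional to a Gaussian density $\mathcal{N}(\f;\,\tilde{\y},\,\sigma^2 J_\w)$ with pseudo-observations $\tilde{\y}=\y-\sigma^2\nabla_y\log(\w^2)$ satisfying $\tilde{\y}-\m=\y-\m_\w$. That is, the generalised likelihood behaves exactly like a \emph{heteroskedastic Gaussian pseudo-likelihood} with observation covariance $\sigma^2 J_\w$ and target shrunk to $\y-\m_\w$; one verifies this by checking that combining it with the prior $\mathcal{N}(\f;\m,\km)$ reproduces the posterior of \cref{prop:RCGP_derivation}. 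Every subsequent step is then the standard SVGP derivation with $\sigma^2 I_n$ replaced by $\sigma^2 J_\w$ and $\y-\m$ replaced by $\y-\m_\w$.

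First I would augment the model with the inducing variables $\f_\u=(f(u_1),\dots,f(u_m))^\top$ and restrict the variational family to $q(\f)=\int p(\f\mid\f_\u)\,q(\f_\u)\,d\f_\u$, where $q(\f_\u)=\mathcal{N}(\f_\u;\mu_\u,\Sigma_\u)$ and $p(\f\mid\f_\u)=\mathcal{N}(\f;\,K_\u^\top K_{\u\u}^{-1}\f_\u,\;\km-K_\u^\top K_{\u\u}^{-1}K_\u)$. The generalised ELBO is
\begin{IEEEeqnarray}{rCl}
J & = & \E_{q(\f)}\!\left[-nL_n^{w}(\f,\y,\x)\right]-\operatorname{KL}\!\left(q(\f_\u)\,\|\,p(\f_\u)\right). \nonumber
\end{IEEEeqnarray}
Since the pseudo-likelihood is Gaussian, $-nL_n^{w}$ is quadratic in $\f$, so the expected-loss term depends on $q(\f)$ only through its first two moments, which are in turn explicit affine functions of $\mu_\u$ and $\Sigma_\u$. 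This makes $\E_{q(\f)}[-nL_n^{w}]$, and hence $J$, available in closed form.

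Next I would optimise $J$ over the variational parameters. Exactly as in \citet{titsias2009variational}, for a Gaussian pseudo-likelihood the bound is maximised by a Gaussian $q(\f_\u)$ whose parameters solve a linear stationarity system; setting the derivatives of $J$ with respect to $\mu_\u$ and $\Sigma_\u$ to zero (equivalently, completing the square in $\f_\u$) yields the stated $\Sigma_\u$ and $\mu_\u$ in terms of $P_\u$, the only departure from the classical formulae being the diagonal reweighting by $\sigma^{-2}J_\w^{-1}$ in place of $\sigma^{-2}I_n$ and the shrinkage of the target to $\y-\m_\w$. Propagating this optimal $q(\f_\u)$ through $p(\f\mid\f_\u)$ gives the variational posterior $\GP(\widetilde{\mu},\widetilde{\Sigma})$ by ordinary Gaussian marginalisation, recovering the displayed $\widetilde{\mu}$ and $\widetilde{\Sigma}$ with $\phi_\u(x)=K_{\u\u}^{-1}k_\u(x)$. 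Finally, substituting the optimal $q(\f_\u)$ back into $J$ and simplifying with the Woodbury identity and the matrix-determinant lemma collapses the bound: the data-fit becomes $\tfrac12\nu^\top K_\u^\top Q_\u^{-1}K_\u\nu$ with $\nu=\sigma^{-2}J_\w^{-1}(\y-\m_\w)$, the log-determinant terms arise from the KL and the Gaussian normaliser, and the residual variational gap produces the penalty $-\Tr\bigl(\sigma^{-2}J_\w^{-1/2}(\km-K_\u^\top K_{\u\u}^{-1}K_\u)J_\w^{-1/2}\bigr)$.

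The main obstacle is the bookkeeping forced by the fact that the effective noise $\sigma^2 J_\w$ is both heteroskedastic and data-dependent, since $J_\w$ is a function of $\y$ through $\w$. One must check that the classical SVGP algebra goes through verbatim with a general positive diagonal noise matrix in place of $\sigma^2 I_n$, and in particular that the symmetrised penalty with $J_\w^{-1/2}$ emerges correctly from the expected loss rather than the naive $\sigma^{-2}\Tr(\km-K_\u^\top K_{\u\u}^{-1}K_\u)$. A secondary subtlety is carrying the shrinkage offset $\m_\w$ (rather than $\m$) consistently through both the data-fit and the predictive mean, and collecting all remaining $\y$- and $\w$-dependent terms that do not involve $(\u,\theta)$ into $C(\sigma^2)$.
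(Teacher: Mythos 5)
Your proposal is correct and is essentially the paper's own route: both run the Titsias-style variational construction on the quadratic pseudo-likelihood $\exp\{-nL_n^{w}\}$, obtain the same optimal Gaussian $q(\f_{\u})$, propagate it through $p(\f\mid\f_{\u})$ for the predictive, and collapse the bound to obtain $J$. The only (equivalent) organisational difference is that you find the optimal $q(\f_{\u})$ by recasting $\exp\{-nL_n^{w}\}$ as a heteroskedastic Gaussian pseudo-likelihood on shifted data $\y-\m_{\w}$ with noise $\sigma^{2}J_{\w}$ and solving the stationarity conditions of the uncollapsed bound in $(\mu_{\u},\Sigma_{\u})$, whereas the paper computes $\log\Psi^{w}(\y,\f_{\u})=\E_{\f\sim p(\f\mid\f_{\u})}[-nL_n^{w}(\f,\y,\x)]$ explicitly and uses the closed-form optimiser $q^{w}(\f_{\u})\propto\Psi^{w}(\y,\f_{\u})\,p(\f_{\u})$ before completing the square.
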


See \Cref{app:proof_ELBO} for a proof and  details. 
A numerical comparison between SVGPs and RCSVGPs for the synthetic example with 10\% uniform outliers is presented in \Cref{fig:sparse}. 
While RCSVGPs and SVGPs have similar runtimes,  RCSVGPs make far more accurate predictions. 
In \Cref{app:SVGP}, we show that this generalises to the other datasets and outlier types.

\begin{figure}[t]
    \centering
    \includegraphics{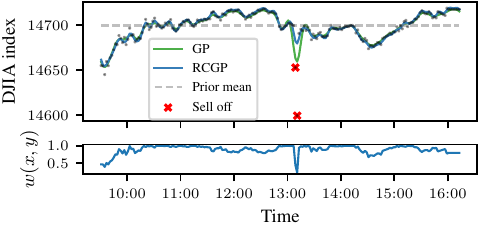}
     \vspace*{-0.25cm}
    \caption{\textit{Top:} GP (\textcolor{green_plot}{green}) and RCGP (\textcolor{blue_plot}{blue}) regression on the DJIA with $m(x) = \frac{1}{n}\sum_{i=1}^n y_i$. 
    While the GP overfits to outliers around 1pm,  RCGP is robust. 
    \textit{Bottom:}  $w_{\text{IMQ}}$ with $\beta$, $c$ chosen as proposed in \cref{sec:choice_w}.
    }
    \label{fig:flashcrash}
    \vspace{-3mm}
\end{figure}
\paragraph{Twitter Flash Crash} 
\begin{figure*}[t]
    \centering
    \includegraphics{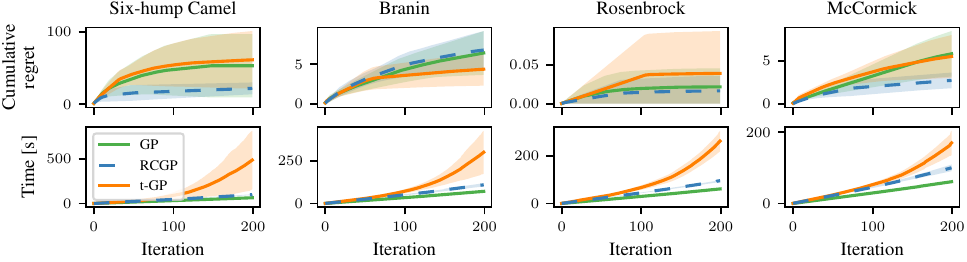}
     \vspace*{-0.3cm}
    \caption{Mean cumulative regret (top) and clock time (bottom) for BO with 
    GP (\textcolor{green_plot}{green}), RCGP (\textcolor{blue_plot}{blue}) and t-GP (\textcolor{orange}{orange}) with 20\% asymmetric outliers and UCB acquisition function over 10 realisations.
    }
    \label{fig:bo}
    \vspace{-5mm}
\end{figure*}
To illustrate the practical utility of RCGPs, we next analyse the Dow Jones Industrial Average (DJIA) index on the 17/04/2013.
On this day, the Associated Press’ Twitter account was hacked and falsely tweeted that explosions at the White House had injured the president. 
This lead to a rapid sell-off in American stock markets within seconds, quickly followed by an equally fast bounce-back.
This resulted in a few moments after 1pm that day where the DJIA did not accurately reflect the USA's economic realities.
In this sense, it is reasonable to regard the resulting group of extreme observations as outliers.
\Cref{fig:flashcrash} depicts the raw data as well as a GP and RCGP fit.
The figure not only illustrates the robustness of RCGPs, but also sheds further light on relevant trade-offs when choosing $w$.
As the bottom panel shows, most points have weights $\approx 1$. 
There are two exceptions to this: one of these is the group of points past 1pm where the crash occurred. Here, the observations are down-weighted almost all the way to zero. 
This is desirable, and exactly what we would expect RCGP to do.
Perhaps more surprisingly, weights are also significantly smaller than $1$ before 10am.
This is not obviously desirable, and due to the choice of prior mean.
In particular, recall that $w(x_i,y_i)$ is small if $|y_i - m(x_i)|$ is large.
For \Cref{fig:flashcrash},  the constant function $m(x) = \frac{1}{n}\sum_{i=1}^n y_i$ is chosen as the prior mean.
Relative to this prior mean, the first observations are outliers.
While this is not ideal, it is worth noting that this behaviour can be easily remedied by choosing $m$ more carefully. For example, one can fit a simple parametric model for $m$, as is extremely common in the literature. 
See \cref{app:prior_mean} for a further discussion.

\paragraph{Bayesian Optimisation} One of the most successful applications of GPs has been Bayesian optimisation (BO) \citep{Garnett2021}. 
In BO, a data-dependent acquisition function $a:\mathcal{X} \rightarrow \mathbb{R}$ based on the posterior GP  characterises desirable regions of $\mathcal{X}$ to choose the next---often noisy---function evaluations of $f$ from.
Here, misspecification of the noise model is a significant concern: it leads to poor acquisition functions; see e.g. \citet{Martinez-Cantin2018,Bogunovic2020,Kirschner2021}. 
Using RCGPs instead of GPs leads to a BO algorithm that is naturally robust to such misspecification.
We illustrate this using the upper-confidence bound (UCB) and probability of improvement (PI) acquisition functions. 
\begin{proposition}
\label{prop:acquisition_functions}
    Let $f \sim \mathcal{GP}(m,k)$, $\vepsilon \sim \mathcal{N}(0,\sigma^2)$. The UCB and PI acquisition functions for RCGPs are
    \begin{IEEEeqnarray*}{rCl}
a_{\operatorname{UCB-RCGP}}(x_{\star}) & = & \mu^{R}_{\star}  + \lambda (\Sigma^{R}_{\star})^{1/2} \\
    a_{\operatorname{PI-RCGP}}(x_{\star}) & = & \Phi\Big(\big(\mu^{R}_{\star}-f(x_{\text{max}})\big)(\Sigma^{R}_{\star})^{-1/2}\Big) 
\end{IEEEeqnarray*}
where $\Phi$ is the cdf of a standard normal, and $x_{\text{max}}$ is the best solution we have so far.
\end{proposition}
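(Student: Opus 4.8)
The plan is to recognise that both acquisition functions depend on the fitted model \emph{only} through the posterior predictive distribution over $f_\star = f(x_\star)$, and then to substitute the RCGP predictive supplied by \cref{prop:RCGP_derivation}. Concretely, for any model the UCB and PI acquisition functions are defined by
\begin{IEEEeqnarray*}{rCl}
a_{\operatorname{UCB}}(x_\star) &=& \E[f_\star] + \lambda \sqrt{\operatorname{Var}[f_\star]}, \\
a_{\operatorname{PI}}(x_\star) &=& \P[f_\star > f(x_{\text{max}})],
\end{IEEEeqnarray*}
where the expectation, variance, and probability are taken under the relevant posterior predictive. The entire content of the proposition is then that robustifying the GP leaves these \emph{functional forms} untouched, merely swapping the Gaussian predictive moments $(\mu_\star, \Sigma_\star)$ for their RCGP counterparts.

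First I would invoke \cref{prop:RCGP_derivation}, which establishes $p^{w}(f_\star | x_\star, \x, \y) = \mathcal{N}(f_\star; \mu^R_\star, \Sigma^R_\star)$, so that $\E[f_\star] = \mu^R_\star$ and $\operatorname{Var}[f_\star] = \Sigma^R_\star$. For UCB this is immediate: direct substitution yields $a_{\operatorname{UCB-RCGP}}(x_\star) = \mu^R_\star + \lambda (\Sigma^R_\star)^{1/2}$, matching the claimed expression. For PI I would standardise the Gaussian and use the symmetry of the standard normal CDF:
\begin{IEEEeqnarray*}{rCl}
\P[f_\star > f(x_{\text{max}})]
&=& 1 - \Phi\!\left( (f(x_{\text{max}}) - \mu^R_\star)(\Sigma^R_\star)^{-1/2} \right) \\
&=& \Phi\!\left( (\mu^R_\star - f(x_{\text{max}}))(\Sigma^R_\star)^{-1/2} \right),
\end{IEEEeqnarray*}
where the last line applies $1 - \Phi(-z) = \Phi(z)$. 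This is exactly the stated form of $a_{\operatorname{PI-RCGP}}$.

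There is no genuine obstacle here, since the result is a direct consequence of the Gaussianity of the RCGP predictive in \cref{prop:RCGP_derivation}: the only points requiring care are the correct standardisation of $f_\star$ and the symmetry identity for $\Phi$. The conceptual takeaway worth emphasising—rather than any technical difficulty—is that because the RCGP posterior predictive remains Gaussian (with robustified mean $\mu^R_\star$ and variance $\Sigma^R_\star$), every acquisition function that is a functional of a Gaussian predictive, including UCB and PI, carries over verbatim, which is precisely what makes RCGPs a drop-in replacement for GPs in Bayesian optimisation.
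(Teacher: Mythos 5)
Your proposal is correct and follows essentially the same route as the paper: both arguments rest entirely on the Gaussianity of the RCGP posterior predictive from \cref{prop:RCGP_derivation}, substitute $(\mu^R_\star,\Sigma^R_\star)$ directly for UCB, and obtain PI by standardising the Gaussian (the paper phrases this via the improvement function $\mathrm{I}(x_\star)=\max(f(x_\star)-f(x_{\text{max}}),0)$ and the event $\mathrm{I}(x_\star)>0$, which is the same probability $\P[f_\star > f(x_{\text{max}})]$ you compute). No gaps; the symmetry step $1-\Phi(-z)=\Phi(z)$ you make explicit is implicit in the paper's final line.
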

\vspace{-1mm}
Here, we have re-used the notations of \cref{prop:RCGP_derivation}, and full derivations are available in \Cref{app:proof_acquisition_functions}. 
The only paper explicitly studying BO in the presence of outliers is \citet{Martinez-Cantin2018}, which uses a t-GP. 
We thus compare RCGPs to GPs and t-GPs on the classical six-hump camel, Branin, McCormick and Rosenbrock functions (see \cref{app:BO}).
\cref{fig:bo} shows the results if each new function evaluation has a $20\%$ chance of being contaminated by an asymmetric outlier generated as in \cref{fig:outliers}, and the BO uses UCB as the acquisition function.
In terms of cumulative regret, RCGPs outperform GPs. 
While t-GPs can match this, they take orders of magnitude longer to run.
The results remain the same if the PI acquisition function is used (See \cref{app:BO}).
It is notable that even without outliers, RCGPs match or outperform GPs (see \cref{app:BO}). 
This is likely because the GP priors on $f$ are misspecified, and robustness to misspecified error models may help in this setting.

\section{Conclusion}
A major issue with existing GP regression techniques is that they are \textit{either} robust \textit{or} conjugate---never both. 
RCGPs solve this issue through generalised Bayesian inference, and demonstrate that robustness does not require prohibitive computational cost. 
Intriguingly, our experiments also indicate that there is no clear disadvantage from using RCGPs in the absence of misspecification---raising the possibility that RCGPs may become a preferred default choice over GPs in the future.
\vspace{-5mm}
\paragraph{Limitations and future work}
One limitation of our work is its dependency on a well-specified prior mean through the weighting function, as illustrated in \cref{fig:flashcrash}. Despite this, the weighting function offers flexibility for practitioners to select one that best suits their specific problem. For example, a practitioner concerned only with outliers in one direction could choose a weighting function that penalises deviations only in that direction. Additionally, we provide guidelines on selecting a weighting function to ensure that RCGP remains provably outlier robust (see \cref{prop:robustness}).

Another limitation of our method is the inability to obtain a posterior over the kernel hyperparameters due to the use of generalised Bayes. However, it is important to emphasise that the primary focus of our method is on speed and robustness rather than implementing a fully Bayesian procedure. Full Bayesian approaches that account for kernel hyperparameter uncertainty with standard GPs are significantly slower, so they are not our primary comparison. Instead, we focus on conjugacy, which is generally not achievable with a prior on hyperparameters.

Lastly, the flexibility and conjugacy of RCGPs means that there is significant scope to extend their use to settings beyond those presented here---including multi-output GPs \citep{bonilla2007multi, altamirano2022nonstationary}, 
linear-time GPs \citep{Hartikainen2010}, GPs with derivative \citep{Morris1993,Wu2017} or integral data \citep{Yousefi2019,Tanaka2019}, 
deep GPs \citep{damianou2013deep}, transformed GPs \citep{maronas2021transforming}, and probabilistic numerics \citep{Hennig2022}. 

\section*{Acknowledgements}
FXB and JK were supported by the EPSRC grant [EP/Y011805/1].

\section*{Impact statement}
This paper presents work whose goal is to advance the field of statistical Machine Learning. There are many potential societal consequences of our work, none of which we feel must be specifically highlighted here.

\bibliography{Bibliography}
\bibliographystyle{icml2024}

\newpage
\appendix
\onecolumn
\vspace*{-18pt}
\section*{\LARGE \centering Robust and Conjugate Gaussian Process Regression: \\ Supplementary Materials
}
\vspace{8pt}
\vspace{24pt}

Our supplementary material is structured as follows. In \Cref{app:proofs}, we provide the proofs of all our theoretical results, as well as additional results which complement those in the main text. In \Cref{app:experiments}, we provide additional details on our numerical experiments.

\section*{Notation}
In this section, we recap the notation used throughout the paper.

\begin{itemize}
    \item If $g:\X\times\mathcal{Y}\subseteq\R^{d}\times\R\to\R$, then $\nabla_y g = \dfrac{\partial g}{\partial y}$ is the partial derivative of $g$ with respect to $y$.
    \item $X\sim\mathcal{N}(\mu,\Sigma)$ denotes that $X$ is has a multivariate Gaussian distribution with mean $\mu$ and covariance $\Sigma$. Furthermore, $p(x)=\mathcal{N}(x;\mu,\Sigma)$ denotes that $p$ is the density of this multivariate Gaussian distribution.
    \item For some $d$-dimensional vector $\mathbf{v} = (v_1,\ldots,v_d)^\top$, $\diag(\mathbf{v})$ is the $d \times d$ diagonal matrix $D$ so that $D_{ij} = 0$ if $i\neq j$ and $D_{ii} = v_i$. In addition, exponents are applied entry-wise; e.g. $\mathbf{v}^2 = (v_1^2,\ldots,v_d^2)^\top$. 
    \item $\Tr$ denotes the trace operator, which for some $d\times d$ matrix $A$ is given by: $\Tr(A)=\sum_{i=1}^{d}A_{ii}$.
    \item A positive-definite kernel $k:\X \times \X \rightarrow \mathbb{R}$ is a symmetric function, i.e. $k(x,x')=k(x',x)$ for any $x,x'$,  that is positive-definite, i.e. $\forall \{x_1,...,x_n\}\subset \mathcal{X}$, $\forall \{c_1,...,c_n\}\subset\R$ and $\forall n\in\N$: $\sum_{i=1}^{n}\sum_{j=1}^{n}c_{i}c_{j}k(x_i,x_j)\geq 0$, which equivalent that for any $n$ points the $n\times n$ matrix given by $k(x,x)$ is positive-semidefinite.
    According to the Moore-Aronszajn theorem, every positive-definite kernel is also a reproducing kernel, and according to Loeve's theorem, any reproducing kernel is the covariance function of a second-order stochastic process and vice-versa. 
    \item Let $X\sim p$ be a random variable distributed according to the probability density $p$. We define the expectation and the variance of the random variable $X$ as 
    \begin{IEEEeqnarray}{rCl}
        \mathbb{E}_{X\sim p}[X]=\int_{\R^{n}}xp(x)dx, \qquad \mathbb{V}_{X\sim p}[X]=\int_{\R^{n}}(x-\mathbb{E}_{X\sim p}[X])(x-\mathbb{E}_{X\sim p}[X])^{\top}p(x)dx\nonumber
    \end{IEEEeqnarray}
\end{itemize}

\section{Proofs of Theoretical Results}\label{app:proofs}

\subsection{Proof of \Cref{prop:RCGP_derivation}}\label{app:proof_RCGP_derivation}

Recall the loss function defined in \cref{sec:methodology}:
\begin{IEEEeqnarray}{rCl}
    L^{w}_n(\f,\y,\x) & = &\dfrac{1}{n}\sum_{i=1}^{n} \big((w s_{\text{model}})^{2}  +2\nabla_{y}(w^{2} s_{\text{model}})\big)(x_i, y_i). \nonumber 
\end{IEEEeqnarray}
Here, $s_{\text{model}}:\mathcal{X} \times \mathcal{Y} \rightarrow \mathbb{R}$ is the score function, and $w:\mathcal{X}\times\mathcal{Y}\to\R$ is the weighting function. \Cref{prop:RCGP_derivation} gives the posterior and posterior predictive distributions for RCGPs, and we now derive these one-by-one.

\paragraph{Posterior Distribution}
 Firstly, we show that the RCGP posterior has density $p^{w}(\f | \y, \x) = \mathcal{N}(\f; \mu^R,\Sigma^R)$, 
\begin{IEEEeqnarray}{rCl}
\mu^R &=& \m+ \km\left(\km+\sigma^{2}J_{\w}\right)^{-1}\left(\y-\m_\w\right)  \nonumber\nonumber, \qquad \qquad
\Sigma^R =  \km\left(\km+\sigma^{2}J_{\w}\right)^{-1}\sigma^{2}J_{\w} \nonumber,
\end{IEEEeqnarray}
where $\w = (w(x_1,y_1),\ldots,w(x_n,y_n))^\top$, $\m_\w = \m +\sigma^{2}\nabla_{y}\log(\w^{2})$,  $J_{\w} =\diag(\frac{\sigma^{2}}{2}\w^{-2})$.  
\begin{proof}
Firstly, it is worth noting that for $s_{\text{model}}(x, y)  = (f(x)-y)\sigma^{-2}$, the loss function $L^{\widetilde{w}}_n$ constructed with the weight function $w$ can be written as 
\begin{IEEEeqnarray}{rCl}
    &L&^{w}_n(\f,\y,\x) =\dfrac{1}{n}\sum_{i=1}^{n} \dfrac{w(x_{i},y_{i})^{2}(f(x_{i})-y_i)^2}{\sigma^{4}} + 2\nabla_{y} \left(\dfrac{w(x_{i},y_{i})^{2}(f(x_{i})-y_i)}{\sigma^{2}}\right) \nonumber\\
     & = &\dfrac{1}{n}\sum_{i=1}^{n} \dfrac{w(x_{i},y_{i})^{2}(f(x_{i})^{2}-2f(x_{i})y_{i}+y_{i}^2)}{\sigma^{4}} + 2\nabla_{y}\left( \dfrac{w(x_{i},y_{i})^{2}(f(x_{i})-y_i)}{\sigma^{2}}\right) \nonumber \\
    &=& \dfrac{1}{n}\sum_{i=1}^{n} \dfrac{w(x_{i},y_{i})^{2}(f(x_{i})^{2}-2f(x_{i})y_{i})}{\sigma^{4}} + 2\nabla_{y} \left(\dfrac{w(x_{i},y_{i})^{2}f(x_{i})}{\sigma^{2}}\right) + \dfrac{w(x_{i},y_{i})^{2}y_{i}^{2}}{\sigma^{4}}- 2\nabla_{y} \left(\dfrac{w(x_{i},y_{i})^{2}y_{i}}{\sigma^{2}}\right).\nonumber
\end{IEEEeqnarray}
Let us consider $\widetilde{w}(x,y) = \sqrt{2\sigma^{-2}}w(x,y)$, then
\begin{IEEEeqnarray}{rCl}
    &L&^{\widetilde{w}}_n(\f,\y,\x) = \dfrac{1}{n}\sum_{i=1}^{n} \dfrac{\widetilde{w}(x_{i},y_{i})^{2}(f(x_{i})^{2}-2f(x_{i})y_{i})}{2\sigma^{2}} + f(x_{i})\nabla_{y} \widetilde{w}(x_{i},y_{i})^{2} + \dfrac{\widetilde{w}(x_{i},y_{i})^{2}y_{i}^{2}}{2\sigma^{2}}- \nabla_{y}( \widetilde{w}(x_{i},y_{i})^{2}y_{i})\nonumber\\
    &=& \dfrac{1}{2n}\left(\f^{\top}\sigma^{-2}\diag(\widetilde{\w}^{2})\f - 2\f^{\top} \left(\sigma^{-2}\diag(\widetilde{\w}^{2})\y-\nabla_{y} \widetilde{\w}^{2}\right) + \y^{\top}\sigma^{-2}\diag\widetilde{\w}^{2})\y - 2\nabla_{y} (\y^{\top}\widetilde{\w}^{2})\right) \nonumber \\
    &=& \dfrac{1}{2n}\left(\f^{\top}\sigma^{-2}\diag(\widetilde{\w}^{2})\f - 2\f^{\top} \sigma^{-2}\diag(\widetilde{\w}^{2})\left(\y-\sigma^{2}\nabla_{y} \log(\widetilde{\w}^{2})\right) + \y^{\top}\sigma^{-2}\diag(\widetilde{\w}^{2})\y - 2\nabla_{y}(\y^{\top}\widetilde{\w}^{2})\right) \nonumber \\
    &=&\dfrac{1}{2n}\left(\f^{\top}\sigma^{-2}J_{\w}^{-1}\f  - 2\f^{\top}\sigma^{-2}J_{\w}^{-1}(\y-\m_{\w}+\m) + C(\x,\y, \sigma^2) \right) \nonumber
\end{IEEEeqnarray}
Where we use the fact that $\nabla_y \widetilde{\w}^2 = \sigma^{-2} \diag(\widetilde{\w}^2) \times \sigma^2 \nabla_y \log (\widetilde{\w}^2)$, and 
\begin{IEEEeqnarray}{rCl}
    \m_\w &=& \m+\sigma^{2}\nabla_{y}\log(\widetilde{\w}^{2}) = \m+\sigma^{2}\nabla_{y}\log(\w^{2})\nonumber\\
    J_{\w} &=&\diag(\widetilde{\w}^{-2}) = \diag(\dfrac{\sigma^{2}}{2}\w^{-2})\nonumber\\
    C(\x,\y, \sigma^2) &=& \y^{\top}\sigma^{-2}\diag(\widetilde{\w}^{2})\y - 2\nabla_{y} \y^{\top}\widetilde{\w}^{2} = \y^{\top}\sigma^{-2}\diag(2\sigma^{-2}\w^{2})\y - 4\sigma^2\nabla_{y} \y^{\top}\w^{2}\nonumber
\end{IEEEeqnarray}
One remark is that $C(\x,\y, \sigma^2)$ does not depend on $\f$. Thus, it will not have an impact on the posterior.

Now, we can calculate the density of the generalised posterior of $\f$ using the loss function defined before as follows
\begin{IEEEeqnarray}{rCl}
    p_{w}(\f | \y) &\propto& p(\f) \exp\{ -n L^{w}_n(\f,\y,\x)\}\nonumber\\
    & \propto & \exp\left(-\dfrac{1}{2}(\f-\m)^{\top}\km^{-1}(\f-\m)\right)\exp\left(-\dfrac{1}{2} \left( \f^{\top}\sigma^{-2}J_{\w}^{-1}\f  - 2\f^{\top}\sigma^{-2}J_{\w}^{-1}(\y-\m_{\w}+\m)\right)\right) \nonumber \\
    &=& \exp\left(-\dfrac{1}{2}\left((\f-\m)^{\top}\km^{-1}(\f-\m) + \f^{\top}\sigma^{-2}J_{\w}^{-1}\f  - 2\f^{\top}\sigma^{-2}J_{\w}^{-1}(\y-\m_{\w}+\m)\right)\right) \nonumber \\
    &\propto & \exp\left(-\dfrac{1}{2}\left(\f^{\top}\km^{-1}\f - 2\f^{\top}\km^{-1}\m + \f^{\top}\sigma^{-2}J_{\w}^{-1}\f  - 2\f^{\top}\sigma^{-2}J_{\w}^{-1}(\y-\m_{\w}+\m)\right)\right) \nonumber \\
    &\propto& \exp\left(-\dfrac{1}{2}\left(\f^{\top}(\km^{-1}+\sigma^{-2}J_{\w}^{-1})\f - 2\f^{\top}\left((\km^{-1}+\sigma^{-2}J_{\w}^{-1})\m + \sigma^{-2}J_{\w}^{-1}(\y-\m_{\w})\right)\right)\right). \nonumber 
\end{IEEEeqnarray}
By completing squares, the posterior has the form
\begin{IEEEeqnarray}{rCl}
p_{w}(\f | \y) &\propto& \exp\left(-\dfrac{1}{2}\left((\f-\mu_R)^{\top}\Sigma_R^{-1}(\f-\mu_R)\right)\right), \nonumber \\
\Sigma_R &=&  (\km^{-1}+\sigma^{-2}J_{\w}^{-1})^{-1} = \km\left(\km+\sigma^{2}J_{\w})\right)^{-1}\sigma^{2}J_{\w}, \nonumber \\
\mu_R &=&  \Sigma_{R}\left((\km^{-1}+\sigma^{-2}J_{\w}^{-1})\m + \sigma^{-2}J_{\w}^{-1}(\y-\m_{\w})\right) = \m +\km\left(\km+\sigma^{2}J_{\w}\right)^{-1}(\y-m_{\w}),\nonumber 
\end{IEEEeqnarray}
where we use the fact that for two invertible matrices $A, B$, we have $(A^{-1}+B^{-1})^{-1} =A(A+B)^{-1}B$. One remark is that $\Sigma_R$ is positive semidefinite, since is the inverse of a sum of positive semidefinite matrices.
\end{proof}
\paragraph{Predictive distribution}
The RCGP posterior predictive distribution over $f_{\star} = f(x_{\star})$ at $x^{\star} \in \mathcal{X}$ is a multivariate Gaussian distribution with density
\begin{IEEEeqnarray}{rCl}
    p^{w}(f_{\star}|x_{\star},\x,\y) & = & \int_{\mathbb{R}^n} p(f_{\star}|x_{\star},\f,\x,\y)p^{w}(\f|\y,\x)d\f  \mathcal{N}(f_{\star};\mu_{\star}^{R},\Sigma_{\star}^{R}),\nonumber
\end{IEEEeqnarray}
\vspace{-8mm}
\begin{IEEEeqnarray}{rCl}
    \mu_{\star}^{R}
    &=&m_\star + \kv_{\star}^{\top}\left(\km+\sigma^{2}J_{\w}\right)^{-1}\left(\y-\m_\w\right),\nonumber\\
    \Sigma_{\star}^{R}
    &=& k_{\star\star}-\kv_{\star}^{\top}(\km+\sigma^{2}J_{\w})^{-1}\kv_{\star}.\nonumber
\end{IEEEeqnarray}
\begin{proof}
    We first derive the predictive for $m(x)=0$ and then extend it to an arbitrary prior mean $m$. In order to compute the predictive, we first need the conditional density $p(f_{\star}|x_{\star},\f,\x,\y)$. Using the fact that $f$ is a mean-zero GP, the joint distribution of $\f$ and $f_{\star}$ is
\begin{IEEEeqnarray}{rCl}
    \begin{pmatrix}
    \f \\
    f_{\star}
    \end{pmatrix}& \sim & \mathcal{N}\left(0,\begin{pmatrix}
    \km & \kv_{\star} \\
    \kv_{\star}^{\top} & k_{\star\star}
    \end{pmatrix}\right)
    \nonumber
\end{IEEEeqnarray}
where $\kv_{\star} = (k(x_{\star},x_1),...,k(x_{\star},x_n))^{\top}$, and $k_{\star,\star} = k(x_\star,x_{\star})$. Therefore, the density of the conditional distribution of a multivariate normal is well known and has the form $p(f_{\star}|x_{\star},\f,\x,\y) = \mathcal{N}(f_{\star};\mu,\Sigma)$ where
\begin{IEEEeqnarray}{rCl}
    \mu &=& \kv_{\star}^{\top}\km^{-1}\f,\nonumber\\
   \Sigma &=& k_{\star\star}-\kv_{\star}^{\top}\km^{-1}\kv_{\star}. \nonumber
\end{IEEEeqnarray}
Let us define $a=\km^{-1}\kv_{\star}$, then $ \mu = a^{\top}\f$, and we can write the density of the predictive distribution as follows
\begin{IEEEeqnarray}{rCl}
    p^{w}(f_{\star}|x_{\star},\x,\y)   & = & \int_{\mathbb{R}^n} p(f_{\star}|x_{\star},\f,\x,\y)p^{w}(\f|\y,\x)d\f. \nonumber\\
    &\propto& \int_{\mathbb{R}^n} \exp\left({-\frac{1}{2}\left((f_{\star}-a^{\top}\f)^{\top}\Sigma^{-1}(f_{\star}-a^{\top}\f)+(\f-\mu^R)^{\top}(\Sigma^{R})^{-1}(\f-\mu^R)\right)}\right)d\f \nonumber \\
    &\propto&\int_{\mathbb{R}^n} \exp\left({-\frac{1}{2}\left(f_{\star}\Sigma^{-1} f_{\star}-2\f^{\top}a\Sigma^{-1}f_{\star}+\f^{\top}a\Sigma^{-1}a^{\top}\f+\f^{\top}(\Sigma^{R})^{-1}\f-2\f^{\top}(\Sigma^{R})^{-1}\mu^{R}\right)}\right)d\f\nonumber\\
    &\propto&\exp\left(-\frac{1}{2}f_{\star}\Sigma^{-1} f_{\star}\right)\int_{\mathbb{R}^n}\exp\left({-\frac{1}{2}\left(\f^{\top}((\Sigma^{R})^{-1}+a\Sigma^{-1}a^{\top})\f-2\f^{\top}\left(a\Sigma^{-1}f_{\star}+(\Sigma^{R})^{-1}\mu^{R}\right)\right)}\right)d\f\nonumber.
\end{IEEEeqnarray}
where the steps follow from basic arithmetic and taking out all terms which do not depend on $\f$, and $\propto$ indicates we do not consider the normalisation constants. Integrating over $\f$, we get
\begin{IEEEeqnarray}{rCl}
     &p^{w}&(f_{\star}|x_{\star},\x,\y) \nonumber\\
    &\propto&\exp\left(-\frac{1}{2}\left(f_{\star}\Sigma^{-1} f_{\star}-(a\Sigma^{-1}f_{\star}+(\Sigma^{R})^{-1}\mu^{R})^{\top}((\Sigma^{R})^{-1}+a\Sigma^{-1}a^{\top})^{-1}(a\Sigma^{-1}f_{\star}+(\Sigma^{R})^{-1}\mu^{R})\right)\right)\nonumber\\
    &\propto&\exp\left(-\frac{1}{2}\left(f^{\star\top}(\Sigma^{-1}-\Sigma^{-1}a^{\top}((\Sigma^{R})^{-1}+a\Sigma^{-1}a^{\top})^{-1}a\Sigma^{-1})f_{\star}-2f^{\star\top}\Sigma^{-1}a^{\top}((\Sigma^{R})^{-1}+a\Sigma^{-1}a^{\top})^{-1}(\Sigma^{R})^{-1}\mu^{R} \right)\right)\nonumber.
\end{IEEEeqnarray}
Therefore, by completing squares, we obtain $p(f_{\star}|x_{\star},\x,\y) = \mathcal{N}(f_{\star};\mu_{\star}^{R},\Sigma_{\star}^{R})$, 
where,
\begin{IEEEeqnarray}{rCl}
    \mu_{\star}^{R}&=&(1-a^{\top}((\Sigma^{R})^{-1}+a\Sigma^{-1}a^{\top})^{-1}a\Sigma^{-1})^{-1}\Sigma \Sigma^{-1}a^{\top}((\Sigma^{R})^{-1}+a\Sigma^{-1}a^{\top})^{-1}(\Sigma^{R})^{-1}\mu^{R},\nonumber \\
    \Sigma_{\star}^{R}&=&(1-a^{\top}((\Sigma^{R})^{-1}+a\Sigma^{-1}a^{\top})^{-1}a\Sigma^{-1})^{-1}\Sigma\nonumber.
\end{IEEEeqnarray}

Now, we expand the terms by arithmetic rules for matrix-vector multiplication, to obtain the final expressions:
\begin{IEEEeqnarray}{rCl}
    \mu_{\star}^{R}&=& (1-a^{\top}((\Sigma^{R})^{-1}+a\Sigma^{-1}a^{\top})^{-1}a\Sigma^{-1})^{-1}\Sigma \Sigma^{-1}a^{\top}((\Sigma^{R})^{-1}+a\Sigma^{-1}a^{\top})^{-1}(\Sigma^{R})^{-1}\mu^{R}\nonumber\\
    &=& (1-a^{\top}((\Sigma^{R})^{-1}+a\Sigma^{-1}a^{\top})^{-1}a\Sigma^{-1})^{-1}(((\Sigma^{R})^{-1}+a\Sigma^{-1}a^{\top})(a^{\top})^{-1})(\Sigma^{R})^{-1}\mu^{R}\nonumber\\
    &=&(((\Sigma^{R})^{-1}+a\Sigma^{-1}a^{\top})(a^{\top})^{-1}-a\Sigma^{-1})^{-1}(\Sigma^{R})^{-1}\mu^{R}\nonumber\\
    &=&((\Sigma^{R})^{-1}(a^{\top})^{-1}+a\Sigma^{-1}-a\Sigma^{-1})^{-1}(\Sigma^{R})^{-1}\mu^{R}\nonumber\\
    &=&((\Sigma^{R})^{-1}(a^{\top})^{-1})^{-1}(\Sigma^{R})^{-1}\mu^{R}\nonumber\\
    &=&a^{\top}\mu^{R}\nonumber\\
    &=&\kv_{\star}^{\top}\left(\km+\sigma^{2}J_{\w}\right)^{-1}(\y-m_{\w})\nonumber.
\end{IEEEeqnarray}

The covariance follows the form:
\begin{IEEEeqnarray}{rCl}
    \Sigma_{\star}^{R}&=& (\Sigma^{-1}-\Sigma^{-1}a^{\top}((\Sigma^{R})^{-1}+a\Sigma^{-1}a^{\top})^{-1}a\Sigma^{-1})^{-1}\nonumber,
\end{IEEEeqnarray}
we use the Woodbury matrix identity on the term $((\Sigma^{R})^{-1}+a\Sigma^{-1}a^{\top})^{-1}$ and by arithmetic rules for matrix-vector multiplication we obtain:
\begin{IEEEeqnarray}{rCl}
   \Sigma_{\star}^{R} &=& (\Sigma^{-1}-\Sigma^{-1}a^{\top}(\Sigma^{R}-\Sigma^{R}a(\Sigma+a^{\top}\Sigma^{R}a)^{-1}a^{\top}\Sigma^{R})a\Sigma^{-1})^{-1}\nonumber\\
    &=& (\Sigma^{-1}-\Sigma^{-1}a^{\top}(\Sigma^{R}a(\Sigma+a^{\top}\Sigma^{R}a)^{-1}((\Sigma+a^{\top}\Sigma^{R}a)a^{-1}-a^{\top}\Sigma^{R}))a\Sigma^{-1})^{-1}\nonumber\\
    &=& (\Sigma^{-1}-\Sigma^{-1}a^{\top}(\Sigma^{R}a(\Sigma+a^{\top}\Sigma^{R}a)^{-1}(\Sigma a^{-1}+a^{\top}\Sigma^{R}-a^{\top}\Sigma^{R})a\Sigma^{-1})^{-1}\nonumber\\
    &=& (\Sigma^{-1}-\Sigma^{-1}a^{\top}(\Sigma^{R}a(\Sigma+a^{\top}\Sigma^{R}a)^{-1}\Sigma a^{-1}a\Sigma^{-1})^{-1}\nonumber\\
    &=& (\Sigma^{-1}-\Sigma^{-1}a^{\top}\Sigma^{R}a(\Sigma+a^{\top}\Sigma^{R}a)^{-1})^{-1}\nonumber\\
    &=& (\Sigma+a^{\top}\Sigma^{R}a)(\Sigma^{-1}(\Sigma+a^{\top}\Sigma^{R}a)-\Sigma^{-1}a^{\top}\Sigma^{R}a)^{-1}\nonumber\\&=&\Sigma+a^{\top}\Sigma^{R}a\nonumber.
\end{IEEEeqnarray}
Finally, we replace $\Sigma^{R}$ and use the Woodbury matrix identity on it, to get:
\begin{IEEEeqnarray}{rCl}
    \Sigma_{\star}^{R} &=& \Sigma+\kv_{\star}^{\top}\km^{-1}(\km^{-1}+\sigma^{-2}J_{\w}^{-1})^{-1}\km^{-1}\kv_{\star}\nonumber\\
    &=& \Sigma+\kv_{\star}^{\top}\km^{-1}(\km-\km(\km+\sigma^{2}J_{\w})^{-1}\km)\km^{-1}\kv_{\star}\nonumber\\
    &=& \Sigma+\kv_{\star}^{\top}\km^{-1}\kv_{\star}-\kv_{\star}^{\top}(\km+\sigma^{2}J_{\w})^{-1}\kv_{\star}\nonumber\\
    &=& k_{\star\star}-\kv_{\star}^{\top}(\km+\sigma^{2}J_{\w})^{-1}\kv_{\star}\nonumber.
\end{IEEEeqnarray}
If the prior mean function $m:\X \rightarrow \mathbb{R}$ is non-zero, we only need a minor modification of the derivation above. This involves recognising that if we have a function $f\sim\GP(m,k)$, then $f'= f-m \sim f\sim\GP(0,k)$. Therefore, when we have observed values of $f$, we can adjust them by subtracting the corresponding prior mean function values, thus obtaining observations of $f'$. We then perform inference on $f'$, and once we have the posterior on this function, we can simply add the prior mean back to the posterior mean to obtain the posterior estimate for $f$. In other words, for a general prior mean $m$ the density of the predictive posterior will be:
\begin{IEEEeqnarray}{rCl}
    p^{w}(f_{\star}|x_{\star},\x,\y) & = & \mathcal{N}(f_{\star};\mu_{\star}^{R},\Sigma_{\star}^{R}),\nonumber
\end{IEEEeqnarray}
\vspace{-8mm}
\begin{IEEEeqnarray}{rCl}
    \mu_{\star}^{R}
    &=&m_\star + \kv_{\star}^{\top}\left(\km+\sigma^{2}J_{\w}\right)^{-1}\left(\y-\m_\w\right),\nonumber\\
    \Sigma_{\star}^{R}
    &=& k_{\star\star}-\kv_{\star}^{\top}(\km+\sigma^{2}J_{\w})^{-1}\kv_{\star}.\nonumber
\end{IEEEeqnarray}
\end{proof}
\subsection{Pseudo Marginal Likelihood for RCGPs}\label{app:proof_RCGP_marginal_likelihood}

While maximising the pseudo marginal likelihood for RCGP $p^{w}(\y| \theta,\sigma^2)$ is ill-posed, it is available in closed form given below.

\begin{proposition}
    The pseudo marginal likelihood for RCGPs takes the form
    \begin{IEEEeqnarray}{rCl}
p^{w}(\y&|& \x,\theta,\sigma^2)\nonumber \\
&=&  \dfrac{1}{\sqrt{ |\km||\km^{-1}+\sigma^{-2}J_{\w}^{-1}|}} \exp\left(\dfrac{1}{2}(\y-m_{\w})^{\top}\sigma^{-2}J_{\w}^{-1}(\km^{-1}+\sigma^{-2}J_{\w}^{-1})^{-1}\sigma^{-2}J_{\w}^{-1}(\y-m_{\w}) - C(\x,\y, \sigma^2) \right) \nonumber.
     \end{IEEEeqnarray}
\end{proposition}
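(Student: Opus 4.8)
The plan is to recognise that the pseudo marginal likelihood is exactly the normalising constant of the un-normalised RCGP posterior, namely
\[
p^{w}(\y\mid\x,\theta,\sigma^2) = \int_{\R^n} p(\f)\exp\{-n L_n^{w}(\f,\y,\x)\}\,d\f.
\]
Since $p(\f)=\mathcal{N}(\f;\m,\km)$ and, by the computation already carried out in the proof of \cref{prop:RCGP_derivation}, $n L_n^{w}(\f,\y,\x)=\tfrac12\big(\f^{\top}\sigma^{-2}J_{\w}^{-1}\f-2\f^{\top}\sigma^{-2}J_{\w}^{-1}(\y-\m_{\w}+\m)+C(\x,\y,\sigma^2)\big)$ with $C(\x,\y,\sigma^2)$ independent of $\f$, the integrand is a product of two Gaussian-shaped exponentials in $\f$, and the whole statement reduces to a single $n$-dimensional Gaussian integral.

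First I would expand the prior quadratic $(\f-\m)^{\top}\km^{-1}(\f-\m)$, collect all terms in $\f$, and read off the quadratic coefficient $A=\km^{-1}+\sigma^{-2}J_{\w}^{-1}$ and the linear coefficient $b=\km^{-1}\m+\sigma^{-2}J_{\w}^{-1}(\y-\m_{\w}+\m)$, which I would immediately rewrite as $b=A\m+\sigma^{-2}J_{\w}^{-1}(\y-\m_{\w})$ for later use. Completing the square in $\f$ and integrating contributes a factor $(2\pi)^{n/2}\lvert A\rvert^{-1/2}$; multiplying this by the prior normaliser $(2\pi)^{-n/2}\lvert\km\rvert^{-1/2}$ yields the prefactor $\big(\lvert\km\rvert\,\lvert\km^{-1}+\sigma^{-2}J_{\w}^{-1}\rvert\big)^{-1/2}$, exactly as claimed.

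The remaining, purely $\f$-independent exponent is $\tfrac12 b^{\top}A^{-1}b-\tfrac12\m^{\top}\km^{-1}\m-\tfrac12 C(\x,\y,\sigma^2)$. Substituting $b=A\m+\sigma^{-2}J_{\w}^{-1}(\y-\m_{\w})$ and expanding $b^{\top}A^{-1}b$ produces the advertised quadratic $\tfrac12(\y-\m_{\w})^{\top}\sigma^{-2}J_{\w}^{-1}A^{-1}\sigma^{-2}J_{\w}^{-1}(\y-\m_{\w})$ together with an $\m$-cross term and an $\m^{\top}A\m$ term, the latter partly cancelling $-\tfrac12\m^{\top}\km^{-1}\m$. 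I expect this bookkeeping to be the main obstacle: one must check that every leftover $\m$-dependent and constant piece regroups exactly into $-C(\x,\y,\sigma^2)$, so that only the quadratic form in $(\y-\m_{\w})$ survives alongside $-C$. A cleaner alternative that avoids most of this square-completion is to exploit the identity $p^{w}(\y\mid\x)=p(\f)\exp\{-nL_n^{w}(\f,\y,\x)\}/p^{w}(\f\mid\y,\x)$, valid for every $\f$ because the right-hand side is the constant normalising ratio; evaluating it at any fixed $\f$ and inserting the closed forms $\mathcal{N}(\f;\mu^R,\Sigma^R)$ from \cref{prop:RCGP_derivation} delivers the result directly.
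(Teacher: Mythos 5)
Your overall route is the same as the paper's: write $p^{w}(\y\mid\x,\theta,\sigma^2)=\int_{\R^n}p(\f)\exp\{-nL_n^{w}(\f,\y,\x)\}\,d\f$, insert the quadratic form of $nL_n^{w}$ obtained in \cref{app:proof_RCGP_derivation}, complete the square in $\f$, and evaluate the resulting Gaussian integral; the prefactor $\big(|\km|\,|\km^{-1}+\sigma^{-2}J_{\w}^{-1}|\big)^{-1/2}$ arises exactly as you describe. Your second, ``candidate's formula'' shortcut $p^{w}(\y\mid\x)=p(\f)\exp\{-nL_n^{w}(\f,\y,\x)\}/p^{w}(\f\mid\y,\x)$ evaluated at an arbitrary $\f$ is also valid, and is a genuinely cleaner way to get the same answer given \cref{prop:RCGP_derivation}.

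The one place your plan does not resolve as you predict is the $\m$-bookkeeping. The paper's proof implicitly takes $\m=0$: its integrand has prior exponent $-\tfrac12\f^{\top}\km^{-1}\f$ and linear term $\f^{\top}\sigma^{-2}J_{\w}^{-1}(\y-\m_{\w})$, i.e.\ it works with the zero-mean reduction throughout. If instead you keep a general $\m$, then with $A=\km^{-1}+\sigma^{-2}J_{\w}^{-1}$, $v=\sigma^{-2}J_{\w}^{-1}(\y-\m_{\w})$ and $b=A\m+v$, the $\f$-independent exponent after integration is $\tfrac12 b^{\top}A^{-1}b-\tfrac12\m^{\top}\km^{-1}\m-\tfrac12 C=\tfrac12 v^{\top}A^{-1}v+\m^{\top}v+\tfrac12\m^{\top}\sigma^{-2}J_{\w}^{-1}\m-\tfrac12 C$. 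The leftover pieces $\m^{\top}\sigma^{-2}J_{\w}^{-1}(\y-\m_{\w})$ and $\tfrac12\m^{\top}\sigma^{-2}J_{\w}^{-1}\m$ do \emph{not} regroup into $-C(\x,\y,\sigma^2)$: the quantity $C$ is fixed in \cref{app:proof_RCGP_derivation} (it contains no $\m$), and these extra terms depend on $\y$ through $J_{\w}$ and $\m_{\w}$, so they cannot be waved away as constants either. They vanish precisely when $\m=0$. So your proof goes through verbatim in the zero-mean case the paper actually treats, but for general $\m$ you must either reduce to that case first (replace $\y$ by $\y-\m$, as the paper does for the predictive) or carry the extra terms explicitly in the statement. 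A separate cosmetic point you inherit from the paper: with $nL_n^{w}$ as derived in \cref{app:proof_RCGP_derivation}, the constant enters the exponent as $-\tfrac12 C$, not $-C$; the paper's statement and proof are internally inconsistent on this factor.
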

\begin{proof}
    \begin{IEEEeqnarray}{rCl}
    p^{w}(\y| \x,\theta,\sigma^2) &=& \int_{\R^n} p(\f | \x, \theta,\sigma^2)  \exp\{ -n L_n^{w}(\f, \y, \x)\} d\f \nonumber\\
    &=&   \dfrac{1}{\sqrt{(2\pi)^n |\km|}}\int_{\R^n} \exp\left(-\dfrac{1}{2}\f^{\top}\km^{-1}\f -\dfrac{1}{2}\f^{\top}\sigma^{-2}J_{\w}^{-1}\f  + \f^{\top}\sigma^{-2}J_{\w}^{-1}(\y-m_{\w})-C(\x,\y, \sigma^2)\right) d\f\nonumber \\
    &=&   \dfrac{1}{\sqrt{(2\pi)^n |\km|}}\int_{\R^n} \exp\left(-\dfrac{1}{2}\f^{\top}(\km^{-1}+\sigma^{-2}J_{\w}^{-1})\f + \f^{\top}\sigma^{-2}J_{\w}^{-1}(\y-m_{\w}) - C(\x,\y, \sigma^2) \right) d\f. \nonumber 
\end{IEEEeqnarray}
Integrating over $\f$ now yields
\begin{IEEEeqnarray}{rCl}
    &&p^{w}(\y| \x,\theta,\sigma^2) \nonumber\\&=& \dfrac{1}{\sqrt{(2\pi)^n |\km|}} \dfrac{\sqrt{(2\pi)^n}}{ \sqrt{|\km^{-1}+\sigma^{-2}J_{\w}^{-1}|}} \exp\left(\dfrac{1}{2}(\y-m_{\w})^{\top}\sigma^{-2}J_{\w}^{-1}(\km^{-1}+\sigma^{-2}J_{\w}^{-1})^{-1}\sigma^{-2}J_{\w}^{-1}(\y-m_{\w}) - C(\x,\y, \sigma^2) \right) \nonumber\\
     &=&   \dfrac{1}{\sqrt{ |\km||\km^{-1}+\sigma^{-2}J_{\w}^{-1}|}} \exp\left(\dfrac{1}{2}(\y-m_{\w})^{\top}\sigma^{-2}J_{\w}^{-1}(\km^{-1}+\sigma^{-2}J_{\w}^{-1})^{-1}\sigma^{-2}J_{\w}^{-1}(\y-m_{\w}) - C(\x,\y, \sigma^2) \right). \nonumber
\end{IEEEeqnarray}
\end{proof}
\subsection{Leave-one-out Cross-validation Predictive}
\label{app:loo}
The hyperparameters obtained by the leave-one-out cross-validation predictive posteriors are
\begin{IEEEeqnarray}{rCl}
    \hat{\sigma}^2, \hat{\theta} & = &
    \argmax_{\sigma^2, \theta}\Big\{
    \sum_{i=1}^{n}\log p^{w}(y_i|\x,\y_{-i},\theta,\sigma^2)
    \Big\},
    \nonumber
\end{IEEEeqnarray}
where $\y_{-i}=(y_1,\ldots,y_{i-1},y_{i+1},\ldots,y_{n})$. By \cref{prop:RCGP_derivation}, $p^w(y_{i}|\x,\y_{-i},\theta,\sigma^2)=\mathcal{N}(\mu_{i}^{R}, \sigma_{i}^{R} +\sigma^2)$ with
\begin{IEEEeqnarray}{rCl}
    \mu_{i}^{R} &=& z_{i}+\m_{i}-[\left(\km+\sigma^{2}J_{\w}\right)^{-1}\mathbf{z}]_{i}  
    [(K+\sigma^2 J_\w)^{-1}]_{ii}^{-1},
    \nonumber\\
    \sigma_{i}^{R}&=&
    [(K+\sigma^2 J_\w)^{-1}]_{ii}^{-1} -  \frac{\sigma^4}{2} w(x_i, y_i)^{-2}, 
    \nonumber
\end{IEEEeqnarray}
for $\mathbf{z} = \y-\m_\w$ and $\mathbf{z} = (z_1, \dots z_n)$. 
\begin{proof}
    Without loss of generality, we will derive the predictive for $i=n$, which can be extended for an arbitrary $i\in\{1,...,n\}$ using a permutation matrix. Let $p^w(y_{n}|\x,\y_{-n},\theta,\sigma^2)=\mathcal{N}(\mu_{n}^{R}, \sigma_{n}^{R} +\sigma^2)$ with
\begin{IEEEeqnarray}{rCl}
    \mu_{n}^{R}
    &=&\m_n + \km_{1:n-1;n}^{\top}[\km+\sigma^{2}J_{\w^{c}}]_{1:n-1;1:n-1}^{-1}\mathbf{z}_{1:n-1},\nonumber\\
    \Sigma_{\star}^{R}
    &=& k_{nn}-\km_{1:n-1;n}^{\top}[\km+\sigma^{2}J_{\w^{c}}]_{1:n-1;1:n-1}^{-1}\km_{1:n-1;n}.\nonumber
\end{IEEEeqnarray}
where $[\km+\sigma^{2}J_{\w}]_{1:n-1;1:n-1}$ denotes the submatrix formed from rows $\{1,...,n-1\}$ and columns $\{1,...,n-1\}$,  $\km_{1:n-1;n}$ denotes the submatrix formed from $n^{\text{th}}$ row and columns $\{1,...,n-1\}$, and $k_{nn}$ denotes the element from $n^{\text{th}}$ row and $n^{\text{th}}$ columns. Now, we observe that we can write the matrix $\km+\sigma^{2}J_{\w^{c}}$ as 
\begin{IEEEeqnarray}{rCl}
    \km+\sigma^{2}J_{\w}
    = \begin{pmatrix}
        [\km+\sigma^{2}J_{\w}]_{1:n-1;1:n-1} & \km_{n;1:n-1} \\
        \km_{1:n-1;n} & \km_{nn}+\frac{\sigma^4}{2} w(x_n, y_n)^{-2}
    \end{pmatrix} = \begin{pmatrix}
        A & B \\
        C & D
    \end{pmatrix}  \nonumber
\end{IEEEeqnarray}
where we use $A$, $B$, $C$ and $D$ for clarity in the derivation.
Applying block matrix inversion it is now notationally cumbersome, but it is easy to show that.
\begin{IEEEeqnarray}{rCl}
    (\km+\sigma^{2}J_{\w})^{-1}
    = \begin{pmatrix}
       A^{-1} + A^{-1}B\left(D - CA^{-1}B\right)^{-1}CA^{-1} &
      -A^{-1}B\left(D - CA^{-1}B\right)^{-1} \\
    -\left(D-CA^{-1}B\right)^{-1}CA^{-1} &
       \left(D - CA^{-1}B\right)^{-1}
    \end{pmatrix}.\nonumber
\end{IEEEeqnarray}
Therefore, we have that
\begin{IEEEeqnarray}{rCl}
    [(\km+\sigma^{2}J_{\w})^{-1}]_{nn}&=&
       \left(D - CA^{-1}B\right)^{-1} \nonumber \\
       &=&
       \left(\km_{nn}+\frac{\sigma^4}{2} w(x_n, y_n)^{-2} - \km_{1:n-1;n}^{\top}[\km+\sigma^{2}J_{\w}]_{1:n-1;1:n-1}^{-1}\km_{1:n-1;n}\right)^{-1} \nonumber.
\end{IEEEeqnarray}
This leads to
\begin{IEEEeqnarray}{rCl}
    \km_{nn} - \km_{1:n-1;n}^{\top}[\km+\sigma^{2}J_{\w}]_{1:n-1;1:n-1}^{-1}\km_{1:n-1;n} &=& \dfrac{1}{[(\km+\sigma^{2}J_{\w})^{-1}]_{nn}} - \frac{\sigma^4}{2} w(x_n, y_n)^{-2}\nonumber,
\end{IEEEeqnarray}
which gives us the desired variance:
\begin{IEEEeqnarray}{rCl}
    \sigma_{n}^{R}&=&
    [(K+\sigma^2 J_{\w})^{-1}]_{nn}^{-1} - \frac{\sigma^4}{2} w(x_n, y_n)^{-2}, 
    \nonumber
\end{IEEEeqnarray}
Now, for the mean, we use again the block matrix inversion to get
\begin{IEEEeqnarray}{rCl}
    (\km+\sigma^{2}J_{\w})^{-1}\mathbf{z}
    = \begin{pmatrix}
       A^{-1} + A^{-1}B\left(D - CA^{-1}B\right)^{-1}CA^{-1} &
      -A^{-1}B\left(D - CA^{-1}B\right)^{-1} \\
    -\left(D-CA^{-1}B\right)^{-1}CA^{-1} &
       \left(D - CA^{-1}B\right)^{-1}
    \end{pmatrix}\begin{pmatrix}
     \mathbf{z}_{1:n-1} & \\
    z_{n}
    \end{pmatrix}.\nonumber
\end{IEEEeqnarray}
Therefore, we have that
\begin{IEEEeqnarray}{rCl}
    [(\km+\sigma^{2}J_{\w})^{-1}\mathbf{z}]_{n}&=&
       -\left(D-CA^{-1}B\right)^{-1}CA^{-1}\mathbf{z}_{1:n-1} + \left(D-CA^{-1}B\right)^{-1}z_{n}\nonumber\\
       &=&\left(D-CA^{-1}B\right)^{-1}(z_{n}-CA^{-1}\mathbf{z}_{1:n-1})\nonumber.
\end{IEEEeqnarray}
Noting that $\left(D-CA^{-1}B\right)^{-1} = [(\km+\sigma^{2}J_{\w^{c}})^{-1}]_{nn}$, and replacing $A$ and $C$ by their values we obtain
\begin{IEEEeqnarray}{rCl} 
    [(\km+\sigma^{2}J_{\w})^{-1}\mathbf{z}]_{n}
       &=&[(\km+\sigma^{2}J_{\w})^{-1}]_{nn}(z_{n}-\km_{1:n-1;n}^{\top}[\km+\sigma^{2}J_{\w}]_{1:n-1;1:n-1}^{-1}\mathbf{z}_{1:n-1})\nonumber.
\end{IEEEeqnarray}
Finally, rearranging terms, we note that
\begin{IEEEeqnarray}{rCl} 
    \km_{1:n-1;n}^{\top}[\km+\sigma^{2}J_{\w}]_{1:n-1;1:n-1}^{-1}\mathbf{z}_{1:n-1} = z_{n}-[\left(\km+\sigma^{2}J_{\w}\right)^{-1}\mathbf{z}]_{n}  
    [(K+\sigma^2 J_\w)^{-1}]_{nn}^{-1}\nonumber
\end{IEEEeqnarray}
Leading to the desired mean function:
\begin{IEEEeqnarray}{rCl}
    \mu_{n}^{R} &=& z_{n}+\m_{n}-[\left(\km+\sigma^{2}J_{\w}\right)^{-1}\mathbf{z}]_{n}  
    [(K+\sigma^2 J_\w)^{-1}]_{nn}^{-1},
    \nonumber
\end{IEEEeqnarray}
\end{proof}
\subsection{Proof of \Cref{prop:robustness}}\label{app:proof_robustness}
First, define the contamination of the dataset $D=\{(x_i, y_i)\}_{i=1}^{n}$ by the datum $y_m^c$ as $D^{c}_{m}=(D \setminus \{(x_m, y_m)\}) \cup \{(x_m,y^{c}_{m})\}$ for some $m\in\{1,\ldots,n\}$. Let $\y = (y_1,...,y_n)^{\top}$ and $\y^{c} = (y_1,...,y_{m-1}, y_{m}^{c}, y_{m+1},...,y_{n})^{\top}$. 

\paragraph{PIF for the standard GP}
GP regression has the PIF for some constant $C_1\in \R$.
\begin{IEEEeqnarray}{rCl}
   && \operatorname{PIF}_{\operatorname{GP}}(y^{c}_{m}, D) = C_1 (y_m-y^{c}_{m})^{2}, \nonumber
\end{IEEEeqnarray}
and is not robust: $\operatorname{PIF}_{\text{GP}}(y^{c}_{m}, f, D) \rightarrow   \infty$ as $|y^{c}_{m}| \to  \infty$.
\begin{proof}
    Let $p(\f | D) = \mathcal{N}(\f;\mu, \Sigma)$  and $p(\f | D^{c}_{m})= \mathcal{N}(\f;\mu^{c}, \Sigma_{c})$ the uncontaminated and contaminated standard GP posterior respectively. Here,
\begin{IEEEeqnarray}{lCl}
\mu= \m +\km (\km+\sigma^{2} I_{n})^{-1}(\y-\m) &  \qquad  &\mu_{c} = \m +\km (\km+\sigma^{2} I_{n})^{-1}(\y^{c}-\m)\nonumber\\
\Sigma =  \km (\km+ \sigma^{2} I_{n})^{-1} \sigma^{2} I_{n}& \qquad  &\Sigma_{c}= \km (\km+ \sigma^{2} I_{n})^{-1} \sigma^{2} I_{n} \nonumber.
\end{IEEEeqnarray}
Therefore, the PIF has the form
\begin{IEEEeqnarray}{rCl}
    \operatorname{PIF}_{\operatorname{GP}}(y^{c}_{m}, D) =
  \frac{1}{2}\left(
    \Tr\left(\Sigma_{c}^{-1}\Sigma\right)  - n +
    \left(\mu_{c} - \mu\right)^\mathsf{T} (\Sigma_{c})^{-1}\left(\mu_{c} - \mu\right) +
    \ln\left(\frac{\det\Sigma_{c}}{\det\Sigma}\right)
  \right). \nonumber
\end{IEEEeqnarray}
We observe that $\Sigma_{c} = \Sigma$ since they do not depend on $y$, so that
\begin{IEEEeqnarray}{rCl}
    \Tr\left(\Sigma_{c}^{-1}\Sigma\right)  - n &=& \Tr\left(I_{n}\right)  - n = n - n = 0,\nonumber\\
    \ln\left(\frac{\det(\Sigma_{c})}{\det(\Sigma)}\right) &=& \ln\left(\det(\Sigma_{c})\det(\Sigma^{-1})\right) = \ln\left(\det(\Sigma_{c}\Sigma^{-1})\right) = \ln\left(\det(I_n)\right) = 0. \nonumber
\end{IEEEeqnarray}
This finally leads to the PIF
\begin{IEEEeqnarray}{rCl}
    \operatorname{PIF}_{\operatorname{GP}}(y^{c}_{m}, D) =
  \frac{1}{2}\left(
    \left(\mu_{c} - \mu\right)^\mathsf{T} (\Sigma_{c})^{-1}\left(\mu_{c} - \mu\right)
  \right). \nonumber
\end{IEEEeqnarray}
We now notice that the term $\mu_{c} - \mu $ can be written as
\begin{IEEEeqnarray}{rCl}
    \mu_{c} - \mu &=& (\m +\km (\km+\sigma^{2} I_{n})^{-1}(\y-\m))   - (\m +\km (\km+\sigma^{2} I_{n})^{-1}(\y_{c}-\m))  \nonumber\\
    &=&\km (\km+\sigma^{2} I_{n})^{-1}(\y-\y^{c})\nonumber.
\end{IEEEeqnarray}
Substituting the relevant expressions for $\mu_{c} - \mu$ and $\Sigma_{c}$ above, we find 
\begin{IEEEeqnarray}{rCl}
    \operatorname{PIF}_{\operatorname{GP}}(y^{c}_{m}, D) &=&
  \frac{1}{2}\left(
    (\y-\y^{c})^\mathsf{T}(\km+\sigma^{2} I_{n})^{-1}\km (\km (\km+ \sigma^{2} I_{n})^{-1} \sigma^{2} I_{n})^{-1}\km (\km+\sigma^{2} I_{n})^{-1}(\y-\y^{c})
  \right). \nonumber \\
  &=&
  \frac{1}{2}\left(
    (\y-\y^{c})^\mathsf{T}(\km+\sigma^{2} I_{n})^{-1} \km \sigma^{-2} I_{n}(\y-\y^{c})
  \right). \nonumber 
\end{IEEEeqnarray}
Finally, since $\y$ and $\y^{c}$ are the same except for the $m^{\text{th}}$ element, the above expression is equal to
\begin{IEEEeqnarray}{rCl}
    \operatorname{PIF}_{\operatorname{GP}}(y^{c}_{m}, D) =
  \frac{1}{2}\left(
    \left[(\km+\sigma^{2} I_{n})^{-1} \km \sigma^{-2} I_{n}\right]_{mm}(y_m-y^{c}_m)^2
  \right). \nonumber
\end{IEEEeqnarray}
\end{proof}

\paragraph{PIF for the RCGP} 
For RCGPs with  $\sup_{x,y} w(x,y) < \infty$, it holds that
\begin{IEEEeqnarray}{rCl}
        \operatorname{PIF}_{\operatorname{RCGP}}(y^{c}_{m}, D) & \leq &  C_1 (w(x_{m},y^{c}_{m})^2 y^{c}_{m})^{2} + C_2,
   \nonumber
\end{IEEEeqnarray}
for some constants $C_1, C_2\in \R$. Thus, if $\sup_{x,y}\left\{y \cdot w(x,y)^2\right\} < \infty$,  RCGP is robust since $\sup_{y^{c}_{m}} |\operatorname{PIF}_{\operatorname{RCGP}}(y^{c}_{m},  D)| < \infty$.

\begin{proof}
Without loss of generality, we will prove the bound for $m=n$, which can be extended for an arbitrary $m\in\{1,\ldots,n\}$ using a permutation matrix.
Let $p^{w}(\f | D) = \mathcal{N}(\f; \mu^{R}, \Sigma^{R})$  and $p^{w}(\f | D^{c}_{m})= \mathcal{N}(\f;\mu^{c}_{R}, \Sigma^{c}_{R})$ denote the uncontaminated and contaminated standard posterior respectively. %
Here,
\begin{IEEEeqnarray}{lCl}
\mu^{R}= \m +\km\left(\km+\sigma^{2}J_{\w}\right)^{-1}(\y-m_{\w}) &  \qquad  &\mu^{R}_{c} = \m +\km\left(\km+\sigma^{2}J_{\w^{c}}\right)^{-1}(\y-m_{\w^{c}})\nonumber\\
\Sigma^{R} =  \km\left(\km+\sigma^{2}J_{\w}\right)^{-1}\sigma^{2}J_{\w}& \qquad  &\Sigma^{R}_{c}= \km\left(\km+\sigma^{2}J_{\w^{c}}\right)^{-1}\sigma^{2}J_{\w^{c}} \nonumber.
\end{IEEEeqnarray}
Here $\w^{c} = (w(x_1,y_1),...,w(x_n,y_n^c))^{\top}$. Therefore, the PIF has the form
\begin{IEEEeqnarray}{rCl}
    \operatorname{PIF}_{\operatorname{RCGP}}(y^{c}_{m}, D) =
  \frac{1}{2}\left(
    \underbrace{\Tr\left((\Sigma^{R}_{c})^{-1}\Sigma^{R}\right) - n}_{(1)}  +
    \underbrace{\left(\mu^{R}_{c} - \mu^{R}\right)^\mathsf{T} (\Sigma^{R}_{c})^{-1}\left(\mu^{R}_{c} - \mu^{R}\right)}_{(2)} +
    \underbrace{\ln\left(\frac{\det\Sigma^{R}_{c}}{\det\Sigma^{R}}\right)}_{(3)}
  \right). \nonumber
\end{IEEEeqnarray}
Now, we will get a bound for each term in the PIF. The first term can be bound as
\begin{IEEEeqnarray}{rCl}
    (1) = \Tr\left((\Sigma^{R}_{c})^{-1}\Sigma^{R}\right) - n &=&\Tr\left(\sigma^{-2}J_{\w^{c}}^{-1} \left(\km+\sigma^{2}J_{\w^{c}}\right) \left(\km+\sigma^{2}J_{\w}\right)^{-1}\sigma^{2}J_{\w}\right)- n  \nonumber\\
    &\leq&\Tr\left(\sigma^{-2}J_{\w^{c}}^{-1} \left(\km+\sigma^{2}J_{\w^{c}}\right)\right) \Tr\left(\left(\km+\sigma^{2}J_{\w}\right)^{-1}\sigma^{2}J_{\w}\right)- n  \nonumber,
    \end{IEEEeqnarray}
where we use the fact that for two positive semidefinite matrices $A$, $B$, it holds that $\Tr(AB)\leq\Tr(A)\Tr(B)$. Observing that $\left(\km+\sigma^{2}J_{\w}\right)^{-1}\sigma^{2}J_{\w}$ does not depend on the contamination, we can now write $\widetilde{C}_1 = \Tr(\left(\km+\sigma^{2}J_{\w}\right)^{-1}\sigma^{2}J_{\w})$, so that by using the arithmetic rules of traces, we obtain
\begin{IEEEeqnarray}{rCl}
   (1) &\leq&\Tr\left(\sigma^{-2}J_{\w^{c}}^{-1} \left(\km+\sigma^{2}J_{\w^{c}}\right)\right) \widetilde{C}_1 - n  \nonumber\\
    &=&\Tr\left(\sigma^{-2}J_{\w^{c}}^{-1}\km+ I_n\right) \widetilde{C}_1 - n  \nonumber\\
    &=&(\Tr\left(\sigma^{-2}J_{\w^{c}}^{-1}\km\right) + n) \widetilde{C}_1 - n  \nonumber\\
    &=&\left(\sum_{i=1}^{n}\sigma^{-2}w^{2}(x_i,y_y)\km_{ii} + n\right) \widetilde{C}_1 - n . \nonumber
\end{IEEEeqnarray}
Finally, since  $\sup_{x,y} w(x,y) < \infty$, the entire expression can be bounded by a constant $\widetilde{C}_2$ that does not depend on the contamination $y^c_n$ as
\begin{IEEEeqnarray}{rCl}
   (1) &\leq& \left(\sigma^{-2}\sup_{x,y} w(x,y)\sum_{i=1}^{n}\km_{ii} + n\right) \widetilde{C}_1 - n  = \widetilde{C}_2.\nonumber
\end{IEEEeqnarray}

Next, we tackle the  second term by noting that
\begin{IEEEeqnarray}{rCl}
(2) = \left(\mu^{R}_{c} - \mu^{R}\right)^\mathsf{T} (\Sigma^{R}_{c})^{-1}\left(\mu^{R}_{c} - \mu^{R}\right) \leq \lambda_{\max}((\Sigma^{R}_{c})^{-1}) \|\mu^{R}_{c} - \mu^{R}\|_{2}^{2}\leq \lambda_{\max}((\Sigma^{R}_{c})^{-1}) \|\mu^{R}_{c} - \mu^{R}\|_{1}^{2}, \nonumber
\end{IEEEeqnarray}
where $\lambda_{\max}((\Sigma^{R}_{c})^{-1})$ is the maximum eigenvalue of $(\Sigma^{R}_{c})^{-1}$. Then, expanding $\lambda_{\max}((\Sigma^{R}_{c})^{-1})$ and using Weyl's inequality, we get:
\begin{IEEEeqnarray}{rCl}
\lambda_{\max}((\Sigma^{R}_{c})^{-1}) &=& \lambda_{\max}(\km^{-1}+\sigma^{-2}J_{\w^{c}}^{-1}) \leq \lambda_{\max}(\km^{-1})+\lambda_{\max}(\sigma^{-2}J_{\w^{c}}^{-1})) \nonumber.
\end{IEEEeqnarray}
Since $J_{\w^{c}}^{-1} =\diag((\w^{c})^{2})$ 
, and $\sup_{x,y} w(x,y) < \infty$, it holds that $\lambda_{\max}(\sigma^{-2}J_{\w^{c}}^{-1})) = \widetilde{C}_3 < +\infty$, so that we have
\begin{IEEEeqnarray}{rCl}
\lambda_{\max}((\Sigma^{R}_{c})^{-1}) &=&  \lambda_{\max}(\km^{-1})+\widetilde{C}_3 = \widetilde{C}_4\nonumber.
\end{IEEEeqnarray}
We replace this in the expression for $(2)$ to obtain 
\begin{IEEEeqnarray}{rCl}
(2) &\leq& \widetilde{C}_4\|\mu^{R}_{c} - \mu^{R}\|_{1}^{2} \nonumber\\
&=&  \widetilde{C}_4\|\m +\km\left(\km+\sigma^{2}J_{\w^{c}}\right)^{-1}(\y-m_{\w^{c}}) - \m -\km\left(\km+\sigma^{2}J_{\w}\right)^{-1}(\y-m_{\w})\|_{1}^{2} \nonumber\\
&=&  \widetilde{C}_4\|\km(\left(\km+\sigma^{2}J_{\w^{c}}\right)^{-1}(\y-m_{\w^{c}})-\left(\km+\sigma^{2}J_{\w}\right)^{-1}(\y-m_{\w}))\|_{1}^{2} \nonumber.
\end{IEEEeqnarray}
Applying Cauchy-Schwartz we obtain:
\begin{IEEEeqnarray}{rCl}
(2) &\leq&  \widetilde{C}_4\|\km\|_{F}\|\left(\km+\sigma^{2}J_{\w^{c}}\right)^{-1}(\y-m_{\w^{c}})-\left(\km+\sigma^{2}J_{\w}\right)^{-1}(\y-m_{\w})\|_{1}^{2} \nonumber,
\end{IEEEeqnarray}
where $\|.\|_{F}$ denotes the Frobenius norm.  Now, let's write $\km+\sigma^{2}J_{\w^{c}}$ as the block matrix
\begin{IEEEeqnarray}{rCl}
    \km+\sigma^{2}J_{\w^{c}} &=& \begin{pmatrix}
        [\km+\sigma^{2}J_{\w^{c}}]_{1:n-1;1:n-1} & [\km+\sigma^{2}J_{\w^{c}}]_{n;1:n-1} \\
        [\km+\sigma^{2}J_{\w^{c}}]_{1:n-1;n} & [\km+\sigma^{2}J_{\w^{c}}]_{nn}
    \end{pmatrix} \nonumber\\
    &=& \begin{pmatrix}
        [\km+\sigma^{2}J_{\w^{c}}]_{1:n-1;1:n-1} & \km_{n;1:n-1} \\
        \km_{1:n-1;n} & \km_{nn}+\sigma^{2}w(x_n,y_n^c)^{-2}
    \end{pmatrix}, \nonumber
\end{IEEEeqnarray}
where $[\km+\sigma^{2}J_{\w^{c}}]_{1:n-1;1:n-1}$ denotes the submatrix formed from rows $\{1,...,n-1\}$ and columns $\{1,...,n-1\}$, and $\km_{1:n-1;n}$ denotes the submatrix formed from $n^{\text{th}}$ row and columns $\{1,...,n-1\}$. The second equality holds since $J_{\w^{c}}$ is diagonal. Because we assumed that the contamination is in the $n^{\text{th}}$ term, $[\km+\sigma^{2}J_{\w^{c}}]_{1:n-1;1:n-1}$ = $[\km+\sigma^{2}J_{\w}]_{1:n-1;1:n-1}$, so that
\begin{IEEEeqnarray}{rCl}
    \km+\sigma^{2}J_{\w^{c}}
    = \begin{pmatrix}
        [\km+\sigma^{2}J_{\w}]_{1:n-1;1:n-1} & \km_{n;1:n-1} \\
        \km_{1:n-1;n} & \km_{nn}+\sigma^{2}w(x_n,y_n^c)^{-2}
    \end{pmatrix} = \begin{pmatrix}
        A & B \\
        C & q(y_n^c)
    \end{pmatrix}  \nonumber
\end{IEEEeqnarray}
where we use $A$, $B$, and $C$ for clarity in the derivation and to emphasise that these submatrices do not depend on the contamination and therefore can be treated as constants. 
Applying block matrix inversion, it is now notationally cumbersome but easy to show that
\begin{IEEEeqnarray}{rCl}
    (\km+\sigma^{2}J_{\w^{c}})^{-1}
    = \begin{pmatrix}
        A^{-1} +A^{-1}B q(y_n^c)^{-1}CA^{-1} & -A^{-1}Bq(y_n^c)^{-1} \\
        -q(y_n^c)^{-1}CA^{-1} & q(y_n^c)^{-1}
    \end{pmatrix}.\nonumber
\end{IEEEeqnarray}
We can now use this to rewrite the matrix-vector product 
\begin{IEEEeqnarray}{rCl}
    &&\left(\km+\sigma^{2}J_{\w^{c}}\right)^{-1}(\y-m_{\w^{c}})
    \nonumber \\
    &=& \begin{pmatrix}
        A^{-1} +A^{-1}B q(y_n^c)^{-1}CA^{-1} & -A^{-1}Bq(y_n^c)^{-1} \\
        -q(y_n^c)^{-1}CA^{-1} & q(y_n^c)^{-1}
    \end{pmatrix}\begin{pmatrix}
        \y_{1:n-1}\\
        y_{n}^{c}
    \end{pmatrix} \nonumber \\
    &=& \begin{pmatrix}
        (A^{-1} +A^{-1}B q(y_n^c)^{-1}CA^{-1})\y_{1:n-1} -A^{-1}Bq(y_n^c)^{-1}y_{n}^{c} \\
        -q(y_n^c)^{-1}CA^{-1}\y_{1:n-1} + q(y_n^c)^{-1}y_{n}^{c}
    \end{pmatrix},\nonumber
\end{IEEEeqnarray}
where $\y_{1:n-1} = (y_1,...,y_{n-1})^{\top}$. Replicating  the same steps for the uncontaminated terms, we similarly obtain:
\begin{IEEEeqnarray}{rCl}
    &&\left(\km+\sigma^{2}J_{\w}\right)^{-1}(\y-m_{\w})
    \nonumber \\
    &=& \begin{pmatrix}
        (A^{-1} +A^{-1}B q(y_n)^{-1}CA^{-1})\y_{1:n-1} -A^{-1}Bq(y_n)^{-1}y_{n} \\
        -q(y_n)^{-1}CA^{-1}\y_{1:n-1} + q(y_n)^{-1}y_{n}
    \end{pmatrix}.\nonumber
\end{IEEEeqnarray}
Now, we can write
\begin{IEEEeqnarray}{rCl}
    &&\sum_{i=1}^{n-1}\Big|\big(\left(\km+\sigma^{2}J_{\w^{c}}\right)^{-1}(\y-m_{\w^{c}})-\left(\km+\sigma^{2}J_{\w}\right)^{-1}(\y-m_{\w})\big)_{i}\Big| \nonumber\\
    &=&\sum_{i=1}^{n-1}\Big|\big(A^{-1}B q(y_n^c)^{-1}CA^{-1}\y_{1:n-1} -A^{-1}Bq(y_n^c)^{-1}y_{n}^{c} - A^{-1}B q(y_n)^{-1}CA^{-1}\y_{1:n-1} + A^{-1}Bq(y_n)^{-1}y_{n}\big)_{i}\Big|.\nonumber
\end{IEEEeqnarray}
Using the triangle inequality, we can bound this as
\begin{IEEEeqnarray}{rCl}
    &\leq&\sum_{i=1}^{n-1}\Big|\big(A^{-1}B q(y_n^c)^{-1}CA^{-1}\y_{1:n-1}\big)_{i}\Big| +\Big|\big(A^{-1}Bq(y_n^c)^{-1}y_{n}^{c}\big)_{i}\Big|\nonumber \\  &&\qquad+ \underbrace{\Big|\big(A^{-1}B q(y_n)^{-1}CA^{-1}\y_{1:n-1}\big)_{i}\Big| + \Big|\big(A^{-1}Bq(y_n)^{-1}y_{n}\Big|}_{\widetilde{C}_5}\nonumber\\
    &=&\widetilde{C}_5 + \sum_{i=1}^{n-1}\Big|q(y_n^c)^{-1}\Big|\Big|\big(A^{-1}BCA^{-1}\y_{1:n-1}\big)_{i}\Big| +\Big|q(y_n^c)^{-1}y_{n}^{c}\Big|\Big|\big(A^{-1}B\big)_{i}\Big| \nonumber\\
    &=&\widetilde{C}_5 + \Big|q(y_n^c)^{-1}\Big|\sum_{i=1}^{n-1}\Big|\big(A^{-1}BCA^{-1}\y_{1:n-1}\big)_{i}\Big| +\Big|q(y_n^c)^{-1}y_{n}^{c}\Big|\sum_{i=1}^{n-1}\Big|\big(A^{-1}B\big)_{i}\Big| \nonumber
\end{IEEEeqnarray}
where $\widetilde{C}_5$ are all terms that do not depend on the contamination. Now, we can observe that:
\begin{IEEEeqnarray}{rCl}
     \Big|q(y_n^c)^{-1}\Big|= \frac{1}{\Big|\km_{nn}+\sigma^{2}w(x_n,y_n^c)^{-2}\Big|}= \frac{\sigma^{-2}w(x_n,y_n^c)^{2}}{\Big|\km_{nn}\sigma^{-2}w(x_n,y_n^c)^{2}+1\Big|}\leq\sigma^{-2}w(x_n,y_n^c)^{2},\nonumber
\end{IEEEeqnarray}   
where the last inequality holds because $\km_{nn}\sigma^{-2}w(x_n,y_n^c)^{2}>0$. Therefore, since $\sup_{x,y} w(x,y) < \infty$, we can write
\begin{IEEEeqnarray}{rCl}
    &&\widetilde{C}_5 + \Big|q(y_n^c)^{-1}\Big|\sum_{i=1}^{n-1}\Big|\big(A^{-1}BCA^{-1}\y_{1:n-1}\big)_{i}\Big| +\Big|q(y_n^c)^{-1}y_{n}^{c}\Big|\sum_{i=1}^{n-1}\Big|\big(A^{-1}B\big)_{i}\Big| \nonumber\\
    &\leq&  \widetilde{C}_5 + \Big|q(y_n^c)^{-1}\Big|\sum_{i=1}^{n-1}\Big|\big(A^{-1}BCA^{-1}\y_{1:n-1}\big)_{i}\Big| +\Big|q(y_n^c)^{-1}y_{n}^{c}\Big|\sum_{i=1}^{n-1}\Big|\big(A^{-1}B\big)_{i}\Big| \nonumber\\
     &\leq&  \widetilde{C}_6 +\Big|w(x_n,y_n^c)^{2}y_{n}^{c}\Big|\widetilde{C}_7 \nonumber
\end{IEEEeqnarray}
For $\widetilde{C}_6 = \widetilde{C}_5 + \sigma^{-2}\sup_{x,y} w(x,y)^{2}\sum_{i=1}^{n-1}\Big|\big(A^{-1}BCA^{-1}\y_{1:n-1}\big)_{i}\Big|$, and $\widetilde{C}_7=\sigma^{-2}\Big|\sum_{i=1}^{n-1}\Big|\big(A^{-1}B\big)_{i}\Big|$. Now, similarly for the $n^{\text{th}}$ term we have
\begin{IEEEeqnarray}{rCl}
    \Big|\left(\km+\sigma^{2}J_{\w^{c}}\right)^{-1}(\y-m_{\w^{c}})-(\km&+&\sigma^{2}J_{\w})^{-1}(\y-m_{\w})\big)_{n}\Big| \nonumber\\
    &=&\Big|-q(y_n^c)^{-1}CA^{-1}\y_{1:n-1} + q(y_n^c)^{-1}y_{n}^{c}+q(y_n)^{-1}CA^{-1}\y_{1:n-1} - q(y_n)^{-1}y_{n}\Big|\nonumber\\
    &\leq&\Big|q(y_n^c)^{-1}CA^{-1}\y_{1:n-1}\Big| + \Big|q(y_n^c)^{-1}y_{n}^{c}\Big|+\Big|q(y_n)^{-1}CA^{-1}\y_{1:n-1}\Big| + \Big|q(y_n)^{-1}y_{n}\Big|\nonumber\\
    &\leq&\widetilde{C}_8 + \widetilde{C}_9\Big|w(x_n,y_{n}^{c})^2y_{n}^{c}\Big|\nonumber,
\end{IEEEeqnarray}
for $\widetilde{C}_8 = \sigma^{-2}\sup_{x,y} w(x,y)^{2}\Big|CA^{-1}\y_{1:n-1}\Big| +\Big|q(y_n)^{-1}CA^{-1}\y_{1:n-1}\Big| + \Big|q(y_n)^{-1}y_{n}\Big|$ and $\widetilde{C}_9 =  \sigma^{-2}$.
Putting both expressions together, we obtain:
\begin{IEEEeqnarray}{rCl}
(2) &\leq&  \widetilde{C}_4\|\km\|_{F}\|\left(\km+\sigma^{2}J_{\w^{c}}\right)^{-1}(\y-m_{\w^{c}})-\left(\km+\sigma^{2}J_{\w}\right)^{-1}(\y-m_{\w})\|_{1}^{2} \nonumber\\
&\leq&\widetilde{C}_4\|\km\|_{F}\|2((\widetilde{C}_6+\widetilde{C}_8)^2 + (\widetilde{C}_7+\widetilde{C}_9)^2(w(x_n,y_{n}^{c})^2y_{n}^{c})^2)\nonumber\\
&\leq&\widetilde{C}_{10} + \widetilde{C}_{11}(w(x_n,y_{n}^{c})^2y_{n}^{c})^2\nonumber,
\end{IEEEeqnarray}
where $\widetilde{C}_{10}=\widetilde{C}_4\|\km\|_{F}\|2((\widetilde{C}_6+\widetilde{C}_8)^2$, $\widetilde{C}_{11}=\widetilde{C}_4\|\km\|_{F}\|2(\widetilde{C}_7+\widetilde{C}_9)^2$. Lastly, the third and final term can be rewritten using  properties of determinants as
\begin{IEEEeqnarray}{rCl}
    (3) = \ln\left(\frac{\det\Sigma^{R}_{c}}{\det\Sigma^{R}}\right) &=& \ln\left(\frac{\det\left(\left(\km+\sigma^{2}J_{\w^{c}}\right)^{-1}\sigma^{2}J_{\w^{c}}\right)}{\det\left(\left(\km+\sigma^{2}J_{\w}\right)^{-1}\sigma^{2}J_{\w}\right)}\right)\nonumber \\
     &=& \ln\left(\det\left(\left(\km+\sigma^{2}J_{\w}\right)\sigma^{-2}J_{\w}^{-1}\right)\det\left(\left(\km+\sigma^{2}J_{\w^{c}}\right)^{-1}\right)\det\left(\sigma^{2}J_{\w^{c}}\right)\right)\nonumber.
\end{IEEEeqnarray}
Here, we defined $\widetilde{C}_{12} = \det\left(\left(\km+\sigma^{2}J_{\w}\right)\sigma^{-2}J_{\w}^{-1}\right)$ since it does not depend on the contamination, and write
\begin{IEEEeqnarray}{rCl}
    (3) 
     &=& \ln\left(\widetilde{C}_{12}\det\left(\left(\km+\sigma^{2}J_{\w^{c}}\right)^{-1}\right)\det\left(\sigma^{2}J_{\w^{c}}\right)\right)\nonumber\\
     &=& \ln\left(\widetilde{C}_{12}\frac{\det\left(\sigma^{2}J_{\w^{c}}\right)}{\det\left(\km+\sigma^{2}J_{\w^{c}}\right)}\right)\nonumber\\
     &\leq& \ln\left(\widetilde{C}_{12}\frac{\det\left(\sigma^{2}J_{\w^{c}}\right)}{\det\left(\km\right)+\det\left(\sigma^{2}J_{\w^{c}}\right)}\right)\nonumber,
\end{IEEEeqnarray}
where in the last inequality, we use the fact that for two positive semidefinite matrices $A$, $B$, it also holds that $\det(A+B)\geq\det(A) + \det(B)$. Finally, $\det\left(\km\right) > 0$ and $\det\left(\sigma^{2}J_{\w^{c}}\right)>0$, since both are positive definite matrices. Therefore, 
\begin{IEEEeqnarray}{rCl}
\frac{\det\left(\sigma^{2}J_{\w^{c}}\right)}{\det\left(\km\right)+\det\left(\sigma^{2}J_{\w^{c}}\right)} \leq 1,\nonumber
\end{IEEEeqnarray}
which leads  to
\begin{IEEEeqnarray}{rCl}
    (3) 
     &\leq& \ln\left(\widetilde{C}_{12}\right) = \widetilde{C}_{13}.\nonumber
\end{IEEEeqnarray}
Finally, putting the three terms together we obtain the desire bound:

\begin{IEEEeqnarray}{rCl}
    \operatorname{PIF}_{\operatorname{RCGP}}(y^{c}_{m}, D) \leq \widetilde{C}_2 + \widetilde{C}_{10} + \widetilde{C}_{11}(w(x_n,y_{n}^{c})^2y_{n}^{c})^2 +\widetilde{C}_{13} =  C_{1}(w(x_n,y_{n}^{c})^2y_{n}^{c})^2+C_{2}\nonumber
\end{IEEEeqnarray}
where $C_2 = \widetilde{C}_2 + \widetilde{C}_{10} + \widetilde{C}_{11}+\widetilde{C}_{13}$, and $C_1 = \widetilde{C}_{11}$.

\end{proof}
\subsection{Proof of \Cref{prop:ELBO}}
\label{app:proof_ELBO}
For $f \sim \mathcal{GP}(m,k)$, $\vepsilon \sim \mathcal{N}(0,\sigma^2)$, the RCSVGP posterior is $f \sim \mathcal{GP}(\widetilde{\mu}, \widetilde{\Sigma})$, where
\begin{IEEEeqnarray}{rCl}
   \widetilde{\mu}(x) 
   & = &
    \phi_{\u}(x)^{\top} 
    \mu_{\u}, \nonumber \\
    \widetilde{\Sigma}(x, x')
    & = &
    k(x, x') - 
    \phi_{\u}(x)^{\top}
    \left( K_{\u\u} - \Sigma_{\u} \right) \phi_u(x'),
    \nonumber \\
    \mu_{\u} & = &
    \m + K_{\u\u} P_{\u}^{-1} K_{\u}\sigma^{-2}J_{\w}^{-1}(\y-m_{\w}),
    \nonumber \\
    \Sigma_{\u} & = &
     K_{\u\u} P_{\u}^{-1} K_{\u\u},
    \nonumber
\end{IEEEeqnarray}
for  $P_{\u} = \left(K_{\u\u}+K_{\u}^{\top}\sigma^{-2}J_{\w}^{-1}K_{\u}\right)$, $[K_{\u\u}]_{ij} = k(u_i, u_j)$, $[K_{\u}]_{ij} = k(u_i, x_j)$, $[k_{\u}(x)]_i = k(u_i, x)$, and $\phi_{\u}(x) = K_{\u\u}^{-1}k_{\u}(x)$. 
As a function of $\u$, $\theta$, and $\sigma^2$, the corresponding variational objective is
\begin{IEEEeqnarray*}{rCl}
 J(\u,\theta, \sigma^2)&=&   \dfrac{1}{2}\nu^{\top}K_{\u}^{\top}Q_{\u}^{-1}K_{\u}\nu + C(\sigma^2)  + \dfrac{1}{2}\log{\left(\frac{
 \det\left(K_{\u\u}\right)^2}{\det\left(Q_{\u}\right)}\right)} \\&& - \Tr{\Big(\sigma^{-2}J_{\w}^{-\frac{1}{2}}(K-K_{\u}^{\top}K_{\u\u}^{-1}K_{\u})J_{\w}^{-\frac{1}{2}}\Big)}, 
\end{IEEEeqnarray*}
where
$Q_{\u} = K_{\u\u} + K_{\u}^{\top}\sigma^{-2}J_{\w}^{-1}K_{\u}$, $\nu = \sigma^{-2} J_{\w}^{-1}(\y-\m_{\w})$, and
$C(\sigma^2) = C(\x,\y,\sigma^2)$ is a function that depends on the data and $\sigma^2$.
\begin{proof}
Recall that for the standard case, we consider $\f_{\u} = (f(u_1),\ldots,f(u_m))^{\top}$ as the  evaluations corresponding to the inducing input locations $\u = (u_1,...,u_m)^{\top}$. 
Given this, we define the variational distribution as
\begin{IEEEeqnarray}{rcL}
    q(\f, \f_{\u}) = p(\f | \f_{\u}) q(\f_{\u}),\nonumber
\end{IEEEeqnarray}
where $q(\mathbf{u})= \mathcal{N}(\mathbf{u} ; \mu, \Sigma)$. We now seek to approximate the exact posterior $p(\f,\f_\u| \y)$
by the variatonal distribution. We do so by minimising the Kullback-Leibler (KL) divergence  between $q(\f,\f_\u)$
and $p(\f,\f_\u| \y)$. This is equivalent to maximising the ELBO, which is defined as:
\begin{IEEEeqnarray}{rCL}
    \operatorname{ELBO}(\u) = \int \log{\frac{\Psi(\y, \f_{\u}) p(\f_{\u})}{q(\f_{\u})}} q(\f_{\u}) d\f_{\u},\nonumber
\end{IEEEeqnarray}
where 
\begin{IEEEeqnarray}{rCl}
    \Psi(\y, \f_{\u})= \exp\left(\int_{\R^n}\log(p(\y|\f))p(\f|\f_{\u})d\f\right).\nonumber
\end{IEEEeqnarray}
It is straightforward to verify that the optimal variational distribution is
\begin{IEEEeqnarray}{rCl}
    q(\f_\u) =\dfrac{\Psi(\y, \f_{\u}) p(\f_{\u})}{\int\Psi(\y, \f_{\u}) p(\f_{\u})d\f_\u}.\nonumber
\end{IEEEeqnarray}
Plugging this back into the ELBO, we then get 
\begin{IEEEeqnarray}{rCL}
    \operatorname{ELBO}(\u,\mu,\Sigma) = \log\left(\int\Psi(\y, \f_{\u}) p(\f_{\u})d\f_\u\right).\nonumber
\end{IEEEeqnarray}
For the RCSVGP, we replace the standard likelihood $p(\y|\f)$, with the pseudo likelihood $\exp\{ -n L_n^{w}(\f, \y, \x)\}$, leading to the RCSVGP's optimal variational distribution, which is given by
\begin{IEEEeqnarray}{rCl}
    q^{w}(\f_\u) =\dfrac{\Psi^{w}(\y, \f_{\u}) p(\f_{\u})}{\int\Psi^{w}(\y, \f_{\u}) p(\f_{\u})d\f_u},\nonumber
\end{IEEEeqnarray}
where
\begin{IEEEeqnarray}{rCl}
    \log\Psi^{w}(\y, \f_{\u})&=& \int_{\R^n}\log(\exp\{ -n L_n^{w}(\f, \y, \x)\})p(\f|\f_{\u})d\f\nonumber\\
    &=&
    \int_{\R^n} -\dfrac{1}{2}\left(\f^{\top}\sigma^{-2}J_{\w}^{-1}\f  - 2\f^{\top}\nu+ C(\x,\y,\sigma^2) \right)p(\f|\f_{\u})d\f,\nonumber
\end{IEEEeqnarray}
where we used the expression of $L_n^{w}(\f, \y, \x)$ from \cref{app:proof_RCGP_derivation} and defined $\nu= \sigma^{-2}J_{\w}^{-1}(\y-\m_{\w})$. Recallling that $C(\x,\y,\sigma^2) = \y^{\top}\sigma^{-2}\diag(\w^{2})\y - 2\nabla_{y} \y^{\top}\w^{2}$. 

Recalling that the density of the conditional distribution of $\f$ given $\f_\u$ is
\begin{IEEEeqnarray}{rCl}
    p(\f|\f_{\u}) &=& \mathcal{N}(\f; \mu_{\f|\f_{\u}},\Sigma_{\f|\f_{\u}}),\nonumber\\
    \mu_{\f|\f_{\u}}&=& K_{\u}^{\top}K_{\u\u}^{-1}\f_{\u},\nonumber\\
    \Sigma_{\f|\f_{\u}}&=&K-K_{\u}^{\top}K_{\u\u}^{-1}K_{\u},\nonumber
\end{IEEEeqnarray}
 $[K_{\u\u}]_{ij} = k(u_i, u_j)$, $[K_{\u}]_{ij} = k(u_i, x_j)$, $[k_{\u}(x)]_i = k(u_i, x)$. Then, we can write the above expression as: 
 
\begin{IEEEeqnarray}{rCl}
    \log\Psi^{w}(\y, \f_{\u})&=&\int_{\R^n} -\dfrac{1}{2}\f^{\top}\sigma^{-2}J_{\w}^{-1}\f  p(\f|\f_{\u})d\f + \int_{\R^n}\f^{\top}\nu p(\f|\f_{\u})d\f-\int_{\R^n} \dfrac{1}{2}C(\x,\y) p(\f|\f_{\u})d\f\nonumber\\
    &=&-\dfrac{1}{2}\int_{\R^n} \f^{\top}\sigma^{-2}J_{\w}^{-1}\f  p(\f|\f_{\u})d\f + \mathbb{E}_{\f\sim p(\f|\f_\u)}[\f^{\top}\nu] - \dfrac{1}{2}C(\x,\y, \sigma^2)\nonumber
\end{IEEEeqnarray}
Recalling that we can write the inner product between two vectors as the trace of their outer product, in this case
\begin{IEEEeqnarray}{rCl}
   \f^{\top}\sigma^{-2}J_{\w}^{-1}\f = (\sigma^{-1}J_{\w}^{-1/2}\f)^{\top}(\sigma^{-1}J_{\w}^{-1/2}\f) = \Tr(\sigma^{-1}J_{\w}^{-1/2}\f\f^{\top}\sigma^{-1}J_{\w}^{-1/2}).\nonumber
\end{IEEEeqnarray}
Therefore,
\begin{IEEEeqnarray}{rCl}
    \log\Psi^{w}(\y, \f_{\u})
    &=&-\dfrac{1}{2}\int_{\R^n} \Tr(\sigma^{-1}J_{\w}^{-1/2}\f\f^{\top}\sigma^{-1}J_{\w}^{-1/2})  p(\f|\f_{\u})d\f + \mathbb{E}_{\f\sim p(\f|\f_\u)}[\f^{\top}\nu] - \dfrac{1}{2}C(\x,\y, \sigma^2) \nonumber\\
    &=&-\dfrac{1}{2}\mathbb{E}_{\f\sim p(\f|\f_\u)}[\Tr(\sigma^{-1}J_{\w}^{-1/2}\f\f^{\top}\sigma^{-1}J_{\w}^{-1/2})|\f_\u] + \mathbb{E}_{\f\sim p(\f|\f_\u)}[\f^{\top}\nu] - \dfrac{1}{2}C(\x,\y, \sigma^2). \nonumber
\end{IEEEeqnarray}
It is well known that the expectation of the trace is equal to the trace of the expectation so that
\begin{IEEEeqnarray}{rCl}
    \log\Psi^{w}(\y, \f_{\u})
    &=&-\dfrac{1}{2}\Tr(\mathbb{E}_{\f\sim p(\f|\f_\u)}[\sigma^{-1}J_{\w}^{-1/2}\f\f^{\top}\sigma^{-1}J_{\w}^{-1/2}]) + \mathbb{E}_{\f\sim p(\f|\f_\u)}[\f^{\top}\nu] - \dfrac{1}{2}C(\x,\y, \sigma^2) \nonumber\\
    &=&-\dfrac{1}{2}\Tr(\sigma^{-1}J_{\w}^{-1/2}\mathbb{E}_{\f\sim p(\f|\f_\u)}[\f\f^{\top}]\sigma^{-1}J_{\w}^{-1/2}) + \mathbb{E}_{\f\sim p(\f|\f_\u)}[\f]^{\top}\nu - \dfrac{1}{2}C(\x,\y, \sigma^2). \nonumber
\end{IEEEeqnarray}
Recalling that the variance of a random variable is defined for a random variable $X$ distributed by $p$ as
\begin{IEEEeqnarray}{rCl}
\mathbb{V}_{X\sim p}[X] = \mathbb{E}_{X\sim p}[XX^{\top}]-\mathbb{E}_{X\sim p}[X]\mathbb{E}_{X}[X]^{\top}\nonumber,
\end{IEEEeqnarray}
which implies that $ \mathbb{E}_{X\sim p}[XX^{\top}] = \mathbb{V}_{X\sim p}[X]+\mathbb{E}_{X\sim p}[X]\mathbb{E}_{X\sim p}[X]^{\top}$. Therefore, we can use this relationship to write

\begin{IEEEeqnarray}{rCl}
    \log&\Psi^{w}&(\y, \f_{\u})
    =-\dfrac{1}{2}\Tr(\sigma^{-1}J_{\w}^{-1/2}\mathbb{E}_{\f\sim p(\f|\f_\u)}[\f\f^{\top}]\sigma^{-1}J_{\w}^{-1/2}) + \mathbb{E}_{\f\sim p(\f|\f_\u)}(\f)^{\top}\nu - \dfrac{1}{2}C(\x,\y, \sigma^2)\nonumber\\
    &=&-\dfrac{1}{2}\Tr(\sigma^{-1}J_{\w}^{-1/2}(\mathbb{V}_{\f\sim p(\f|\f_\u)}[\f]+ \mathbb{E}_{\f\sim p(\f|\f_\u)}[\f]\mathbb{E}_{\f\sim p(\f|\f_\u)}[\f]^{\top})\sigma^{-1}J_{\w}^{-1/2}) + \mathbb{E}_{\f\sim p(\f|\f_\u)}[\f]^{\top}\nu - \dfrac{1}{2}C(\x,\y, \sigma^2).\nonumber
\end{IEEEeqnarray}
Since $p(\f|\f_{\u}) = \mathcal{N}(\f; \mu_{\f|\f_{\u}},\Sigma_{\f|\f_{\u}})$, we know explicitly $\mathbb{V}_{\f\sim p(\f|\f_\u)}[\f]$ and $\mathbb{E}_{\f\sim p(\f|\f_\u)}[\f]$. Plug these expressions in and rearranging terms using traces properties, we obtain the final expression
\begin{IEEEeqnarray}{rCl}
    \log\Psi^{w}(\y, \f_{\u})
    &=&-\dfrac{1}{2}\Tr(\sigma^{-1}J_{\w}^{-1/2}(\mathbb{V}_{\f\sim p(\f|\f_\u)}[\f]+ \mathbb{E}_{\f\sim p(\f|\f_\u)}[\f]\mathbb{E}_{\f\sim p(\f|\f_\u)}[\f]^{\top})\sigma^{-1}J_{\w}^{-1/2}) + \mathbb{E(\f|\f_\u)}^{\top}\nu - \dfrac{1}{2}C(\x,\y, \sigma^2)\nonumber\\
    &=&-\dfrac{1}{2}\Tr(\sigma^{-1}J_{\w}^{-1/2}(\Sigma_{\f|\f_{\u}}+ \mu_{\f|\f_{\u}}\mu_{\f|\f_{\u}}^{\top})\sigma^{-1}J_{\w}^{-1/2}) + \mu_{\f|\f_{\u}}^{\top}\nu - \dfrac{1}{2}C(\x,\y, \sigma^2)\nonumber\\
    &=&-\dfrac{1}{2}\Tr(\sigma^{-2}J_{\w}^{-1/2}\Sigma_{\f|\f_{\u}}J_{\w}^{-1/2}) -\dfrac{1}{2}\Tr(\sigma^{-2}J_{\w}^{-1/2}\mu_{\f|\f_{\u}}\mu_{\f|\f_{\u}}^{\top}J_{\w}^{-1/2})+ \mu_{\f|\f_{\u}}^{\top}\nu - \dfrac{1}{2}C(\x,\y, \sigma^2)\nonumber\\
    &=&-\dfrac{1}{2}\Tr(\sigma^{-2}J_{\w}^{-1/2}\Sigma_{\f|\f_{\u}}J_{\w}^{-1/2}) -\dfrac{1}{2}\mu_{\f|\f_{\u}}^{\top}\sigma^{-2}J_{\w}^{-1}\mu_{\f|\f_{\u}}+ \mu_{\f|\f_{\u}}^{\top}\nu - \dfrac{1}{2}C(\x,\y, \sigma^2).\nonumber
\end{IEEEeqnarray}
We now will plug $\Psi$ in optimal variational distribution to finish the proof. Replacing and rearranging terms, we obtain
\begin{IEEEeqnarray}{rCl}
    q^{w}(\f_\u) &\propto&\Psi^{w}(\y, \f_{\u}) p(\f_{\u})\nonumber\\
    &=&\exp\left( -\dfrac{1}{2}\mu_{\f|\f_{\u}}^{\top}\sigma^{-2}J_{\w}^{-1}\mu_{\f|\f_{\u}}+ \mu_{\f|\f_{\u}}^{\top}\nu -\dfrac{1}{2}\f_{\u}^{\top}K_{\u\u}^{-1}\f_{\u}\right)\nonumber\\
    &=&\exp\left( -\dfrac{1}{2}\f_{\u}^{\top}K_{\u\u}^{-1}K_{\u}\sigma^{-2}J_{\w}^{-1}K_{\u}^{\top}K_{\u\u}^{-1}\f_{\u}+ \f_{\u}^{\top}K_{\u\u}^{-1}K_{\u}\nu -\dfrac{1}{2}\f_{\u}^{\top}K_{\u\u}^{-1}\f_{\u}\right)\nonumber\\
    &=&\exp\left( -\dfrac{1}{2}\f_{\u}^{\top}(K_{\u\u}^{-1}K_{\u}\sigma^{-2}J_{\w}^{-1}K_{\u}^{\top}K_{\u\u}^{-1}+K_{\u\u}^{-1})\f_{\u}+ \f_{\u}^{\top}K_{\u\u}^{-1}K_{\u}\nu \right)\nonumber
\end{IEEEeqnarray}
completing squares, we obtain
 \begin{IEEEeqnarray}{rCl}
    q^{w}(\f_\u) &\propto&\exp\left( -\dfrac{1}{2}(\f_{\u}-\mu_{\u})^{\top}\Sigma_{S}^{-1}(\f_{\u}-\mu_{\u})^{\top}\right)\nonumber\\
    \mu_{\u}&=& (K_{\u\u}^{-1} + K_{\u\u}^{-1}K_{\u}\sigma^{-2}J_{\w}^{-1}K_{\u}^{\top}K_{\u\u}^{-1})^{-1}K_{\u\u}^{-1}K_{\u}\nu\nonumber\\
    &=& K_{\u\u}(K_{\u\u} + K_{\u}\sigma^{-2}J_{\w}^{-1}K_{\u}^{\top})^{-1}K_{\u}\nu\nonumber\\
    &=& K_{\u\u}P_{\u}^{-1}K_{\u}\sigma^{-2}J_{\w}^{-1}(\y-\m_{\w})\nonumber,\\
    \Sigma_{\u}&=&(K_{\u\u}^{-1}+K_{\u\u}^{-1}K_{\u}\sigma^{-2}J_{\w}^{-1}K_{\u}^{\top}K_{\u\u}^{-1})^{-1}\nonumber\\
    &=&K_{\u\u}(K_{\u\u}+K_{\u}\sigma^{-2}J_{\w}^{-1}K_{\u}^{\top})^{-1}K_{\u\u}\nonumber\\
    &=&K_{\u\u}P_{\u}^{-1}K_{\u\u},\nonumber
\end{IEEEeqnarray}
for  $P_{\u} = \left(K_{\u\u}+K_{\u}^{\top}\sigma^{-2}J_{\w}^{-1}K_{\u}\right)$. Now, for the predictive posterior over $f_{\star} = f(x_{\star})$ at new point $x^{\star}\in\X$, we have
\begin{IEEEeqnarray}{rCl}
    p^{w}(f_{\star}|,\y,\x) &=& \int_{\R}p^{w}(f_{\star}|x_{\star}, \f_{\u})q^{w}(\f_{\u})d\f_{\u}\nonumber\\
    &=& \mathcal{N}(f_{\star}; \mu_{f_{\star}|\u},\Sigma_{f_{\star}|\u})\mathcal{N}(\f_{\u}; \mu_{\u},\Sigma_{\u})d\f_{\u}\nonumber.
\end{IEEEeqnarray}
Here, we can use the integral obtain for the predictive in \cref{app:proof_RCGP_derivation} to get
\begin{IEEEeqnarray}{rCl}
    p^{w}(f_{\star}|x,\y,\x) &=& \mathcal{N}(f_{\star};\widetilde{\mu}(x_{\star}),\widetilde{\Sigma}(x_{\star},x_{\star}))\nonumber\\
    \widetilde{\mu}(x_{\star}) 
   & = &
    \phi_{\u}(x_{\star})^{\top} 
    \mu_{\u}, \nonumber \\
    \widetilde{\Sigma}(x_{\star}, x_{\star})
    & = &
    k(x_{\star}, x_{\star}) - 
    \phi_{\u}(x_{\star})^{\top}
    \left( K_{\u\u} - \Sigma_{\u} \right) \phi_u(x_{\star})
    \nonumber,
\end{IEEEeqnarray}
for $[k_{\u}(x)]_i = k(u_i, x)$, and $\phi_{\u}(x) = K_{\u\u}^{-1}k_{\u}(x)$.
Finally, plugging $\Psi^{w}$ in the ELBO and using log and exponential properties, we get
\begin{IEEEeqnarray}{rCL}
    &&\operatorname{ELBO}(\u) = \log\left(\int\Psi^{w}(\y, \f_{\u}) p(\f_{\u})d\y\right)\nonumber\\
    &=&\log\left(\int_{\R^{n}}\exp\left(-\frac{1}{2}\Tr(\sigma^{-2}J_{\w}^{-1/2}\Sigma_{\f|\f_{\u}}J_{\w}^{-1/2}) -\frac{1}{2}\mu_{\f|\f_{\u}}^{\top}\sigma^{-2}J_{\w}^{-1}\mu_{\f|\f_{\u}}+ \mu_{\f|\f_{\u}}^{\top}\nu - \frac{1}{2}C(\x,\y, \sigma^2)-\frac{1}{2}\f_{\u}^{\top}K_{\u\u}^{-1}\f_{\u}\right)d\f_\u\right)\nonumber\\
    &=&-\frac{1}{2}(\Tr(\sigma^{-2}J_{\w}^{-1/2}(K-K_{\u}^{\top}K_{\u\u}^{-1}K_{\u})J_{\w}^{-1/2})-C(\x,\y, \sigma^2)) \nonumber\\
    &\qquad& +\log\left(\int_{\R^{n}}\exp\left( -\dfrac{1}{2}\f_{\u}^{\top}(K_{\u\u}^{-1}K_{\u}\sigma^{-2}J_{\w}^{-1}K_{\u}^{\top}K_{\u\u}^{-1}+K_{\u\u}^{-1})\f_{\u}+ \f_{\u}^{\top}K_{\u\u}^{-1}K_{\u}\nu \right)d\f_\u\right)\nonumber
\end{IEEEeqnarray}
integrating over $\f_\u$ and just arithmetic rules for matrix-vector multiplication, we obtain
\begin{IEEEeqnarray}{rCL}
    \operatorname{ELBO}(\u) &=&
    -\frac{1}{2}(\Tr(\sigma^{-2}J_{\w}^{-1/2}(K-K_{\u}^{\top}K_{\u\u}^{-1}K_{\u})J_{\w}^{-1/2})-C(\x,\y, \sigma^2)) \nonumber\\
    &\qquad& +\log\left(\dfrac{\exp\left( \dfrac{1}{2}\nu^{\top}K_{\u}^{\top}K_{\u\u}^{-1}(K_{\u\u}^{-1}+ K_{\u\u}^{-1}K_{\u}\sigma^{-2}J_{\w}^{-1}K_{\u}^{\top}K_{\u\u}^{-1})^{-1}K_{\u\u}^{-1}K_{\u}\nu \right)}{\sqrt{\det\left(K_{\u\u}^{-1}+ K_{\u\u}^{-1}K_{\u}\sigma^{-2}J_{\w}^{-1}K_{\u}^{\top}K_{\u\u}^{-1}\right)}}\right)\nonumber\\
    &=&
    -\frac{1}{2}(\Tr(\sigma^{-2}J_{\w}^{-1/2}(K-K_{\u}^{\top}K_{\u\u}^{-1}K_{\u})J_{\w}^{-1/2})-C(\x,\y, \sigma^2)) \nonumber\\
    &\qquad& +\log\left(\frac{\exp\left(\dfrac{1}{2}\nu^{\top}K_{\u}^{\top}(K_{\u\u}+ K_{\u}\sigma^{-2}J_{\w}^{-1}K_{\u}^{\top})^{-1}K_{\u}\nu \right)}{\sqrt{\det\left(K_{\u\u}^{-1}+ K_{\u\u}^{-1}K_{\u}\sigma^{-2}J_{\w}^{-1}K_{\u}^{\top}K_{\u\u}^{-1}\right)}}\right)\nonumber\\
    &=&
    -\frac{1}{2}(\Tr(\sigma^{-2}J_{\w}^{-1/2}(K-K_{\u}^{\top}K_{\u\u}^{-1}K_{\u})J_{\w}^{-1/2})-C(\x,\y, \sigma^2)) \nonumber\\
    &\qquad& +\dfrac{1}{2}\nu^{\top}K_{\u}^{\top}(K_{\u\u}+ K_{\u}\sigma^{-2}J_{\w}^{-1}K_{\u}^{\top})^{-1}K_{\u}\nu+\dfrac{1}{2}\log\left(\frac{\det(K_{\u\u})^2}{\det\left(K_{\u\u}+ K_{\u}\sigma^{-2}J_{\w}^{-1}K_{\u}^{\top}\right)}\right)\nonumber
\end{IEEEeqnarray}

\end{proof}

\subsection{Proof of \Cref{prop:acquisition_functions}
}\label{app:proof_acquisition_functions}
Let $f \sim \mathcal{GP}(m,k)$, $\vepsilon \sim \mathcal{N}(0,\sigma^2)$. The UCB and PI acquisition functions for RCGPs are
\begin{IEEEeqnarray*}{rCl}
a_{\operatorname{UCB-RCGP}}(x_{\star}) & = & \mu^{R}_{\star}  + \lambda (\Sigma^{R}_{\star})^{1/2} \\
a_{\operatorname{PI-RCGP}}(x_{\star}) & = & \Phi\Big(\big(\mu^{R}_{\star}-f(x_{\text{max}})\big)(\Sigma^{R}_{\star})^{-1/2}\Big) 
\end{IEEEeqnarray*}
where $\Phi$ is the cdf of a standard normal, and $x_{\text{max}}$ is the best solution we have so far.
\begin{proof}
Since the RCGP posterior is Gaussian, we can straightforwardly define the upper confidence bound acquisition function as 
\begin{IEEEeqnarray}{rCl}
a_{\operatorname{UCB-RCGP}}(x_{\star}) & = & \mu^{R}_{\star}  + \lambda (\Sigma^{R}_{\star})^{1/2} \nonumber
\end{IEEEeqnarray}   

For probability of improvement, suppose that the best solution we have so far is $x_{\text{max}}$. We define the improvement function as:
\begin{IEEEeqnarray}{rCl}
    \text{I}(x_{\star}) = 
\max(f(x_{\star}) - f(x_{\text{max}}), 0)\nonumber
\end{IEEEeqnarray}
we know that $f(x_{\star})\sim\mathcal{N}(\mu^{R}_{\star}, \Sigma^{R}_{\star})$, therefore, we can rewrite it as $f(x_{\star})= \mu^{R}_{\star} + \Sigma^{R}_{\star} z$, with $z\sim\mathcal{N}(0, 1)$, leading to: 
\begin{IEEEeqnarray}{rCl}
    \text{I}(x_{\star}) = 
\max(\mu^{R}_{\star} + \Sigma^{R}_{\star}z - f(x_{\text{max}}), 0)\quad z\sim\mathcal{N}(0, 1)\nonumber
\end{IEEEeqnarray}
Finally, we define the probabilty of improvement acquisition function as:
\begin{IEEEeqnarray}{rCl}
    a_{\operatorname{PI-RCGP}}(x_{\star}) & = & \mathbb{P}(\text{I}(x_{\star})>0)\nonumber \\
    &=& \mathbb{P}(f(x_{\star})> f(x_{\text{max}}))\nonumber\\
    &=& \mathbb{P}(z> \big(f(x_{\text{max}})-\mu^{R}_{\star}\big)(\Sigma^{R}_{\star})^{-1/2})\nonumber\\
    &=& \Phi\Big(\big(\mu^{R}_{\star}-f(x_{\text{max}})\big)(\Sigma^{R}_{\star})^{-1/2}\Big) \nonumber
\end{IEEEeqnarray}

\end{proof}

\section{Additional Experimental Results}\label{app:experiments}
All the experiments were running on an Apple M2 Pro CPU with 16 GB of memory.

\subsection{Prior mean}\label{app:prior_mean}
This section illustrates how the choice of the prior mean affects our method. In this particular example, we generated data from a Gaussian Process (GP) with a zero mean and a periodic squared exponential kernel, where the length scale and variance are set to 1, and the period is set to 3. We then added 1\% of focused contamination near zero. \cref{fig:better_m} demonstrates the performance of two RCGPs on this dataset: the blue one with a zero prior mean and the brown one with a prior mean selected through fitting a polynomial regression.  Since the outliers in this dataset are close to zero, the method with a zero prior mean does not identify them as outliers. In fact, as the outliers align exactly with the prior mean, the corresponding weight assigned to them is the largest. This leads to a non-optimal performance. In contrast, for the brown RCGP, since the prior mean is carefully chosen, the outliers are far from it and, therefore, are down-weighted, resulting in a posterior that matches the true generative process.
\begin{figure}[h]
    \centering
    \includegraphics{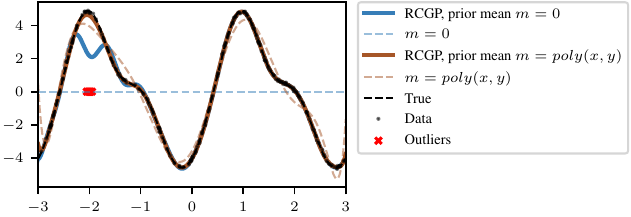}
    \caption{The posterior predictive mean of the RCGP with zero prior mean (\textcolor{blue_plot}{\textbf{blue}}) and the RCGP with a prior mean selected through fitting a polynomial regression (\textcolor{brown}{\textbf{brown}}) on a synthetic dataset where 1\% of the data are focused generated outliers.}
    \label{fig:better_m}
\end{figure}

\subsection{Implementation of Competing Methods}

\paragraph{GP} For the standard GP, we choose hyperparameters via maximum likelihood and utilise the implementation in \texttt{GPflow} \citep{GPflow2017}

\paragraph{t-GP} This model replaces the usual choice of Gaussian observation noise with a Student-t distributed observation noise; i.e.
\begin{IEEEeqnarray}{rCl}
   p(y| f(x), \nu,\sigma^2)&=&\frac{\Gamma(\frac{\nu+1}{2})} {\sqrt{\nu\pi\sigma^2}\,\Gamma(\frac{\nu}{2})} \left(1+\frac{(y-f(x))^2}{\nu\sigma^2} \right)^{-(\nu+1)/2}\nonumber
\end{IEEEeqnarray}
where $\nu>0$ is the degrees of freedom and $\sigma^2$ is the scale parameter. The challenge in this particular scenario lies in the fact that both its posterior and posterior predictive distributions no longer lend analytically tractable, necessitating the use of approximate inference methods. There are several ways to approximate this posterior: MCMC, Laplace’s method or variational inference \citep{jylanki2011robust}. Our experiments use the variational inference technique implemented in \texttt{GPflow} \citep{GPflow2017} to estimate the posterior, posterior predictive, and hyperparameter selection. \footnote{An official usage example of this implementation can be found here: \url{https://gpflow.github.io/GPflow/develop/notebooks/getting_started/classification_and_other_data_distributions.html\#Non-gaussian-regression}} In particular, the student-t \texttt{GPflow} implementation uses the variational approximation proposed by \citet{opper2009variational}. The main result of their work is that for a Gaussian process with a non-Gaussian likelihood, the optimal Gaussian approximation --- in terms of the Kullback-Leibler divergence --- is given by the expression: $q(\mathbf{f}) = \mathcal{N}(\mathbf{K}\boldsymbol{\alpha}, [\mathbf{K}^{-1} + \textrm{diag}(\boldsymbol{\lambda})]^{-1})$, where $\mathbf{K}$ is the kernel of the GP and $\boldsymbol{\alpha}$ and $\boldsymbol{\lambda}$ are the variational parameters

\paragraph{m-GP} 
This model explicitly considers the generation process of outliers as a uniform distribution 
over a bounded region--that covers the output $y$.
Then, each observation is associated with a latent variable $z\in\{0,1\}$, where $z=0$ indicates that the observation is generated by outlier distribution and $z = 1$ inlier. Therefore, the observation model is
\begin{IEEEeqnarray}{rCl}
   p(y,z| f(x), \gamma,\sigma^2)&=&\Big((1-\gamma)\frac{1}{a}\Big)^{1-z}\big(\gamma\mathcal{N}(y;f(x),\sigma^2)\big)^{z}\nonumber,
\end{IEEEeqnarray}
where $\gamma = p(z=1)$, and $a>0$ denotes the volume of the outlier region. The variable $\gamma$ controls the probability of occurrence of two models, and a Beta prior is assumed.
Like t-GP, its posterior and posterior predictive distributions no longer lend analytically tractable; thus, approximation is needed. We follow the official implementation provided in the paper. \footnote{\url{https://github.com/YifanLu2000/Robust-Scalable-GPR}}

\subsection{Benchmarking}\label{app:experiment_benchmarking}

\subsubsection{Description of the Datasets}

\paragraph{Synthetic} The dataset consists of $n=300$ samples from a GP with zero mean and squared exponential kernel (length scale and variance equal to 1). Then, we added Gaussian noise $\vepsilon\sim\mathcal{N}(0,0.3)$ to the observations. 
For the experiments, we selected a GP prior with mean function $m(x) = \frac{1}{n}\sum_{i=1}^n y_i$, and squared exponential kernel as covariance function.

\paragraph{Boston}The dataset consists of $n=506$ observations, each representing a suburban or town area in Boston. It encompasses $d=13$ features containing data like the average number of rooms in dwellings, pupil-teacher ratios, and per capita crime rates. We try to predict the median price of homes residents own (excluding rented properties).  The dataset can be found at \url{https://www.cs.toronto.edu/~delve/data/boston/bostonDetail.html}. We selected a GP prior with mean function $m(x) = \frac{1}{n}\sum_{i=1}^n y_i$, and squared exponential kernel as covariance function. 

\paragraph{Energy} The dataset describes the energy efficiency of buildings by correlating their heating and cooling load requirements with various building parameters. It consists of $n=768$ data samples, each characterised by $d=8$ distinct features, with the ultimate goal of predicting a single continuous response variable found in the last column. The dataset can be found at \url{https://archive.ics.uci.edu/dataset/242/energy+efficiency}. We selected a GP prior with mean function $m(x) = \frac{1}{n}\sum_{i=1}^n y_i$, and squared exponential kernel as covariance function. 

\paragraph{Yacht} The dataset's main focus is on predicting the residuary resistance of sailing yachts during their initial design phase, a critical aspect in evaluating a vessel's performance and estimating the essential propulsive power required. This prediction relies on $d=6$ primary input parameters, which include the fundamental hull dimensions and boat velocity. The dataset contains $n=308$ observations. The dataset can be found at  \url{https://archive.ics.uci.edu/dataset/243/yacht+hydrodynamics}. We selected a GP prior with mean function $m(x) = \frac{1}{n}\sum_{i=1}^n y_i$, and squared exponential kernel as covariance function.

\subsubsection{Description of the Outlier Generation Process}
\label{app:outliers_generation}
While contamination can occur in both the covariate $x$ and the observation $y$, our work is focused on cases where contamination exclusively affects the observations $y$. We now describe the three outlier generation processes presented in this paper. 

\paragraph{Uniform } In this setting, our initial step involves uniformly selecting a specified proportion of the dataset that will be contaminated. To uniformly contaminate the selection, we did a random 50-50 split of this subset: half of the selected subset is contaminated by adding $z\sim U(3\sigma, 9\sigma)$, while the other half is contaminated by subtracting $z\sim U(3\sigma, 9\sigma)$ where $\sigma$ is the standard deviation of the original observations, and $U$ denotes the uniform distribution. 

\paragraph{Asymmetric}
Much like the uniform outliers, we randomly select a subset of data points that we will contaminate. The key distinction lies in the fact that we do not split the selected subset; instead, we contaminate the entire subset by subtracting $z\sim U(3\sigma, 9\sigma)$ where $\sigma$ is the standard deviations of the original observations.

\paragraph{Focused} 
In this outlier generation process, we randomly select and remove a subset of data points, which will be replaced by outliers. 
For these outliers, we deterministically choose their values in $\mathcal{X}$. 
To do so, we calculate the median value for each input data dimension $j$. However, we do not place the outliers at this median position directly. 
Instead, we replace the removed input values by $(m_1 + \delta_1, m_2 + \delta_2 \dots, m_d + \delta_d)^{\top}$, where $m_j$ is the median in the $j$-th input data dimension, and $\delta_j = \alpha_j u$, where $\alpha_j$ is the median absolute deviation of the $j$-th data dimension times 0.1, and $u\sim U(0,1)$.
Simultaneously, the outlier values on $\mathcal{Y}$ are obtained by subtracting three times the standard deviation of the median of the observations $M_y$. To not have the same value for every outlier position, we also add a small perturbation $\delta_y = \alpha_y u$, where $\alpha_y$ is the median absolute deviation of $\y$ times 0.1, and $u\sim U(0,1)$.

\subsubsection{Additional Results}
We ran all benchmark experiments, choosing $c$ via leave-one-out (`c-LOO'), and compared it with the proposed way to choose $c$ (`c-$Q_n$').
Overall, the performance is slightly worse for c-LOO---likely because maximising the predictive posterior for extreme observations tends to match/fit these outliers by \textit{increasing} $c$, leading to a less robust method. 
Illustrative results can be seen in \cref{app:table_c_loo}.

\begin{table}[h]
\caption{Average test set mean absolute error and standard deviation (in brackets) for 50 train--test splits.}
\label{app:table_c_loo}
\centering
\begin{tabular}{@{}llll@{}}
\toprule
          & \multicolumn{1}{c}{\textcolor{green_plot}{\textbf{GP}}}       & \textcolor{blue_plot}{\textbf{$c$-LOO}}      & \textcolor{NavyBlue}{\textbf{$c$-${Q}_n$}}   \\ \midrule
          \multicolumn{4}{c}{{{\footnotesize No Outliers}}} 
          \vspace{0.5mm} \\ 
Synthetic & \textbf{0.09 (0.00)}  & \textbf{0.09 (0.00)} & \textbf{0.09 (0.00)} \\
Boston    & \textbf{0.19 (0.01)}  & \textbf{0.19 (0.01)} & \textbf{0.19 (0.01)}  \\
Energy    & 0.03 (0.00)  & \textbf{0.02 (0.00)} & \textbf{0.02 (0.00)}  \\
Yacht     & 0.02 (0.01)  & 0.02 (0.01) & 0.02 (0.01) \\ \midrule
\multicolumn{4}{c}{{{\footnotesize Focused Outliers}}}  \vspace{0.5mm}                          \\
Synthetic & 0.19 (0.00)  & 0.17 (0.00) & \textbf{0.15 (0.00)}\\
Boston    & 0.23 (0.06)  & \textbf{0.22 (0.03)} & \textbf{0.22 (0.01)}\\
Energy    & 0.03 (0.04)  & \textbf{0.02 (0.00)} & \textbf{0.02 (0.00)}\\
Yacht     & 0.26 (0.15)  & 0.11 (0.13) & \textbf{0.10 (0.14)}\\ \midrule
\multicolumn{4}{c}{{{\footnotesize Asymmetric Outliers}}} 
\vspace{0.5mm} \\
Synthetic & 1.14 (0.00)  & 1.08 (0.00) & \textbf{0.63 (0.00)} \\
Boston    & 0.63 (0.02)  & 0.61 (0.01) & \textbf{0.49 (0.00)} \\
Energy    & 0.54 (0.02)  & \textbf{0.37 (0.10)} & 0.44 (0.04) \\
Yacht     & 0.54 (0.06)  & 0.47 (0.03) & \textbf{0.35 (0.02)} \\
\bottomrule
\end{tabular}
\end{table}

\subsection{Sparse Variational Gaussian Processes}\label{app:SVGP}
 This section presents a numerical comparison between SVGP and RCSVGP on the four benchmark datasets presented in \cref{app:experiment_benchmarking}, with three outliers regimes: no outliers, focused outliers, asymmetric outliers. For both methods, we selected a GP prior with mean function $m(x) = \frac{1}{n}\sum_{i=1}^n y_i$, and squared exponential kernel as covariance function. We consider a fixed noise variance $\sigma^2 = 0.01$. \cref{tab:benchmark_sparse} shows that RCSVGP outperforms SVGP even in cases without outliers.
\begin{table}[h]
\caption{Average test set mean absolute error and standard deviation (in brackets) for 10 train--test splits.}
\label{tab:benchmark_sparse}
\centering
\begin{tabular}{@{}lllllll@{}}
\toprule
          & \multicolumn{1}{c}{\textcolor{green_plot}{\textbf{SVGP}}}       & \textcolor{blue_plot}{\textbf{SVRCGP}}      & \multicolumn{1}{c}{\textcolor{green_plot}{\textbf{SVGP}}}   & \textcolor{blue_plot}{\textbf{SVRCGP}} & \multicolumn{1}{c}{\textcolor{green_plot}{\textbf{SVGP}}}   & \textcolor{blue_plot}{\textbf{SVRCGP}} \\ \midrule
          \multicolumn{3}{c}{{{\footnotesize No Outliers}}} & \multicolumn{2 }{c}{{{\footnotesize Focused Outliers}}} & \multicolumn{2}{c}{{{\footnotesize Asymmetric Outliers}}} 
          \vspace{0.5mm} \\ 
Synthetic & \textbf{0.08 (0.00)}  & \textbf{0.08 (0.00)} & 0.19 (0.00)  & \textbf{0.13 (0.00)} & 1.32 (0.00)  & \textbf{1.02 (0.00)} \\
Boston    & 0.65 (0.01)  & \textbf{0.56 (0.18)}& 0.64 (0.02)  & \textbf{0.55 (0.16)}& 0.65 (0.04)  & \textbf{0.64 (0.01)}\\
Energy    & \textbf{0.02 (0.00)}  & 0.03 (0.00)& 0.05 (0.05)  & \textbf{0.05 (0.00}& 0.64 (0.05)  & \textbf{0.09 (0.16)}\\
Yacht     & \textbf{0.01 (0.00)}  & 0.03 (0.00)& 0.29 (0.00)  & \textbf{0.13 (0.02)}& 0.65 (0.04)  & \textbf{0.60 (0.02)}\\ 
\bottomrule
\end{tabular}
\end{table}

\subsection{Bayesian Optimisation}
\label{app:BO}
In the Bayesian optimisation experiment section, we compared RCGPs to GPs and t-GPs on two classical functions:
the Six-Hump Camel function and the Branin function. Here, we state the functions explicitly:
\begin{IEEEeqnarray}{rCll}
   \textbf{Six-Hump Camel:}\quad g_1(x,x')&=&\left(4-2.1x^2 +\frac{x^4}{3}\right)x^2  +  xx'  +  (4x'^2 - 4)x'^2 \quad& x \in (-2, 2), x' \in (-1, 1)\nonumber\\
   \textbf{Branin:}\quad g_2(x,x')&=&\left(x' - \frac{5.1}{4\pi^2}\: x^2 +\frac{5}{\pi} x -6\right)^2  +  10\left(1-\frac{1}{8\pi}\right) \cos(x)  +  10 \quad& x \in (-5, 10), x' \in (1, 15)\nonumber\\
   \textbf{McCormick:}\quad g_3(x,x')&=&\sin(x+x')+(x-x')+(-1.5x+2.5x'+1) \quad& x \in (-1.5, 4), x' \in (-3, 4)\nonumber\\
   \textbf{Rosenbrock:}\quad g_4(x,x')&=&100(x'-x^2)^2+(x^2-1)^2  \quad& x \in (-5, 10), x' \in (-5, 1)\nonumber
\end{IEEEeqnarray}
with global minima $g_1(x_{\star},x_{\star}^{\prime}) = -1.0316$, $g_2(x_{\star},x_{\star}^{\prime}) = 0.3979$, $g_3(x_{\star},x_{\star}^{\prime}) = -1.9133$ and $g_4(x_{\star},x_{\star}^{\prime}) = 0$.

In order to contaminate with outliers, we consider the scenario where, for each function evaluation, there is a 20\% chance of being
contaminated by an asymmetric outlier generated as \cref{app:outliers_generation}. In order to not change the global minimum, we consider the case where the outlier is bigger than the actual observation, i.e. asymmetric with adding a value.

We selected a GP prior with zero mean function and squared exponential kernel for the experiments for the three models. \cref{fig:bo_outliers_poi}  shows the results using PI as the acquisition function. In terms of cumulative regret, RCGPs outperform GPs. While t-GPs can match this, they take orders of magnitude longer to run. 
\begin{figure}[h]
    \centering
    \includegraphics{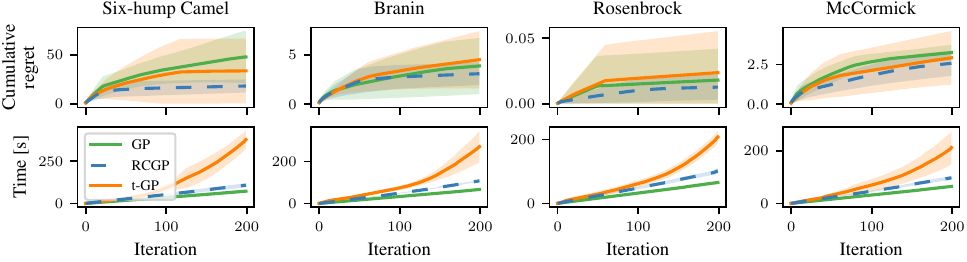}
    \caption{Mean cumulative regret (top) and clock time (bottom) for BO with 
    GP (\textcolor{green_plot}{green}), RCGP (\textcolor{blue_plot}{blue}) and t-GP (\textcolor{orange}{orange}) with 20\% asymmetric outliers and PI acquisition function over 10 realisations.}
    \label{fig:bo_outliers_poi}
\end{figure}
\newpage
\cref{fig:bo_no_outliers}  shows the results without outliers. In terms of cumulative regret, RCGPs match or sometimes outperform GPs. While t-GPs can match this, they take orders of magnitude longer to run. It is notable that even without outliers.
\begin{figure}[h]
    \centering
    \includegraphics{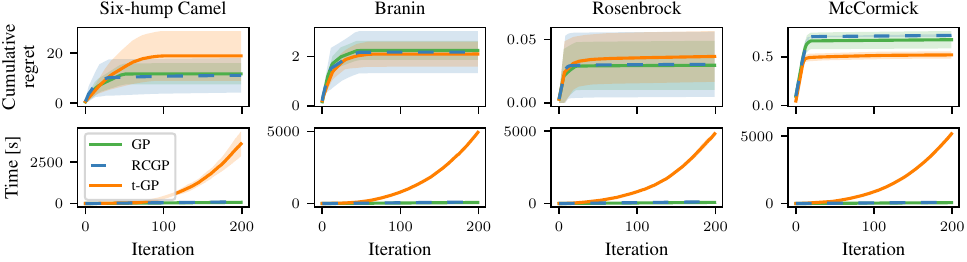}
    \caption{Mean cumulative regret (top) and clock time (bottom) for BO with 
    GP (\textcolor{green_plot}{green}), RCGP (\textcolor{blue_plot}{blue}) and t-GP (\textcolor{orange}{orange}) without outliers over 10 realisations.
    }
    \label{fig:bo_no_outliers}
\end{figure}

\end{document}